\newcommand{\eqrefleft}[1]{(\ref{#1}, left)}
\newcommand{\eqrefright}[1]{(\ref{#1}, right)}
\newcommand{\p}{\varphi}
\newcommand{\mdl}{\models}
\newcommand{\sbs}{\subseteq}
\newcommand{\lp}{{\langle}}
\newcommand{\rp}{{\rangle}}
\newcommand{\impd}{\leftarrow}
\newcommand{\expand}{\textit{expand}}
\newcommand{\accept}{\textit{accept}}
\newcommand{\notaccept}{\textit{notaccept}}
\newcommand{\Expand}{\ensuremath{\textit{Expand}_\preccurlyeq}\xspace}
\newcommand{\Accept}{\textit{Accept}}
\newcommand{\Notaccept}{\textit{NotAccept}}
\newcommand{\expandom}{\ensuremath{\Expand}\xspace}
\newcommand{\acceptom}{{\ensuremath{\Accept}\xspace}}
\newcommand{\notacceptom}{\ensuremath{\Notaccept}\xspace}
\newcommand{\Apic}{\ensuremath{\A_{(\pi, G)}}\xspace}
\newcommand{\Apig}{\ensuremath{\A_{(\pi, G)}}\xspace}
\newcommand{\avec}[1]{\boldsymbol{#1}}
\newcommand{\ovl}[1]{\overline{#1}}
\newcommand{\E}{\ensuremath{\mathcal{E}}\xspace}
\newcommand{\IDBpi}{\ensuremath{\textit{IDB}(\pi)}\xspace}
\newcommand{\EDBpi}{\ensuremath{\textit{EDB}(\pi)}\xspace}
\newcommand{\Tline}[1]{\ensuremath{\textit{Tline}(#1)}\xspace}
\newcommand{\poly}[1]{{\mathrm{poly}\left(#1\right)}}
\newcommand{\dom}{\textup{dom}\xspace}
\newcommand{\Db}{\ensuremath{\mathcal{D}}\xspace}
\newcommand{\height}[1]{\ensuremath{\textup{height}(#1)}\space}
\newcommand{\wid}[1]{\ensuremath{\textup{width}(#1)}\space}
\newcommand{\obj}{\mathrm{obj}}
\newcommand{\tem}{\mathrm{tem}}
\newcommand{\Circ}{\raisebox{0.25ex}{\text{\scriptsize{$\bigcirc$}}}}
\newcommand{\Next}{{\ensuremath{\Circ}}\xspace}
\newcommand{\Prev}{{\ensuremath{\Circ^{-}}}\xspace}
\newcommand{\Diam}{\raisebox{0.1ex}{\text{\small{$\Diamond$}}}}
\newcommand{\D}{\ensuremath{\Diam}\xspace}
\newcommand{\Df}{\ensuremath{\D}\xspace}
\newcommand{\Dp}{\ensuremath{\D\!^{-}}\xspace}
\newcommand{\Drefl}{\ensuremath{\D^*}\xspace}
\newcommand{\monodic}{\ensuremath{\mathop{\ooalign{$\Box$ \cr \kern0.57ex \raisebox{0.2ex}{\scalebox{0.55}{$1$}}}\rule{0pt}{1.5ex} \kern-0.7ex}}\xspace}
\newcommand{\Nextone}{\ensuremath{\mathop{\ooalign{$\Next$ \cr \kern0.57ex \raisebox{0.3ex}{\scalebox{0.55}{$1$}}}\rule{0pt}{1.5ex} \kern-0.7ex}}\xspace}
\newcommand{\Wnextone}{\ensuremath{\mathop{\ooalign{$\Wnext$ \cr \kern0.57ex \raisebox{0.2ex}{\scalebox{0.55}{\textcolor{white}{$1$}}}}\rule{0pt}{1ex} \kern-0.7ex}}\xspace}
\newcommand{\Xnext}{^{\,\smash{\raisebox{0pt}{$\scriptscriptstyle\bigcirc$}}}}
\newcommand{\Xall}{^{\,\scriptscriptstyle\bigcirc {\smash{\raisebox{-1.3pt}{$\diamond$}}}}}
\newcommand{\Xdiamond}{^{\,\smash{\raisebox{-1pt}{$\diamond$}}}}
\newcommand{\dlnd}{\ensuremath{\textsl{datalog}\Xall}\xspace}
\newcommand{\mdlnd}{\ensuremath{\textsl{datalog}_m\Xall}\xspace}
\newcommand{\mdln}{\ensuremath{\textsl{datalog}_m\Xnext}\xspace}
\newcommand{\lmdlnd}{\ensuremath{\textsl{datalog}_{\it lm}\Xall}\xspace}
\newcommand{\lmdln}{\ensuremath{\textsl{datalog}_{\it lm}\Xnext}\xspace}
\newcommand{\lmdld}{\ensuremath{\textsl{datalog}_{\it lm}\Xdiamond}\xspace}
\newcommand{\LTL}{\ensuremath{\textsl{L\!TL}}\xspace}
\newcommand{\FO}{\textup{FO}\xspace}
\newcommand{\FOLess}{\ensuremath{\FO(<)}\xspace}
\newcommand{\MTDLogX}{\mdln}
\newcommand{\ACz}{{\ensuremath{\textsc{AC}^{0}}}\xspace}
\newcommand{\ACCz}{{\ensuremath{\textsc{ACC}^{0}}}\xspace}
\newcommand{\NCo}{{\ensuremath{\textsc{NC}^{1}}}\xspace}
\newcommand{\LogSpace}{\textsc{L}\xspace}
\newcommand{\NLogSpace}{\textsc{NL}\xspace}
\newcommand{\PTime}{\textsc{P}\xspace}
\newcommand{\PSpace}{\textsc{PSpace}\xspace}
\newcommand{\NPSpace}{\textsc{NPSpace}\xspace}
\newcommand{\ExpTime}{\textsc{ExpTime}\xspace}
\newcommand{\ExpSpace}{\textsc{ExpSpace}\xspace}
\newcommand{\Smf}{\ensuremath{\mathfrak{S}}\xspace}
\newcommand{\Emc}{\ensuremath{\mathcal{E}}\xspace}
\newcommand{\Gmc}{\ensuremath{\mathcal{G}}\xspace}
\newcommand{\Omc}{\ensuremath{\mathcal{O}}\xspace}
\newcommand{\Rmc}{\ensuremath{\mathcal{R}}\xspace}
\newcommand{\Smc}{\ensuremath{\mathcal{S}}\xspace}
\newcommand{\Ubf}{\ensuremath{\mathbf{U}}\xspace}
\newcommand{\Z}{\mathbb{Z}}
\newcommand{\N}{\mathbb{N}} 
\newcommand{\Nbb}{\ensuremath{\mathbb{N}}\xspace}
\newcommand{\Zbb}{\ensuremath{\mathbb{Z}}\xspace}
\newcommand{\Uol}{\ensuremath{\overline{U}}\xspace}
\newcommand{\Xol}{\ensuremath{\overline{X}}\xspace}
\newcommand{\Yol}{\ensuremath{\overline{Y}}\xspace}
\newcommand{\Zol}{\ensuremath{\overline{Z}}\xspace}
\newcommand{\A}{\ensuremath{\mathcal{A}}\xspace}
\newcommand{\sgn}{\mathrm{sgn}}
\title{On Deciding the Data Complexity of Answering Linear Monadic
  Datalog Queries with LTL Operators (Extended Version)}
\titlerunning{Deciding the Data Complexity of LTL Monadic Datalog Queries}
\author{Alessandro Artale}{Faculty of Engineering, Free University of Bozen-Bolzano, piazza Domenicani 3, Bozen-Bolzano 39100, Italy}{artale@inf.unibz.it}{https://orcid.org/0000-0002-3852-9351}{}
\author{Anton Gnatenko}{Faculty of Engineering, Free University of Bozen-Bolzano, piazza Domenicani 3, Bozen-Bolzano 39100, Italy}{anton.gnatenko@student.unibz.it}{https://orcid.org/0000-0003-1499-2090}{}
\author{Vladislav Ryzhikov}{Birkbeck, University of London, Malet Street, London WC1E 7HX, UK}{v.ryzhikov@bbk.ac.uk}{https://orcid.org/0000-0002-6847-6465}{}
\author{Michael Zakharyaschev}{Birkbeck, University of London, Malet Street, London WC1E 7HX, UK}{m.zakharyaschev@bbk.ac.uk}{https://orcid.org/0000-0002-2210-5183}{}
\authorrunning{A. Artale, A. Gnatenko, V. Ryzhikov, M. Zakharyaschev} 
\keywords{
	Linear monadic datalog, linear temporal logic, data complexity}
\begin{document}

\maketitle

\begin{abstract}
  Our concern is the data complexity of answering linear monadic datalog queries whose atoms in the rule bodies can be prefixed by operators of linear temporal logic \LTL. We first observe that, for data complexity, answering any connected query with operators \Next/\Prev (at the \mbox{next/previous} moment) is either in \ACz, or in $\ACCz\!\setminus\!\ACz$, or \NCo-complete, or \LogSpace-hard and in \NLogSpace. Then we show that the problem of deciding \LogSpace-hardness of answering such queries is \PSpace-complete, while checking membership in the classes \ACz and \ACCz as well as \NCo-completeness can be done  in \ExpSpace. Finally, we prove that membership in \ACz or in \ACCz, \NCo-completeness, and \LogSpace-hardness are undecidable 
  for queries with operators \Df/\Dp (sometime in the future/past) provided that $\NCo \ne \NLogSpace$ and $\LogSpace \ne \NLogSpace$. 
\end{abstract}


\section{Introduction}\label{sec:introduction}

	We consider monadic datalog queries, in which atoms in the rule bodies can be prefixed by the temporal operators $\Next$/$\Prev$ (at the next/previous moment) and $\Df$/$\Dp$ (sometime in the future/past) of linear temporal logic \LTL~\cite{DBLP:books/cu/Demri2016}. This query language, denoted $\mdlnd$, is intended for querying temporal graph databases and knowledge graphs in scenarios such as virus transmission~\cite{Application-Temporal-Virus-Transmission, Application-Temporal-Covid-Contact-Graph}, transport networks \cite{Application-Temporal-Transport-Networks}, social media~\cite{Application-Temporal-Social-Media}, supply chains~\cite{Application-Temporal-Supply-Chains}, and power grids~\cite{Application-Temporal-Power-Grids}.
	In this setting, data instances are finite sets of ground atoms that are timestamped by the moments of time they happen at. The rules in $\mdlnd$ queries are assumed to hold at all times, with time being implicit in the rules and only accessible via temporal operators. 
	We choose \LTL{} for our formalism rather than, say, more expressive metric temporal logic \textsl{MTL}~\cite{DBLP:journals/rts/Koymans90,DBLP:journals/iandc/AlurH93} because \LTL{} has been a well established query language in temporal databases since the 1990s (see~\cite{TSQL2:94,Toman-PODS96,DBLP:journals/tods/ChomickiTB01,Encyclopedia:2018} and the discussion therein on point versus interval-based query languages), also suitable in the context of temporal knowledge graphs as recently argued in \cite{KG-LTL}.

	\begin{example}\label{ex:theseus} 
		Imagine a PhD student working on a paper while hostel hopping. However, finishing the paper requires staying at the same hostel for at least two consecutive nights. Bus services between hostels, which vary from one day to the next, and hostel vacancies are given by a temporal data instance with atoms of the form $\textit{busService}(a, b, n)$ and $\textit{Vacant}(a, n)$, where $a$, $b$ are hostels and $n \in \mathbb N$ a timestamp (see Fig.~\ref{fig:hostels} for an illustration).
		The following $\mdlnd$ query $(\pi_1, \textit{Success})$ finds pairs $(x, t)$ such that having started hopping at hostel $x$ on day $t$, the student will eventually submit the paper: 	
		\begin{align*}
			\pi_1\colon \quad
			\begin{split}
				&\textit{Success}(X) \impd \textit{Vacant}(X) \land \Next \textit{busService}(X, Y) \wedge \Next \textit{Success}(Y),\\
				&\textit{Success}(X) \impd \textit{Vacant}(X) \land \Next \textit{Vacant}(X).
			\end{split}
		\end{align*}
		It is readily seen that answering this query is \NLogSpace-complete for data complexity. If, however, we drop the next-time operator $\Next$ from $\pi_1$, it will become equivalent to $\textit{Vacant}(X)$.
		The next query~$(\pi_2, \textit{Promising})$ simply looks for pairs $(x, t)$ with $x$ having vacancies for two consecutive nights some time later than $t$:
		\begin{align*}
			\pi_2\colon \quad
			\begin{split}
				&\textit{Promising}(X) \impd \Df \textit{Vacant42Nights}(X),\\
				&\textit{Vacant42Nights}(X) \impd \textit{Vacant}(X) \land \Next \textit{Vacant}(X).
			\end{split}
		\end{align*}
		This $\mdlnd$ query can be equivalently expressed as the two-sorted first-order formula
		\begin{equation*}
			\exists t'\, \big( (t < t') \wedge \textit{Vacant}(x, t') \wedge \textit{Vacant}(x, t' + 1) \big),
		\end{equation*}
		where $x$ ranges over objects (hostels) and $t, t'$ over time points ordered by $<$. Formulas in such a two-sorted first-order logic, denoted \FOLess, can be evaluated over finite data instances in \ACz{} for data complexity~\cite{Immerman99}. \hfill $\dashv$
		
		\begin{figure}
			\centering
			\begin{subfigure}{.5\textwidth}
				\centering
				\includegraphics[height=4cm, keepaspectratio]{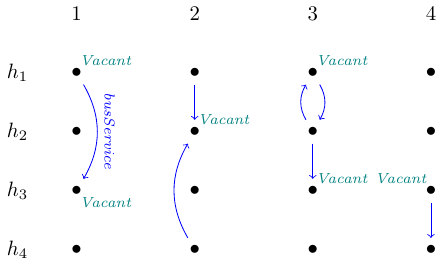}
				\caption{Hostels $h_1, h_2, h_3, h_4$ on days $1, 2, 3, 4$.}
				\label{fig:hostels:data}
			\end{subfigure}%
			\begin{subfigure}{.5\textwidth}
				\centering
				\includegraphics[height=4cm, keepaspectratio]{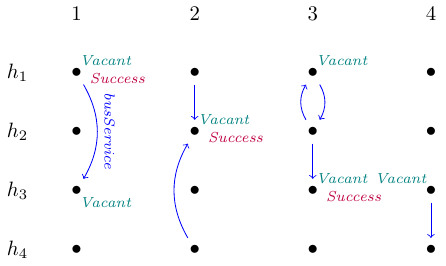}
				\caption{Predicate \textit{Success} inferred by the program $\pi_1$.}
				\label{fig:hostels:inference}
			\end{subfigure}
			\caption{Illustrations for the query $(\pi_1, \textit{Success})$.}
			\label{fig:hostels}
		\end{figure}
	\end{example}
	
	Our main concern is the classical problem of deciding whether a given temporal monadic datalog query is equivalent to a first-order query (over any data instance). In the standard, atemporal database theory, this problem, known as predicate boundedness, has been investigated since the mid 1980s with the aim of optimising and parallelising datalog programs~\cite{DBLP:books/el/leeuwen90/Kanellakis90,DBLP:journals/csur/DantsinEGV01}. Thus, predicate boundedness was shown to be undecidable for binary datalog queries~\cite{DBLP:journals/jlp/HillebrandKMV95} and 2\ExpTime-complete for monadic ones (even with a single recursive rule)~\cite{Cosmadakis-Monadic-DLog-Boundedness-Decidable,DBLP:conf/lics/BenediktCCB15,DBLP:conf/pods/KikotKPZ21}.
	
	Datalog boundedness is closely related to the more general rewritability problem in the ontology-based data access paradigm~\cite{DBLP:journals/jar/CalvaneseGLLR07,DBLP:journals/jair/ArtaleCKZ09}, which brought to light wider classes of ontology-mediated queries (OMQs) and ultimately aimed to decide the data complexity of answering any given OMQ and, thereby, the optimal database engine needed to execute that OMQ. 
	Answering OMQs given in propositional linear temporal logic \LTL{} is either in $\ACz$, or in $\ACCz\setminus\ACz$, or $\NCo$-complete for data complexity~\cite{Straubing94}, the classes well known from the circuit complexity theory for regular languages. 
	For each of these three cases, deciding whether a given \LTL-query falls into it is $\ExpSpace{}$-complete, even if we restrict the language to temporal Horn formulas and atomic queries~\cite{LTL-Rewritability-Checking}. 
	The data complexity of answering atemporal monadic datalog queries comes from four complexity classes $\ACz \subsetneqq \LogSpace \subseteq \NLogSpace \subseteq \PTime$~\cite{DBLP:journals/csur/DantsinEGV01,Cosmadakis-Monadic-DLog-Boundedness-Decidable}. 
	
	Our 2D query language \mdlnd is a combination of datalog and \LTL. It can be seen as the monadic fragment, without negation and aggregate functions, of $\textsc{Dedalus}_0$, a language for reasoning about distributed systems that evolve in time \cite{Dedalus}. It is also close in spirit to temporal deductive databases, TDDs \cite{TDD-1988, TDD-1990}, extending their monadic fragment with the eventuality operator \Df.
	%
	%
	The main intriguing question we would like to answer in this paper is whether deciding membership of 2D queries in, say, $\ACz$ can be substantially harder than deciding membership in $\ACz$ of the corresponding 1D monadic datalog and \LTL{} queries.

	With this in mind, we focus on the sublanguage \lmdlnd of \mdlnd that consists of
		linear queries, that is, those that have at most one IDB (intensional, recursively definable) predicate in each rule body. 
		While the full language inherits from TDDs a \PSpace-complete query answering problem (for data complexity), we prove that linear queries can be answered in \NLogSpace. By $\lmdln$ and $\lmdld$ we denote the fragments of \lmdlnd that only admit the \Next/\Prev and \Df/\Dp operators, respectively. All of our queries are assumed to be connected in the sense that the graph induced by the body of each rule is connected. These fragments retain practical interest: as argued in~\cite{Cosmadakis-Monadic-DLog-Boundedness-Decidable}, atemporal datalog programs used in practice tend to be linear and connected. For example, SQL:1999 explicitly supports linear recursion~\cite{SQL-ISO9075-2-1999}, which together with connectedness is a common constraint in the context of querying graph databases and knowledge graphs \cite{Reutter-Recursion-in-SPARQL, Urzua2019LinearRecursion}, where the focus is on path queries \cite{Cypher, SPARQL11}. 
	
	It is known that deciding whether a linear monadic datalog query can be answered in $\ACz$ is \PSpace-complete~\cite{Cosmadakis-Monadic-DLog-Boundedness-Decidable,DBLP:journals/ijfcs/Meyden00} (without the monadicity restriction, the problem is undecidable \cite{DBLP:journals/jlp/HillebrandKMV95}). The same problem for the propositional \LTL{} fragments of $\lmdln$ and $\lmdlnd$ is also \PSpace-complete~\cite{LTL-Rewritability-Checking}.

	Our main results in this paper are as follows:
	\begin{itemize}
		\item Answering \lmdlnd queries is \NLogSpace-complete for data complexity.
		
		\item It is undecidable whether a given \lmdld-query can be answered in \ACz, \ACCz, or \NCo (if $\NCo \neq \NLogSpace$); it is undecidable whether such a query is \LogSpace-hard (if $\LogSpace \neq \NLogSpace$).
		
		\item Answering any connected $\lmdln$-query is either in $\ACz$ or in $\ACCz \setminus \ACz$, or $\NCo$-complete, or $\LogSpace$-hard---and anyway it is in $\NLogSpace$.
		\item It is \PSpace-complete to decide whether a connected $\lmdln$-query is \LogSpace-hard; checking whether it is in \ACz, or in $\ACCz\setminus\ACz$, or is \NCo-complete can be done in \ExpSpace.
	\end{itemize}
	(Note that dropping the past-time operators \Prev and \Dp from the languages has no impact on these complexity results.)
	Thus, the temporal operators \Next/\Prev and \Df/\Dp exhibit drastically different types of interaction between the object and temporal dimensions. To illustrate the reason for this phenomenon, consider first the following $\lmdln$-program: 
	\begin{align}
		&G(X) \impd A(X) \wedge \Next R(X, Y) \wedge \Next D(Y), \label{rule:vertical}\\
		&D(X) \impd \Next D(X), \label{rule:horizontal:recursive}\\
		&D(X) \impd B(X). \label{rule:horizontal:fin}
	\end{align}
	Suppose a data instance consists of timestamped atoms $A(a, 0)$, $R(a, b, 1)$, $B(b, 5)$. We obtain $G(a, 0)$ by first applying rule~\eqref{rule:horizontal:fin} to infer $D(b, 5)$, then  rule~\eqref{rule:horizontal:recursive} to infer $D(b, 4), D(b, 3), D(b, 2)$, and $D(b, 1)$, and finally rule~\eqref{rule:vertical} to obtain $G(a, 0)$. Rules~\eqref{rule:horizontal:recursive} and~\eqref{rule:horizontal:fin} are applied along the timeline of a single object, $b$, while the final application passes from one object, $b$, to another, $a$. To do so, we check whether a certain condition holds for the joint timeline of $a$ and $b$, namely, that they are connected by $R$ at time 1. However, if we are limited to the operators \Next/\Prev, the number of steps that we can investigate along such a joint timeline is bounded by the maximum number of nested \Next/\Prev in the program. Therefore, there is little interaction between the two phases of inference that explore the object and the temporal domains. In contrast, rules with \Df/\Dp can inspect both dimensions simultaneously as, for example, the rule  
	\begin{align}\label{rule:both-dimensions}
		&G(X) \impd \Df R(X, Y) \wedge D(Y).
	\end{align}
	In this case, inferring $G(a, 0)$ requires checking the existence of an object $b$ satisfying $D(b, 0)$ and $R(a, b, \ell)$ at some \emph{arbitrarily distant} moment $\ell$ in the future. Given that our programs are monadic, the predicate $\Df R(X, Y)$ 
	cannot be expressed using operators \Next/\Prev only.
	
	Our positive results are proved by generalising the automata-theoretic approach of~\cite{Cosmadakis-Monadic-DLog-Boundedness-Decidable}. As a by-product, we obtain a method to decompose every connected $\lmdln$-query $(\pi, G)$ into a plain datalog part~$(\pi_d, G)$ and a plain \LTL part~$(\pi_t, G)$, which are, however, 
	substantially larger than $(\pi, G)$, so that the data complexity of answering $(\pi, G)$ equals the maximum of the respective data complexities of $(\pi_d, G)$ and $(\pi_t, G)$. This reinforces the `weakness of interaction' between the relational and the temporal parts of the query when the latter is limited to operators \Next/\Prev. We also provide some evidence that, in contrast to the atemporal case, the automata-theoretic approach cannot be generalised to the case of disconnected queries.
	The undecidability of the decision problem for \lmdld-queries is proved by a reduction of the halting problem for Minsky machines with two counters~\cite{Minsky}.
	
	
	The paper is organised as follows.
	In Section~\ref{sec:preliminaries}, we give formal definitions of data instances and queries, and prove that every temporal monadic datalog query can be answered in \PTime, and in \NLogSpace if it is linear.
	In Section~\ref{sec:diamond-undecidability}, we show that checking whether a given query with operator \Df or \Dp has data complexity lower than \NLogSpace is undecidable.
	Section~\ref{sec:automata} considers $\lmdln$-queries by presenting a generalisation of the automata-theoretic approach of~\cite{Cosmadakis-Monadic-DLog-Boundedness-Decidable}, which is then used
	in Section~\ref{sec:next} to provide the decidability results. We conclude with a discussion of future work and our final remarks in Section~\ref{sec:conclusions}.	



\section{Preliminaries}\label{sec:preliminaries}

A \textit{relational schema} $\Sigma$ is a finite set of \textit{relation symbols} $R$ with associated arities $m \geq 0$. A \textit{database} $D$ over a schema $\Sigma$ is a set of ground atoms $R(d_1, \dots, d_m)$, $R \in \Sigma$, $m$ is the arity of $R$. We call $d_i$, $1 \leq i \leq m$, the \textit{domain objects} or simply objects. We denote by $\Delta_D$ the set of objects occurring in $D$. We denote by $|D|$ the number of atoms in $D$.
We denote by $[a, b]$ the set of integers $\{m \mid a \leqslant m \leqslant b\}$, where $a, b \in \Z$.
A \textit{temporal database} $\Db$ over a schema $\Sigma$ is a finite sequence $\langle D_{l}, D_{l+1} \dots, D_{r-1}, D_{r}\rangle$ of databases over this schema for some $l < r$, $l, r \in \Z$.
Each database $D_i, l \leqslant i \leqslant r,$ is called the $i$'th \textit{slice} of $\Db$ and $i$ is called a \textit{timestamp}. We denote $[l, r]$ by $\tem(\Db)$.
The \textit{size} of $\Db$, denoted by $|\Db|$, is the maximum between $|\tem(\Db)|$ and $\max \{|D_l|, \dots, |D_r| \}$. The domain of the temporal database $\Db$ is $\bigcup_{l \leq i \leq r} \Delta_{D_i}$ and is denoted by $\Delta_{\Db}$.
A homomorphism from $\Db$ as above to $\Db' = \langle D_{l'}, D_{l'+1} \dots, D_{r'-1}, D_{r'}\rangle$ is a function $h$ that maps $\Delta_\Db \cup [l,r]$ to $\Delta_{\Db'} \cup [l',r']$ so that $R(d_1, \dots, d_m) \in D_\ell$ if and only if $R(h(d_1), \dots, h(d_m)) \in D'_{h(\ell)}$.

We will deal with \textit{temporal conjunctive queries} (temporal CQs) that are formulas of the form $Q(\Xol) = \exists \overline{U} \p(\overline{X}, \overline{U})$, where $\Xol, \Uol$ are tuples of variables and $\p$, the \textit{body} of the query, is defined by the following BNF:
\begin{equation}\label{bnf:preliminaries:tcq}
		\p \Coloneqq
		R(Z_1, \dots, Z_m)
		\mid
		(\p \wedge \p)
		\mid
		\Omc \p
\end{equation}
where $R \in \Sigma$ and $Z_1, \dots, Z_m$ are variables of $\Xol \cup \Uol$, and $\Omc$ is any of $\Next, \Prev, \Df$ and $\Dp$ (the operators \Next/\Prev mean `at the next/previous moment' and \Df/\Dp `sometime in the future/past'). For brevity, we will use the notation $\Omc^n$, $\Omc \in \{\Next, \D\}$, for a sequence of $n$ symbols $\Omc$ if $n > 0$, of $|n|$ symbols $\Omc^{-}\in \{\Prev, \Dp\}$ if $n < 0$, and an empty sequence if $n = 0$. We call \Xol the \textit{answer variables} of $Q$.
To provide the semantics, we define $\Db, \ell \mdl \varphi(d_1, \dots, d_m)$ for $\ell \in \Z$ and $d_1, \dots, d_m \in \Delta_\Db$ as follows:
\begin{align}
    &\Db, \ell \mdl  R(d_1, \dots, d_m) \iff \ell \in [l,r] \text{ and } R(d_1, \dots, d_m) \in D_\ell\\
	&\Db, \ell \mdl \p_1 \wedge \p_2 \iff \Db, \ell \mdl \p_1 \text{ and } \Db, \ell \mdl \p_2\label{eq:preliminaries:models-relation:conjunction} \\
	&\Db, \ell \mdl \Next \p \iff \Db, {\ell + 1} \mdl \p \label{eq:preliminaries:models-relation:next} \\
	&\Db, \ell \mdl \Df \p \iff \Db, {\ell'} \mdl \p \text{ for some } \ell' > \ell \label{eq:preliminaries:models-relation:diamond}
\end{align}
and symmetrically for $\Prev$ and $\Dp$.
Given a temporal database $\Db$, a timestamp $\ell \in \Z$, and a query $Q(\Xol) = \exists \overline{U} \p(\overline{X}, \overline{U})$, we say that $\Db, \ell \mdl Q(d_1, \dots, d_k)$ if there exist $\delta_1, \dots, \delta_s \in \Delta_\Db$ such that $\Db, \ell \mdl \p(d_1, \dots, d_k, \delta_1, \dots, \delta_s)$, where $k = |\Xol|, s = |\Uol|$.

The problem of \textit{answering} a temporal CQ $Q$ is to check, given $\Db$, $\ell \in \tem(\Db)$, and $\bar d = \langle d_1, \dots, d_k\rangle$, whether $\Db, \ell \mdl Q(\bar d)$. Answering temporal CQs is not harder than that for non-temporal CQs. Indeed, we show that any $Q$ is \FOLess-rewritable in the sense that there exists an \FOLess-formula $\psi(\Xol, t)$ such that for all $\ell$ and $\bar d$ as above $\Db, \ell \mdl Q(\bar d)$  whenever $\psi(\bar d, \ell)$ is true in the two-sorted first-order structure $\mathfrak S_\Db$, whose domain is $\Delta_\Db \cup \tem(\Db)$, and where $R(\bar d, \ell)$ is true whenever $\Db, \ell \mdl R(\bar d)$ and $(\ell < \ell')$ is true whenever $\ell < \ell'$ (see~\cite{LTL-Rewritability} for details). It follows by~\cite{Immerman99} that the problem of answering a temporal CQ $Q$ is in \ACz.
	
	We outline the construction of the rewriting. Let $Q(\Xol) = \exists \overline{U} \p(\overline{X}, \overline{U})$.  The main issue with the construction is that temporal subformulas of $\varphi$, say $\D \varkappa$, may be true for $\bar d$ at $\ell \in \tem(\Db)$, because $\Db, \ell' \models \varkappa(\bar d)$ for $\ell' \not \in \tem(\Db)$. Had that not been the case, we could construct the rewriting for $Q$ straightforwardly by induction of $\varphi$. To overcome this, let $N$ be the number of temporal operators in $\varphi$. We use a property that for all tuples $\bar d$ of objects from $\Delta_\Db$ and subformulas $\varkappa$ of $\varphi$, we have
	\begin{equation}\label{eq:loop}
		\begin{split}
			&\Db, r + N + 1 \mdl \varkappa(\bar d) \iff \Db, \ell \mdl \varkappa(\bar d) \text{ for all } \ell > r + N\\
			&\Db, l - N - 1 \mdl \varkappa(\bar d) \iff \Db, \ell \mdl \varkappa(\bar d) \text{ for all } \ell < l - N
		\end{split}
	\end{equation}
	%
	%
	Thus, for any subformula $\varkappa(\Zol)$ of $\varphi$, we construct, by induction, the formulas $\psi_\varkappa(\Zol, t)$  and $\psi_\varkappa^{i}(\Zol)$ for $i \in [-N - 1, \dots, -1] \cup [1, \dots, N + 1]$,
	so that for any \Db, $\tem(\Db) = [l, r]$, and any objects $\bar d \in \Delta_\Db^{|\Zol|}$,
\begin{align*}
&\Db, \ell \models \varkappa(\bar d) &&\iff &&\Smf_\Db \models \psi_\varkappa(\bar d, \ell) &&\text{ for }\ell \in [l,r],\\
&\Db, (r + i) \models \varkappa(\bar d) &&\iff &&\Smf_\Db \models \psi_\varkappa^i(\bar d) &&\text{ for }1 \leqslant i \leqslant N + 1,\\
&\Db, (l + i) \models \varkappa(\bar d) &&\iff &&\Smf_\Db \models \psi_\varkappa^i(\bar d) &&\text{ for }-N - 1 \leqslant i \leqslant -1.
\end{align*}
For the base case, we set $\psi_R(\Zol, t) = R(\Zol, t)$ and $\psi_R^i(\Zol) = \bot$. For an induction step, e.g., $\psi_{\Next \varkappa}^{-1}(\Zol) = \psi_{\varkappa}(\Zol, \min)$,
	$\psi_{\Next \varkappa}^{N + 1}(\Zol) = \psi^{N + 1}_{\varkappa}(\Zol)$, $\psi_{\Next \varkappa}^i(\Zol) = \psi_{\varkappa}^{i+1}(\Zol)$ for all other $i$, and finally  $\psi_{\Next \varkappa}(\Zol, t) = \exists t' ((t' = t + 1) \land \psi_{\varkappa}(\Zol, t')) \lor ((t = \max) \land \psi_{\varkappa}^1(\Zol))$.
	Here, $\min$ and $\max$ are defined in $\FOLess$ as, respectively, $<$-minimal and $<$-maximal elements in $\tem(\Db)$ ($(t' = t + 1)$ is $\FOLess$-definable as well).
	The required rewriting of $Q$ is then the formula $\exists \Uol \psi_\varphi(\Xol, \Uol, t)$.
\begin{restatable}{proposition}{PropACz}\label{prop:preliminaries:tcq-complexity}
	For a temporal CQ $Q(X_1, \dots, X_k)$, checking $\Db, \ell \mdl Q(d_1, \dots, d_k)$ is in \ACz for data complexity.
\end{restatable}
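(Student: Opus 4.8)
The plan is to turn the $\FOLess$-rewriting sketched above into a rigorous construction and then invoke the classical fact that a fixed first-order formula over a linear order is evaluated on a finite structure in $\ACz$. Throughout, fix $Q(\Xol)=\exists\Uol\,\varphi(\Xol,\Uol)$ and let $N$ be the number of temporal operators in $\varphi$.

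The first and main step is to establish the stabilisation property~\eqref{eq:loop}, for which I would prove by induction on the structure of a subformula $\varkappa$ the sharper claim that if $\varkappa$ contains $n$ temporal operators then, for every tuple $\bar d$ over $\Delta_\Db$, the truth value of $\varkappa(\bar d)$ is constant on $\{\ell\mid\ell\ge r+n+1\}$ and on $\{\ell\mid\ell\le l-n-1\}$; instantiating with $n\le N$ then yields~\eqref{eq:loop}. The base case holds because an atom is false at every point outside $[l,r]$, and conjunction is immediate. For $\Next$ and $\Prev$ the point of evaluation shifts by one, which the extra operator absorbs; for $\Df$ and $\Dp$, once the relevant tail of the timeline carries a constant value of the argument, the existential over that tail is constant too, so again one operator suffices to pay for the shift. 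This justifies that the finite family of boundary formulas $\psi^i_\varkappa$, $i\in[-N-1,-1]\cup[1,N+1]$, is well defined and captures the entire behaviour of $\varkappa$ outside $\tem(\Db)$.

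Next I would carry out the mutual induction defining the $\FOLess$-formulas $\psi_\varkappa(\Zol,t)$ and $\psi^i_\varkappa(\Zol)$ and verifying the three equivalences displayed before the statement. The atomic clause ($\psi_R(\Zol,t)=R(\Zol,t)$, $\psi^i_R=\bot$), the conjunction clause, and the $\Next$/$\Prev$ clauses are exactly as in the excerpt, relying on the $\FOLess$-definability of $t'=t\pm 1$, $\min$ and $\max$ on $\tem(\Db)$. For $\Df$ I would set $\psi_{\Df\varkappa}(\Zol,t)=\exists t'\,\bigl((t<t')\wedge\psi_\varkappa(\Zol,t')\bigr)\vee\bigvee_{1\le i\le N+1}\psi^i_\varkappa(\Zol)$, where the first disjunct covers a witness inside $\tem(\Db)$ and the second one a witness beyond the right endpoint, and $\psi^i_{\Df\varkappa}(\Zol)$ as the disjunction of the $\psi^j_\varkappa(\Zol)$ with $j>i$, always including $\psi^{N+1}_\varkappa(\Zol)$ to cover the stabilised tail; the clauses for $\Dp$ are symmetric. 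Each of the required equivalences then follows by a routine unfolding of the semantics~\eqref{eq:preliminaries:models-relation:conjunction}--\eqref{eq:preliminaries:models-relation:diamond} together with the stabilisation property. The rewriting of $Q$ is $\rho_Q:=\exists\Uol\,\psi_\varphi(\Xol,\Uol,t)$, and it satisfies $\Db,\ell\models Q(\bar d)\iff\Smf_\Db\models\rho_Q(\bar d,\ell)$ for every $\ell\in\tem(\Db)$.

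Finally, $\rho_Q$ is a single $\FOLess$-formula whose size depends only on $Q$ and not on $\Db$, while $\Smf_\Db$ is trivially (in fact $\ACz$-)computable from $\Db$, its only non-relational ingredient being the order on timestamps. By~\cite{Immerman99}, evaluating a fixed first-order formula with a built-in linear order over a finite structure is in $\ACz$; precomposing with $\Db\mapsto\Smf_\Db$ gives that checking $\Db,\ell\models Q(d_1,\dots,d_k)$ is in $\ACz$ for data complexity. I expect the bulk of the work to be the stabilisation lemma together with the $\Df/\Dp$ clauses: one has to set up the induction so that each temporal operator raises the stabilisation threshold by at most one, and then make the diamond rewriting quantify over witness time points \emph{inside} $\tem(\Db)$ while simultaneously accounting, through the boundary formulas $\psi^i_\varkappa$, for witnesses lying past the endpoints of the timeline.
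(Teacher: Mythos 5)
Your proposal is correct and follows essentially the same route as the paper's proof: the same stabilisation lemma (the paper proves it for the global operator count $N$, while your per-subformula refinement is a harmless strengthening), the same mutual induction defining $\psi_\varkappa(\Zol,t)$ together with the boundary formulas $\psi^i_\varkappa(\Zol)$, and the same final appeal to \cite{Immerman99}. The only detail to watch is that for $i<0$ your clause for $\psi^i_{\Df\varkappa}$ must also admit a witness \emph{inside} $\tem(\Db)$, i.e.\ include the disjunct $\exists t\,\bigl((\min\le t\le\max)\wedge\psi_\varkappa(\Zol,t)\bigr)$, since from a point to the left of the database the future contains all of $[l,r]$.
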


In the non-temporal setting, a body of a CQ (a conjunction of atoms), can be seen as a database (a set of atoms). We have a similar correspondence in the temporal setting, for queries without operators \Df and \Dp.
Indeed, observe that $\Next^a(\p_1 \wedge \p_2) \equiv \Next^a \p_1 \wedge \Next^a \p_2$ and $\Next \Prev \p \equiv \Prev\Next\p \equiv \p$. Hence we can assume that any temporal CQ body is  a conjunction of temporalised atoms of the form $\Next^k R(Z_1, \dots, Z_m)$. Given a temporal CQ $Q$ of this form, let $l$ be the least and $r$ the greatest number such that $\Next^{l}R(\Zol)$ and $\Next^r R'(\Zol')$ appear in $Q$, for some $R, R', \Zol$ and $\Zol'$.
Let $\Db_Q$ be a temporal database whose objects are the variables of $Q$, and $\tem(\Db)$ equals $[l, r]$ if $0 \in [l, r]$, $[0, r]$ if $0 < l$, and $[l, 0]$ if $r < 0$. Furthermore, let $R(\Zol) \in D_\ell$ whenever $\Next^\ell R(\Zol)$ is in $Q$, $\ell \in \tem(\Db)$. Then we can, just as in the non-temporal case, characterise the relation $\mdl$ in terms of homomorphisms:
\begin{restatable}{lemma}{LemmaHomo}\label{lm:preliminaries:tcq-next-homomorphisms}
	For any temporal CQ $Q(X_1, \dots, X_k)$ without \Df and \Dp, $\Db, \ell \mdl Q(d_1, \dots, d_k)$ if and only if there is a homomorphism $h$ from $\Db_Q$ to $\Db$ such that $h(X_i) = d_i, 1 \leqslant i \leqslant k$, and $h(0) = \ell$.
\end{restatable}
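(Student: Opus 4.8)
The plan is to prove this exactly as one proves the classical equivalence between satisfaction of a conjunctive query and the existence of a homomorphism, the only new ingredient being the bookkeeping needed for the temporal offsets. First I would invoke the normalisation recorded just above the lemma: each conjunct of the body $\p$ of $Q$ may be assumed to have the form $\Next^{k}R(\Zol)$. Recall, moreover, that by construction the objects of $\Db_Q$ are precisely the variables of $Q$; the slice $D^{\Db_Q}_{k}$ contains $R(\Zol)$ if and only if $\Next^{k}R(\Zol)$ is a conjunct of $\p$; and $\tem(\Db_Q)$ is the least integer interval that contains $0$ together with every offset $k$ appearing in a conjunct $\Next^{k}R(\Zol)$ of $Q$.

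The heart of the argument is the following correspondence. Fix the timestamp $\ell$, which belongs to $\tem(\Db)$ since membership in $\tem(\Db)$ is part of the input of the answering problem. Given a variable assignment $\assign\colon\var(Q)\to\Delta_\Db$ with $\assign(X_i)=d_i$ for all $i$, let $h_{\assign}$ be the map that agrees with $\assign$ on the objects of $\Db_Q$ and sends each timestamp $j\in\tem(\Db_Q)$ to $\ell+j$; conversely, every homomorphism $h\colon\Db_Q\to\Db$ with $h(0)=\ell$ is of this form (taking $\assign$ to be the restriction of $h$ to $\var(Q)$), since a homomorphism of temporal databases acts on the timeline as a uniform shift and the condition $h(0)=\ell$ fixes that shift to be $j\mapsto\ell+j$. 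Thus it suffices to establish, for every such $\assign$, that
\begin{equation*}
  \Db,\ell\mdl\p(\assign)\quad\Longleftrightarrow\quad h_{\assign}\text{ is a homomorphism from }\Db_Q\text{ to }\Db .
\end{equation*}
Unfolding the semantics of $\wedge$ and of $\Next^{k}$, the left-hand side asserts precisely that for every conjunct $\Next^{k}R(\Zol)$ of $\p$ one has $\ell+k\in\tem(\Db)$ and $R(\assign\Zol)\in D^{\Db}_{\ell+k}$; by the definition of $\Db_Q$ and of $h_{\assign}$ this is exactly the homomorphism condition for the map $h_{\assign}$, namely that $R(\Zol)\in D^{\Db_Q}_{k}$ implies $R(h_{\assign}\Zol)\in D^{\Db}_{h_{\assign}(k)}$ (that $h_{\assign}$ is well defined as a map, i.e.\ sends $\tem(\Db_Q)$ into $\tem(\Db)$, is checked below). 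Taking the existential closure over $\assign$ on the left, which by definition is $\Db,\ell\mdl Q(\bar d)$, and over $h$ on the right then yields the lemma.

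Two small points need care, and they account for essentially all of the work. First, $h_{\assign}$ must be a well-defined map into $\Delta_\Db\cup\tem(\Db)$, i.e.\ $\ell+\tem(\Db_Q)\subseteq\tem(\Db)$; this holds because $\ell\in\tem(\Db)$, because the extreme offsets $l,r$ of $Q$ are witnessed by genuine conjuncts of $\p$ (so their shifts $\ell+l,\ell+r$ lie in $\tem(\Db)$ whenever $\Db,\ell\mdl\p(\assign)$, while in the converse direction $h$ sends them into $\tem(\Db)$ by the definition of a homomorphism), and because $\tem(\Db)$ is an interval and hence contains everything lying between $\ell$, $\ell+l$ and $\ell+r$ --- in particular the images of the timestamps of $\Db_Q$ that occur in no conjunct. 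Second, in the direction that recovers an assignment from a given homomorphism $h$, one needs $h(k)=\ell+k$ for the offsets $k$ actually occurring in $Q$, and not merely $h(0)=\ell$; this is exactly where the fact that a homomorphism of temporal databases preserves the order (equivalently, successor) structure of the timeline is used. I expect pinning down this last point --- that a homomorphism is forced to translate the whole timeline uniformly --- to be the only place where one must argue rather than merely unfold the definitions; everything else is mechanical.
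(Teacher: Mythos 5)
The paper gives no proof of this lemma (it is stated as a ``direct generalisation'' of the atemporal homomorphism theorem and left to the reader), so there is nothing to compare against; your plan --- normalise to conjuncts $\Next^{k}R(\Zol)$, match assignments with homomorphisms, and unfold the semantics --- is exactly the routine argument the authors have in mind, and the bookkeeping about $\ell+\tem(\Db_Q)\subseteq\tem(\Db)$ via the interval property is handled correctly.

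There is, however, one genuine gap, and it sits precisely at the point you yourself single out but then defer: the claim that ``a homomorphism of temporal databases acts on the timeline as a uniform shift'' so that $h(0)=\ell$ forces $h(k)=\ell+k$. Under the paper's stated definition, a homomorphism is \emph{only} required to satisfy the atom condition $R(\bar d)\in D_\ell \Leftrightarrow R(h(\bar d))\in D'_{h(\ell)}$; nothing constrains how $h$ permutes timestamps. Concretely, for $Q(X)=A(X)\wedge\Next B(X)$ and $\Db$ with $\tem(\Db)=[0,5]$, $A(d)\in D_0$, $B(d)\in D_5$ and nothing else, the map $h(X)=d$, $h(0)=0$, $h(1)=5$ satisfies the atom condition, yet $\Db,0\not\mdl Q(d)$. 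So the uniform-shift property is not ``forced'' and cannot be argued from the definition as given; it has to be \emph{added} to the definition of homomorphism (e.g.\ by requiring $h$ to preserve the successor relation on timestamps, equivalently to act as $j\mapsto j+c$ on $\tem(\Db_Q)$), after which your direction from homomorphisms to satisfaction goes through. A second, smaller mismatch: the paper's homomorphism condition is a biconditional, whereas your verification of the forward direction only establishes the ``$R(\Zol)\in D^{\Db_Q}_k$ implies $R(h_\assign\Zol)\in D^{\Db}_{\ell+k}$'' half; for the lemma to hold in both directions the intended reading must be the one-way preservation condition, and you should say explicitly that this is the notion of homomorphism you are using. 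With those two definitional points made explicit, the rest of your argument is complete.
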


\subsection{Temporal Datalog}\label{sec:preliminaries:mtdlog}

We define a temporalised version of datalog to be able to use recursion in querying temporal databases. We call this language \textit{temporal datalog}, or \dlnd. A \textit{rule} of \dlnd over a schema $\Sigma$ has the form:
\begin{equation}\label{def:tdlog-rule}
	C(\ovl{X}) \impd \Omc^* R_1(\ovl{U}_1) \wedge \dots \wedge \Omc^* R_s(\ovl{U}_s)
\end{equation}
where $R_i$ and $C$ are relation symbols over $\Sigma$ and $\Omc^*$ is an arbitrary sequence of temporal operators $\Next, \Prev, \Df$ and $\Dp$. The part of the rule to the left of the arrow is called its \textit{head} and the right-hand side---its \textit{body}. All variables from the head must appear in the body.

A \textit{program} is a finite set of rules. The relations that appear in rule heads constitute its IDB schema, \IDBpi, while the rest form the EDB schema, \EDBpi. A rule is linear if its body contains at most one IDB atom and \textit{monadic} if the arity of its head is 1.
A program $\pi$ is linear (monadic) if so are all its rules. We say that the program is in plain datalog if it does not use the temporal operators and in plain \LTL if all its relations have arity 1 and every rule uses just one variable.
\textit{Recursive rules} are those that contain IDB atoms in their bodies, other rules are called \textit{initialisation rules}. The \textit{arity} of a program is the maximal arity of its IDB atoms.

Our results are all about \textit{connected} programs. Namely, define the \textit{Gaifman graph} of a temporal CQ to be a graph whose nodes are the variables and where two variables are connected by an edge if they appear in the same atom. A rule body is connected if so is its Gaifman graph, and a program is connected when all rules are connected.
The \textit{size} of a program $\pi$, denoted $|\pi|$, is the number of symbols needed to write it down, where every relation symbol $R \in \EDBpi \cup \IDBpi$ is counted as one symbol, and a sequence of operators the form $\Omc^k$ is counted as $|k|$ symbols.

When a program $\pi$ is fixed, we assume that all temporal databases that we work with are defined over \EDBpi. So let $\pi$ be a program and $\Db$ a temporal database. An \textit{enrichment} of $\Db$ is an (infinite) temporal database $\Emc = \langle E_\ell \rangle_{\ell \in \Zbb}$ over the schema $\EDBpi \cup \IDBpi$ such that $\Delta_\E = \Delta_\Db$ and for any $R \in \EDBpi$ and any $\ell \in \Zbb$, $R(d_1, \dots, d_m) \in E_\ell$ if and only if $R(d_1, \dots, d_m) \in D_\ell$. Thus, the only EDB atoms in \E are those of \Db, but \E `enriches' \Db with various IDB atoms at different points of time. We say that $\E$ is a \textit{model} of $\pi$ and $\Db$ if $(i)$ $\E$ is an enrichment of $\Db$; and $(ii)$ for any rule $C(\ovl{X}) \impd \psi(\Xol, \Uol)$ of $\pi$, $\E \models C(\ovl{X}) \impd \psi(\Xol, \Uol)$, i.e., for all $\ell\in\Zbb$ and any tuples $\bar d \in \Delta_\E^{|\Xol|}, \bar \delta \in \Delta_\E^{|\Uol|}$,  $\E,\ell\models \psi(\bar d, \bar \delta)$ implies $\E,\ell\models C(\bar d)$. We write $\Db,\pi,\ell\models C(\bar d)$ if for every model $\E$ of $\pi$ and \Db it follows that $\E, \ell\models C(\bar d)$.

A \dlnd \textit{query} is a pair $(\pi, G)$, where $\pi$ is a \dlnd program and $G$ an IDB atom, called the \textit{goal predicate}. The \textit{arity} of a query is the arity of $G$.
Given a temporal database $\Db$, a timestamp $\ell\in\tem(\Db)$, a tuple $(d_1,\ldots,d_k)\in\Delta_\Db^k$ and a \dlnd query $(\pi, G)$ of arity $k$, the pair $\langle (d_1,\ldots,d_k), \ell \rangle$ is a \emph{certain answer} to $(\pi, G)$ over $\Db$ if $\Db,\pi,\ell\models G(d_1,\ldots,d_k)$.
The \emph{answering problem} for a \dlnd query $(\pi, G)$ over a temporal database $\Db$ is that of checking, given a tuple $(d_1, \dots, d_k)\in\Delta_\Db^k$, and $\ell\in \tem(\Db)$, if $\langle (d_1,\ldots,d_k), \ell \rangle$ is a certain answer to $(\pi, G)$ over $\Db$. We use the term `complexity of the query $(\pi, G)$' to refer to the data complexity of the associated answering problem, and say, e.g., that $(\pi, G)$ is complete for polynomial time (for \PTime) or for nondeterministic logarithmic space (\NLogSpace) if the answering problem for $(\pi, G)$ is such.

Our main concern is how the data complexity of the query answering problem is affected by the features of $\pi$.
The following theorem relies on similar results obtained for temporal deductive databases~\cite{TDD-1988, TDD-1990} and temporal description logics~\cite{AKRZ:LPAR13,Cookbook,Gutierrez-Basulto16}.
\begin{restatable}{theorem}{PropinNL}\label{prop:preliminaries:mtdlog-complexity}
Answering \textup{(}monadic\textup{)} \dlnd queries is \textup{\PSpace}-complete for data complexity; answering linear \textup{(}monadic\textup{)} \dlnd queries is \textup{\NLogSpace}-complete.
\end{restatable}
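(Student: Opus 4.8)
The statement bundles four claims: \PSpace-membership and \PSpace-hardness for (monadic) \dlnd-query answering, and \NLogSpace-membership and \NLogSpace-hardness for linear (monadic) \dlnd-query answering. The plan is to reduce each upper bound to a reachability problem over a \emph{finite} presentation of the least model, and to obtain each lower bound by a direct simulation. First I would record the semantic backbone: since every \dlnd-rule is a definite Horn clause, for fixed $\pi$ and $\Db$ there is a least model $\Emc$, namely $\bigcup_{n<\omega}T_\pi^n(\Db)$ where $T_\pi$ is the immediate-consequence operator (continuous because rule bodies are finite conjunctions of prefixed atoms), and $\langle\bar d,\ell\rangle$ is a certain answer to $(\pi,G)$ over $\Db$ iff $G(\bar d)$ belongs to the $\ell$-th slice of $\Emc$; equivalently, iff the ground atom $G(\bar d,\ell)$ has a finite derivation tree all of whose leaves are EDB atoms of $\Db$. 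When $\pi$ is linear, every rule body carries at most one IDB atom, so such a tree degenerates to a caterpillar: a spine $G(\bar d,\ell)=A_0,A_1,\dots,A_m$ of ground IDB atoms in which each $A_i$ follows from $A_{i+1}$ by a rule whose (EDB) side atoms hold in $\Db$, and $A_m$ is produced by an initialisation rule all of whose body atoms hold in $\Db$. The recurring obstacle for the upper bounds is that the timeline is infinite whereas the computation has bounded space; here I would exploit that, away from $\tem(\Db)=[l,r]$, the least model is very regular. Writing $N$ for the maximal nesting of $\Next/\Prev$ in $\pi$, for $\ell>r+N$ (and symmetrically for $\ell<l-N$) no \Next/\Prev-prefixed EDB atom can hold at $\ell$, a \Dp-prefixed EDB atom holds at $\ell$ iff the corresponding EDB fact occurs \emph{somewhere} in $[l,r]$ (a truth value independent of $\ell$), and a \Df-prefixed EDB atom is simply false; hence in the two ``far'' zones the rules act in a time-translation-invariant way, which forces the timeline projection of $\Emc$ to be eventually periodic in both directions.

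Granting a confinement lemma---that for \emph{linear} programs a certain answer $\langle\bar d,\ell\rangle$ with $\ell\in\tem(\Db)$ always has a spine derivation all of whose timestamps lie in $[l-\mu,r+\mu]$ for some $\mu$ polynomial in $|\Db|$ (of order $N\cdot|\IDBpi|\cdot|\Delta_\Db|$), obtained by discharging each \Df/\Dp-demand with a \emph{closest} witness and by folding any excursion into a far zone back towards $[l,r]$ using the time-translation symmetry there---\NLogSpace-membership for linear (monadic) \dlnd follows: a nondeterministic logarithmic-space procedure searches for a spine on the fly, its configuration being just the current IDB predicate, the current object, and the current timestamp (an integer confined to $[l-\mu,r+\mu]$, hence $O(\log|\Db|)$ bits); at each step it guesses a rule together with a grounding compatible with the current atom, verifies the rule's EDB side atoms against $\Db$ (a first-order, hence $\ACz\subseteq\NLogSpace$, check), and it accepts upon reaching an atom derivable by an initialisation rule whose body holds. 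Being a reachability task, this is in \NLogSpace\ by the Immerman--Szelepcs\'enyi theorem. The matching lower bound is immediate: directed $s$--$t$ reachability is exactly the certain-answer problem for the linear, monadic, connected query $\mathit{Reach}(X)\impd\mathit{Source}(X)$, $\mathit{Reach}(Y)\impd\mathit{Reach}(X)\wedge E(X,Y)$ with goal $\mathit{Reach}$, which is a \lmdlnd-query, so answering \lmdlnd-queries is \NLogSpace-hard.

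For the general (monadic) case the period of $\Emc$ may be exponential in $|\Db|$, so one cannot materialise the relevant band; instead I would follow the automata-theoretic route used for temporal deductive databases~\cite{TDD-1990} and temporal description logics~\cite{AKRZ:LPAR13,Cookbook,Gutierrez-Basulto16} and characterise \emph{non}-certainty---the existence of a model of $\pi$ and $\Db$ in which $G(\bar d)$ fails at $\ell$---as non-emptiness of an automaton, equivalently reachability in an implicitly presented transition system whose states are single slice types (assignments of subsets of $\IDBpi$ to the polynomially many objects) enriched with a bounded summary recording which IDB facts are forced somewhere in the future/past, the summary being what discharges \Df/\Dp. Each state has polynomial size and the successor relation is polynomial-time decidable, so this lies in \NPSpace\ and hence in \PSpace\ by Savitch's theorem; complementing yields the certain-answer problem in \PSpace. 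Finally, \PSpace-hardness is inherited from temporal deductive databases~\cite{TDD-1988,TDD-1990}: a fixed connected monadic program can recursively unfold, along the timeline, a polynomially long bit counter spread over the data objects and thereby simulate an exponentially long run of a polynomial-space Turing machine (the operators \Next/\Prev\ propagate the configuration step by step and \Df\ pulls the acceptance verdict back to a timestamp in $\tem(\Db)$); recasting that simulation in \dlnd gives the bound.

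The step I expect to be the real difficulty is the treatment of the unbounded temporal reach of \Df/\Dp: making the confinement lemma precise in the linear case---in particular handling the transition zone $(r,r+N]$, where \Prev-prefixed EDB atoms still see into $[l,r]$, and verifying that ``closest witness / fold the excursion'' never discards a derivation---and, in the general case, designing the slice-type automaton so that the future/past summary is both sound (no spurious discharge of \Df/\Dp) and complete. Everything else---the least-model characterisation, the caterpillar shape of linear derivations, and the routine complexity bookkeeping---can be largely imported from the cited work on temporal deductive databases and temporal description logics.
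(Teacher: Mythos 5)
Your proposal is correct and follows essentially the same route as the paper: the spine (``entailment sequence'') view of linear derivations, time-translation invariance of rule applicability beyond $[l-N,\,r+N]$, a confinement lemma bounding all timestamps polynomially in $|\Db|$ (proved in the paper by a loop-removal argument yielding a \emph{quadratic} bound $O\bigl((|\IDBpi|\cdot|\Delta_\Db|^{a})^{2}\bigr)$ --- your linear estimate for $\mu$ is optimistic, since folding an excursion must keep all intermediate offsets in the far zone, but this is immaterial for the \NLogSpace{} bound), followed by on-the-fly reachability. The \PSpace{} upper bound (the paper grounds the program into propositional \LTL{} and invokes \LTL{} satisfiability, which is the same slice-type reachability you describe) and both lower bounds (plain linear datalog reachability; the temporal-deductive-database simulation, where the paper additionally notes that the rigid relation $\textit{Next}(i,j)$ must be emulated via $\Dp$) likewise coincide with yours.
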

\begin{proof}
  (\textit{Sketch}) For full \dlnd, \PSpace-completeness
  can be shown by reusing the techniques for temporal deductive databases \cite{TDD-1988, TDD-1990}. However, we prove that a \textit{linear} query can be answered in \NLogSpace. Indeed, fix a linear query $(\pi, G)$. Without loss of generality, we assume that temporalised IDB atoms in rule bodies of $\pi$ have the form $\Next^k C(\Yol)$, where $|k| \leqslant 1$ (the cases of \Df/\Dp and consecutive \Next/\Prev can be expressed via recursion). Given a temporal database \Db, tuples of objects $\avec{c}$ and $\avec{d}$ from $\Delta_{\Db}$, and $\ell \in \Zbb$, we write $C(\avec{c}) \leftarrow_{\ell,k} D(\avec{d})$ if $\pi$ has a rule $C(\Xol) \impd \p(\Xol, \Yol, \Uol) \wedge \Next^k D(\Yol)$ such that $\Db, \ell \mdl \exists \Uol \p(\avec{c}, \avec{d}, \Uol)$. Analogously, we write $C(\avec{c}) \leftarrow_\ell$ if there is an initialisation rule $C(\Xol) \impd \p(\Xol, \Uol)$ and $\Db, \ell \mdl \exists \Uol \p(\avec{c}, \Uol)$. Then, given a linear \dlnd query  $(\pi, G)$, we have $\Db, \pi, \ell \mdl G(\avec{d})$ if and only if $G = C_0$, $\avec{d} = \avec{c}_0$, and there exists a sequence
\begin{align}\label{eq:derivation}
	C_0(\avec{c}_0) \leftarrow_{k_0} C_1(\avec{c}_1) \leftarrow_{k_1} \dots \leftarrow_{k_{n-1}} C_{n}(\avec{c}_{n})
\end{align}
such that $C_i \in \textit{IDB}(\pi)$, tuples $\avec{c}_i$ are from $\Delta_{\Db}$, $\ell_0 = \ell$, $\ell_{i+1} = \ell_{i}+k_i$ for $0 \leq i < n$, $C_i(\avec{c}_i) \leftarrow_{\ell_i, k_i} C_{i+1}(\avec{c}_{i+1})$, and $C_n(\avec{c}_n) \leftarrow_{\ell_n}$. Let $\tem(\Db) = [l, r]$. Using property \eqref{eq:loop}, we observe that a rule $C(\avec{c}) \leftarrow_{\ell,k} D(\avec{d})$ either holds or does not hold simultaneously for all $\ell > r + N$, where $N$ is the number of temporal operators in $\pi$, and, similarly, for all $\ell < l - N$. Now, consider a sequence~\eqref{eq:derivation} and $\ell$, where all $\ell_i > r + N$
(the case for $\ell < l - N$
is analogous). Any loop of the form $C_i(\avec{c}_i) \leftarrow_{k_i} \dots \leftarrow_{k_{j-1}} C_j(\avec{c}_j)$ in it with $C_i(\avec{c}_i) = C_j(\avec{c}_j)$ 
can be removed as long as in the resulting sequence
\begin{equation*}
C_0(\avec{c}_0) \leftarrow_{k_0} C_1(\avec{c}_1) \leftarrow_{k_1} \dots \leftarrow_{k_{i-1}} C_i(\avec{c}_i) \leftarrow_{k_j} C_{j+1}(\avec{c}_{j+1}) \dots \leftarrow_{k_{n-1}} C_{n}(\avec{c}_{n}),
\end{equation*}
the sum of all $k_t$ remains $\geq 0$. This allows us to convert any sequence \eqref{eq:derivation} to a sequence with the same $C_0$ in the beginning, the same $C_n$ in the end, and where all $\ell_i$ do not exceed $\ell + O(|\IDBpi|\cdot|\Delta_\Db|^a)$, for $a$ equal to the maximal arity of a relation in $\IDBpi$.
This means that, for $\ell \in \tem(\Db)$, we can check $\Db, \pi, \ell \mdl G(\avec{d})$ using timestamps in the range $[l - O(|\pi|\cdot|\Delta_\Db|^a), r + O(|\pi|\cdot|\Delta_\Db|^a)]$.
Clearly, the existence of such a derivation can be checked in \NLogSpace{}.
\end{proof}

However, individual queries may be easier to answer than in the general case. Since \dlnd combines features of plain datalog and of linear temporal logic, its queries can correspond to a variety of complexity classes. Recall, for example, queries $(\pi_1, \textit{Good})$ and $(\pi_2, \textit{Satisfactory})$ from Section \ref{sec:introduction}, the first of which is hard for logarithmic space (\LogSpace-hard) and the second lies in \ACz, the class of problems decidable by unbounded fan-in, polynomial size and constant depth boolean circuits. Furthermore, by using unary relations and operator~\Next, one can simulate
any regular language, giving rise to queries that lie in $\ACCz$, the class obtained from \ACz by allowing `MOD $m$' gates, or are complete for $\NCo$, the class defined similarly for bounded fan-in polynomial circuits of logarithmic depth. Intuitively, the problems in \ACz, \ACCz, and \NCo are solvable in short (constant or logarithmic) time on a parallel architecture; see \cite{Straubing94} for more details.

In the remainder of the paper we focus on deciding the data complexity for the linear monadic fragment of \dlnd, denoted \lmdlnd.
It is well-known that a plain datalog query can be characterised via an infinite set of conjunctive queries called its expansions~\cite{Naughton-Expansions}. We define expansions for \lmdlnd and use them as the main tool in our (un)decidability proofs.

\subsubsection{Expansions for Linear Monadic Queries.} \label{sec:preliminaries:mtdlog:expansions}

Let $\pi$ be a \lmdlnd program and $Q(X)$ be a unary temporal conjunctive query with a single answer variable and containing a unique IDB atom, say $D(Y)$, from $\pi$. Let $P(X)$ be another temporal conjunctive query with a single answer variable,
and let $P'(Y)$ be obtained from $P(X)$ by substituting $X$ by $Y$ and all other variables with fresh ones.
A \textit{composition} of $Q$ and $P$, denoted $Q \circ P$, has the form of $Q$ with $D(Y)$ substituted with $P'(Y)$. We note that the variables of $Q$ remain present in $Q \circ P$ and $X$ is an answer variable of $Q \circ P$. If $P$ contains an IDB atom and $K(X)$ is another temporal conjunctive query, the composition can be extended in the same fashion to $(Q \circ P) \circ K$, and so on. Note that, up to renaming of variables, $(Q \circ P) \circ K$ and $Q \circ (P \circ K)$ are the same queries, so we will omit the brackets and write $Q \circ P \circ K$.

\emph{Expansions} are compositions of rule bodies of a program $\pi$. Let $B_1, B_2, \dots, B_{n - 1}$ be such that $B_i$ is the body of the recursive rule:
\begin{equation}\label{rule:for-expansions-rec}
	C_i(X) \impd A_i(X, Y, U_1, \dots, U_{m_i})\ \wedge\ \Next^{k_i} C_{i + 1}(Y),
\end{equation}
and $B_n$ is the body of an initialization rule
\begin{equation}\label{rule:for-expansions-init}
	C_n(X) \impd B(X, V_1, \dots, V_{m_n}).
\end{equation}
The composition $B_1 \circ \dots \circ B_n$ is called an \textit{expansion} of $(\pi, C_1)$, and $n$ is its \textit{length}. The set of all expansions of $(\pi, C_1)$ is denoted $\expand(\pi, C_1)$. Moreover, let $\Gamma^r_\pi$ be the set of all recursive rule bodies of $\pi$ and $\Gamma^i_\pi$ be the set of all initialization rule bodies. Then each expansion can be regarded (by omitting the symbol $\circ$) as a word in $(\Gamma^r_\pi)^* \Gamma^i_\pi$, and $\expand(\pi, C_1)$ as a sublanguage of $(\Gamma^r_\pi)^* \Gamma^i_\pi$. Adopting a language-theoretic notation, we will use small Latin letters $w, u, v$, etc. to denote expansions. To highlight the fact that each expansion is a temporal conjunctive query with the answer variable $X$, we sometimes write $w(X) \in \expand(\pi, C)$.

It is a direct generalization from the case of plain datalog \cite{Naughton-Expansions} that $\Db, \pi, \ell \mdl C(d)$ if and only if there exists $w(X) \in \expand(\pi, C)$ such that $D_\ell\mdl w(d)$.



\section{Undecidability for Queries with \Df}
\label{sec:diamond-undecidability}

Our first result about deciding the complexity of a given query is negative. 
%
%
	\begin{theorem}\label{thm:undecidability-for-diamond}
		It is undecidable whether a given \lmdld-query can be answered in \ACz, \ACCz, or \NCo (if $\NCo \neq \NLogSpace$). It is undecidable whether the query is \LogSpace-hard (if $\LogSpace \neq \NLogSpace$).
	\end{theorem}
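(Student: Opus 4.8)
The plan is to reduce the halting problem for two-counter Minsky machines~\cite{Minsky}, which is undecidable, to each of the decision problems in question. The heart of the proof is a single construction: from a Minsky machine $M$ I would compute a connected linear monadic query $Q_M = (\pi_M, G)$ that uses only the operator \Df{} (so, a fortiori, a \lmdld-query) such that answering $Q_M$ is in \ACz{} if $M$ halts and is \NLogSpace-complete if $M$ does not halt. Since, by Theorem~\ref{prop:preliminaries:mtdlog-complexity}, answering any \lmdld-query is already in \NLogSpace, in the non-halting case it is enough to establish \NLogSpace-\emph{hardness}. In the halting case, by the expansion characterisation of Section~\ref{sec:preliminaries:mtdlog:expansions} we have $\Db, \pi_M, \ell \models G(d)$ iff $\Db, \ell \models w(d)$ for some $w \in \expand(\pi_M, G)$, so it suffices to show that only finitely many expansions of $(\pi_M, G)$ are satisfiable at all; then $Q_M$ is equivalent to the finite disjunction of those temporal conjunctive queries, hence in \ACz{} by Proposition~\ref{prop:preliminaries:tcq-complexity}.

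For the construction, $\pi_M$ simulates runs of $M$, using its linear recursion to count computation steps, so that a single application of a recursive rule simultaneously advances the step counter and updates a configuration of $M$. A configuration $(q, c_1, c_2)$ is encoded by the current IDB predicate name (holding $q$ and some bookkeeping), the current domain object, and the current timestamp; the two unbounded quantities $c_1$ and $c_2$ are encoded in the two unbounded ``resources'' at hand, namely the length of a chain of EDB-linked objects in the data and a distance along the timeline. The operator \Df{} is exactly what makes this possible: it lets a single linear recursion path move an unboundedly long way down the timeline in one step (a pure \lmdln-query could only ever look a bounded number of moments ahead, which is why the construction fails for \Next/\Prev). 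The rules are designed so that an expansion $w$ of $(\pi_M, G)$ is satisfied at $(\Db, \ell)$ iff $\Db$, read from $\ell$ onwards, encodes a prefix of the (unique) run of $M$ of length $|w|$, ending in a ``halt'' marker. Hence: if $M$ halts in $t$ steps, only the $O(t)$ shortest expansions can ever be satisfied, which is the bounded case; if $M$ never halts, satisfiable run-prefixes, and so satisfiable expansions, occur of every length, and by letting $\Db$ encode an arbitrary directed graph along whose edges $M$'s run must be ``replayed'' one obtains a reduction of directed $s$--$t$ reachability to answering $Q_M$, giving \NLogSpace-hardness.

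The four undecidability claims now follow from this dichotomy together with the unconditional inclusions $\ACz \subsetneq \ACCz \subseteq \NCo \subseteq \LogSpace \subseteq \NLogSpace$ and $\ACz \subsetneq \LogSpace$, plus the stated hypotheses. If $M$ halts then $Q_M \in \ACz \subseteq \ACCz \subseteq \NCo$ and $Q_M$ is not \LogSpace-hard (because $\LogSpace \not\subseteq \ACz$); if $M$ does not halt then $Q_M$ is \NLogSpace-complete, hence \LogSpace-hard and not in \ACz, and---assuming $\NCo \ne \NLogSpace$---not in \ACCz{} and not in \NCo. So ``answerable in \ACz'', ``answerable in \ACCz'', and ``answerable in \NCo'' each hold iff $M$ halts (the last two under $\NCo \ne \NLogSpace$), while ``\LogSpace-hard'' holds iff $M$ does not halt; all four are therefore undecidable under the hypotheses $\NCo \ne \NLogSpace$ and $\LogSpace \ne \NLogSpace$. (With a small variation one also gets undecidability of \NCo-\emph{completeness}, and of membership in $\ACCz \setminus \ACz$: take the union of $Q_M$ with a fixed pure-\LTL{} query of complexity exactly \NCo-complete, respectively exactly $\ACCz \setminus \ACz$---say one defining the parity, respectively the $(aa)^{*}$, language---so that the complexity of the union is the maximum of the two and the union is \NCo-complete precisely when $M$ does not halt, again assuming $\NCo \ne \NLogSpace$.)

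The main obstacle is carrying out this simulation within the permitted syntax: the rules must be monadic (unary IDB heads) and linear (at most one IDB atom per body, so the whole state of a derivation is a single object, a single predicate name, and a single timestamp), connected, and may use \Df/\Dp{} but not \Next/\Prev. Faithfully implementing increments, decrements and zero-tests of two counters under these constraints is delicate---in particular one must force \Df{} to jump to the \emph{intended} next timestamp rather than some arbitrary later one (which calls for suitably ``sparse'' data together with invariants enforced by the rules) and must keep the object-chain encoding of the first counter intact across a step that only touches the timeline. A further, smaller point is the ``$M$ halts $\Rightarrow$ $Q_M$ bounded'' direction, which requires showing that no expansion longer than the running time of $M$ can be satisfied in \emph{any} data instance at all.
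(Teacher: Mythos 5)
Your high-level plan coincides with the paper's: reduce halting of two-counter Minsky machines by building a connected \lmdld-query $(\pi_M,G)$ that is in \ACz if $M$ halts and \NLogSpace-hard (via a reachability reduction) otherwise, and then read off the four undecidability claims using $\NCo\neq\NLogSpace$ and $\LogSpace\neq\NLogSpace$. However, the heart of the argument---the actual design of $\pi_M$---is missing, and the design you sketch has a genuine flaw. First, your halting-case criterion ``only finitely many expansions of $(\pi_M,G)$ are satisfiable at all'' can never hold: every expansion is a temporal CQ and is satisfied in its own canonical database $\Db_w$. What is actually needed (and what the paper proves) is a \emph{subsumption} statement: whenever $\Db,\pi_M,\ell\models G(d)$, some expansion with at most $m+2$ variables is already satisfied, so the query is equivalent to a finite union of temporal CQs. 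Second, your plan to make a positive, step-by-step verification of the run work by ``forcing \Df to jump to the intended next timestamp'' via sparse data and invariants cannot succeed, because data complexity is measured over \emph{arbitrary} databases; a query built from \Df-prefixed atoms only imposes ``strictly later'' constraints, so on degenerate data arbitrarily long positive verifications would be satisfied even when $M$ halts, and you have no mechanism to neutralise malformed data.

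The paper's key idea, which your proposal lacks, is to invert the simulation: instead of verifying a correct computation, $\pi_M$ derives its state IDBs from \emph{violations}. Both counters are encoded as temporal offsets of $U_1$- and $U_2$-edges relative to a $T$-edge between consecutive configuration objects (not one counter as an object chain, as you suggest), and the rules detect (i) representation violations (two outgoing $T$-, $U_1$- or $U_2$-edges from the same object, expressible as $U_c(X,Y)\wedge\D U_c(X,Z)$ and the like) and (ii) transition violations $NE_c^{\varepsilon}$, i.e.\ that the timestamps of consecutive $U_c$-edges are \emph{not} related by exactly $\varepsilon$ --- and ``wrongness'' of a distance, unlike exactness, is expressible with \Df alone. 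Recursive rules then propagate a state predicate backwards along the $T$-chain in accordance with $\Theta$, and every expansion ends in a violation check. This design simultaneously solves both of your obstacles: malformed or ``cheating'' data makes $G$ derivable through a short violation expansion (so it never breaks the \ACz bound), and if $M$ halts in $m$ steps then any derivation of $G$ is witnessed by an expansion of length $O(m)$, because a faithful $m$-step trace cannot be extended and must end in a detectable violation. Without this violation-based mechanism, the dichotomy your reduction relies on is not established.
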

The proof is by a reduction from the halting problem of 2-counter machines~\cite{Minsky}. Namely, given a 2-counter machine $M$ we construct a query $(\pi_M, G)$ that is in \ACz if $M$ halts and \NLogSpace-complete otherwise.

Recall, that a
\textit{2-counter machine} is defined by a finite set of states $S = \{s_0, \dots, s_n\}$,
with a distinguished \textit{initial state} $s_0$,
two counters able to store
non-negative integers, and a transition function~$\Theta$. On each
step the machine performs a \textit{transition} by changing its
state and incrementing or decrementing each counter by 1 with a
restriction that their values remain non-negative. The next
transition is chosen according to the current state and the values
of the counters. However, the machine is only allowed to perform
zero-tests on counters and does not distinguish between two
different positive values.
Formally, transitions are given by a partial function $\Theta$:
\begin{equation}\label{def:machine:theta}
	S \times \{0, +\} \times \{0, +\} \rightarrow S \times
	\{-1, 0, 1\} \times \{-1, 0, 1\}.
\end{equation}
%
%
Let $\mathrm{sgn}(0) = 0$ and $\mathrm{sgn}(k) = +$ for all $k > 0$. A
\textit{computation} of $M$ is a sequence of \textit{configurations}:
\begin{equation}\label{def:machine:computation}
	(s_0, a_0, b_0), (s_1, a_1, b_1), (s_2, a_2, b_2) \dots (s_m, a_m, b_m),
\end{equation}
such that for each $i, 0 \leqslant i < m$, holds
$\Theta(s_i, \mathrm{sgn}(a_i), \mathrm{sgn}(b_i)) = (s_{i + 1},
\varepsilon_1, \varepsilon_2)$ and $a_{i + 1} = a_i + \varepsilon_1$,
$b_{i + 1} = b_i + \varepsilon_2$. We assume that
$a_0, b_0 = 0$ and $\Theta$ is such that
$a_i, b_i \geqslant 0$ for all~$i$.
We call $m$ the \textit{length} of the computation.
%
We say that $M$ \textit{halts} if $\Theta(s_m, \mathrm{sgn}(a_m), \mathrm{sgn}(b_m))$ is not defined in a computation. Thus, $M$ either halts after $m$ steps or goes through infinitely many configurations.

Let $M$ be a 2-counter machine. We construct a connected linear query $(\pi_M, G)$ using the operator \Df only, such that its evaluation is in \ACz if $M$ halts, but becomes \NLogSpace-complete otherwise. The construction with the operator \Dp instead of \Df is symmetric.

We set the EDB schema $\Sigma = \{ T, U_1, U_2\}$ of three relations which stand for 'transition', 'first counter update', and 'second counter update', respectively. Intuitively, domain elements of a temporal database represent configurations of $M$, and role triples of $T$, $U_1$ and $U_2$, arranged according to certain rules described below, will play the role of transitions. A sequence of nodes connected by such triples will thus represent a computation of $M$. Our program $\pi_M$ will generate an expansion along such a sequence, trying to assign to each configuration an IDB that represents a state of $M$, which will be possible while the placement of the connecting roles on the temporal line follows the rules of $\Theta$. If the machine halts, there is a maximum number of steps we can make, so the check can be done in \ACz. If $M$ does not halt, however, it has arbitrarily long computations, and the query evaluation becomes \NLogSpace-complete.

Here are the details.
A configuration $(s, a, b)$ is represented by an object $d$ and three timestamps $\ell_0, \ell_1, \ell_2$ such that $\ell_1 = \ell_0 + a$ and $\ell_2 = \ell_0 + b$. The values of the counters~$a$ and $b$ are indicated, respectively, by existence of connections $U_1^{\ell_1}(d, d')$ and $U_2^{\ell_2}(d, d')$ to some object $d'$ that is supposed to represent the next configuration in the computation. For the transition to happen, we also require $T^{\ell_0}(d, d')$.
Given a computation of the form~\eqref{def:machine:computation}, the corresponding \textit{computation path} is a pair $(\lp d_0, \dots, d_n \rp, \ell_0)$, where for each~$i, 0 \leqslant i < n$, there are $T^{\ell_0}(d_i, d_{i + 1})$, $U_1^{\ell_0 + a_i}(d_i, d_{i + 1})$, and $U_2^{\ell_0 + b_i}(d_i, d_{i + 1})$ in the database, with an additional requirement that $a_0 = b_0 = 0$. Two types of problems may be encountered on such a path. A \textit{representation violation} occurs when an object has more than one outgoing edge of type $T$, $U_1$, or $U_2$. A \textit{transition violation} of type $(s_i, \alpha, \beta)$ is detected when there are two consecutive nodes $d_i, d_{i + 1}$, $\alpha = \sgn(a_i), \beta = \sgn(b_i)$, and $\Theta(s_i, \alpha, \beta) \neq (s_{i + 1}, a_{i + 1} - a_i, b_{i + 1} - b_i)$.
The program $\pi_M$ will look for such violations. It will have an IDB relation symbol $S_i$ per each state $s_i$ of~$M$. The initialisation rules allow to infer any state IDB from a representation violation, and the IDB $S_i$ from a transition violation of type $(s_i, \alpha, \beta)$. The recursive rules then push the state IDB up a computation path following the rules of $\Theta$ and tracing 'backwards' a computation of $M$. Once $\pi_M$ infers the initial state IDB at a position with zero counter values, we are done. It remains to give the rules explicitly.

We use a shortcut $\Drefl$ to mean a `reflexive' version of \D, i.e. $\Drefl \p \equiv \D \p \vee \p$. Clearly, every rule $\Drefl$ can be rewritten to an equivalent set of rules without it. We need the rules: 
%
\begin{align}
	&S_i(X) \impd \Drefl RV(X),\quad 0 \leqslant i \leqslant n\ 
	&&RV(X) \impd T(X, Y) \wedge \D T(X, Z) \label{rule:machine:representation-violation-first}\\
	&RV(X) \impd U_1(X, Y) \wedge \D U_1(X, Z), \label{rule:machine:representation-violation-last}\ 
	&&RV(X) \impd U_2(X, Y) \wedge \D U_2(X, Z)
\end{align}
to detect representation violations.
For transition violations, we first define IDBs $NE_c^{\varepsilon}$, where $c \in {1, 2}$ stand for the respective counter and $\epsilon \in \{-1, 0, 1\}$ stands for the change of that counter value in a transition. Each $NE_c^{\varepsilon}$ detects situations when a correct transition was not executed, e.g. having $\epsilon = -1$, the timestamps $\ell$ and $\ell'$ that are marked by an outgoing $U_c$ in consecutive configurations satisfy $\ell - 1 > \ell'$, $\ell = \ell'$, or $\ell < \ell'$, which they should not. The rules are the following:
\begin{align}
    &NE_c^{-1}(Y) \impd U_c(Y, Z) \wedge U_c(X, Y) 
    &&NE_c^{-1}(Y) \impd U_c(Y, Z) \wedge \D \D U_c(X, Y) \label{rule:machine:transition-violation2}\\
    &NE_c^{-1}(Y) \impd U_c(X, Y) \wedge \D U_c(Y, Z) 
    &&NE_c^{1}(Y) \impd U_c(Y, Z) \wedge U_c(X, Y) \label{rule:machine:transition-violation6}\\
	&NE_c^{1}(Y) \impd U_c(Y, Z) \wedge \D U_c(X, Y) 
	&&NE_c^{1}(Y) \impd U_c(X, Y) \wedge \D\D U_c(Y, Z) \label{rule:machine:transition-violation-last}\\
	&NE_c^{0}(Y) \impd U_c(Y, Z) \wedge \D U_c(X, Y)
	&&NE_c^{0}(Y) \impd U_c(X, Y) \wedge \D U_c(Y, Z) \label{rule:machine:transition-violation5}
\end{align}
for $c \in \{1,2\}$. Then, again, any state can be inferred from a violation:
\begin{align}\label{rule:machine:tv-state}
	&S_i(X) \impd \Drefl NE_1^{\varepsilon_1}(Y) \wedge \D^\alpha U_1(X, Y) 
    &S_i(X) \impd \Drefl NE_2^{\varepsilon_2}(Y) \wedge \D^{\,\beta} U_2(X, Y)
\end{align}
for each transition $\Theta(s_i, \alpha, \beta) = (s_j, \varepsilon_1, \varepsilon_2)$, and for all $\varepsilon_1, \varepsilon_2$ when $\Theta(s_i, \alpha, \beta)$ is not defined. Finally, we require%
\begin{align}\label{rule:machine:state-transition}
		&S_i(X) \impd T(X, Y) \wedge \D^\alpha U_1(X, Y) \land \D^{\,\beta} U_2(X, Y) \wedge S_j(Y),\\
        &\label{rule:machine:goal}
	G(X) \impd S_0(X) \wedge U_1(X, Y) \wedge U_2(X, Y).
\end{align}
This finalises the construction of $(\pi_M, G)$. Any expansion of $(\pi_M, G)$ starts with the body of the rule~\eqref{rule:machine:goal} and then continues by a sequence of bodies of the rules of the form \eqref{rule:machine:state-transition} and ends with a detection either of a representation violation, defined by \eqref{rule:machine:representation-violation-first}---\eqref{rule:machine:representation-violation-last}, or of a transition violation, defined by \eqref{rule:machine:tv-state}. If $M$ halts in $m$ steps, it is enough to consider expansions containing no more than $m$ bodies of \eqref{rule:machine:state-transition}. Thus, in this case $(\pi_M, G)$ is in \ACz. If, on the contrary, $M$ does not halt, we can use expansions representing arbitrarily long prefixes of its computation for a reduction from directed reachability problem, rendering the query \NLogSpace-hard.
\begin{restatable}{lemma}{LemmaHalting}\label{lm:machine:acz}
	If $M$ halts, $(\pi_M, G)$ is in \ACz. Otherwise, it is \NLogSpace-hard.
\end{restatable}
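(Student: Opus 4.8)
The plan is to build on the expansion characterisation recalled above: $\Db,\pi_M,\ell\mdl G(d)$ holds iff $\Db,\ell\mdl w(d)$ for some $w(X)\in\expand(\pi_M,G)$, where every such $w$ consists of the body of~\eqref{rule:machine:goal}, then $k\ge 0$ bodies of rules of the form~\eqref{rule:machine:state-transition}, and finally a detection of a representation violation (rules~\eqref{rule:machine:representation-violation-first}--\eqref{rule:machine:representation-violation-last}) or of a transition violation (rule~\eqref{rule:machine:tv-state}). I would prove the two halves of the statement separately, according to whether $M$ halts.

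\mypar{If $M$ halts after $m$ steps, then $(\pi_M,G)$ is in \ACz}
The key claim is that it suffices to consider expansions with at most $m$ bodies of~\eqref{rule:machine:state-transition}, i.e.\ that whenever $\Db,\ell\mdl w(d)$ the same holds for some $w'$ using at most $m$ such bodies. Granting this, there are only finitely many relevant expansions up to renaming of variables; each is a temporal CQ with operator~\Df, hence answerable in \ACz by Proposition~\ref{prop:preliminaries:tcq-complexity}, and a finite disjunction of such queries is still in \ACz, so $(\pi_M,G)$ is equivalent to a fixed \UCQ and its answering problem is in \ACz. To bound the length, I would take a $w$ with $\Db,\ell\mdl w(d)$ that is minimal in the number of bodies of~\eqref{rule:machine:state-transition} it contains, together with the computation path $\langle d_0,\dots,d_k\rangle$ it induces in $\Db$ ($d_0=d$, started at a configuration with zero counter values, as forced by~\eqref{rule:machine:goal}), and compare it with $M$'s unique run. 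Let $t$ be the largest index such that $d_0,\dots,d_t$ agree with $M$'s run on both the state IDB and the exact values of the two counters. If $t\ge m$, then $d_m$ carries $M$'s halting state and counter values, so $\Theta$ is undefined at its state and signs; rule~\eqref{rule:machine:state-transition} therefore cannot be applied at $d_m$, and as this is the only rule that could prolong the expansion past $d_m$, we get $k\le m$. If $t<m$, then $d_{t+1}$ disagrees with $M$'s run on a counter value --- its state IDB still matches, being determined by $\Theta$ from the state and signs at $d_t$ --- so that counter fails to change by the amount $\Theta$ dictates between $d_t$ and $d_{t+1}$; this is a transition violation whose type is the state and signs at $d_t$, so the state IDB at $d_t$ is derivable directly via~\eqref{rule:machine:tv-state}, yielding an expansion with fewer bodies of~\eqref{rule:machine:state-transition} and contradicting minimality. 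Hence $k\le m$ in all cases.

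\mypar{If $M$ does not halt, then $(\pi_M,G)$ is \NLogSpace-hard}
Now $M$'s run is infinite, so expansions using~\eqref{rule:machine:state-transition} can be made arbitrarily long, and I would reduce directed reachability to answering $(\pi_M,G)$. Given a directed graph $H$ with source $a$ and target $z$, pass first to the standard layered unfolding: layers $L_0,\dots,L_n$ with $n=|V(H)|$, each a copy of $V(H)$, and an edge from the copy of $u$ in $L_i$ to the copy of $v$ in $L_{i+1}$ whenever $u=v$ or $(u,v)\in E(H)$, so that $z$ is reachable from $a$ in $H$ iff the copy $\hat z$ of $z$ in $L_n$ is reachable from the copy $\hat a$ of $a$ in $L_0$. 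Writing $a_i,b_i$ for the values of $M$'s two counters after $i$ steps, I would build a database $\Db$ whose objects are the nodes of all layers together with one fresh object $\star$, placing, for every edge $(u,v)$ from $L_i$ to $L_{i+1}$, the atom $T(u,v)$ in the slice at timestamp $0$, $U_1(u,v)$ at timestamp $a_i$, and $U_2(u,v)$ at timestamp $b_i$; additionally $\Db$ contains the atom $T(\hat z,\star)$ in the slices at timestamps $0$ and $1$. Because all $T$-edges of every node sit at timestamp $0$ except for the single extra edge $T(\hat z,\star)$ at timestamp $1$ (and, likewise, all $U_1$- resp.\ $U_2$-edges of any node share one timestamp), the only representation violation in $\Db$ is at $\hat z$; and because every edge faithfully reproduces one step of $M$'s run, there is no transition violation anywhere. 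Hence $\langle\hat a,0\rangle$ is a certain answer to $(\pi_M,G)$ over $\Db$ iff $\hat z$ is reachable through the layers from $\hat a$ --- which, thanks to the self-loops, happens exactly when $z$ is reachable from $a$ in $H$. The reduction is \LogSpace-computable, since simulating $M$ for $i$ steps keeps both counters bounded by $i\le n$ and $\Db$ has polynomially many objects and timestamps.

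I expect the bounded-expansion claim in the \ACz direction to be the main obstacle: one has to make precise the dichotomy ``either a long expansion silently tracks $M$'s halting run and hence must stop by step $m$, or it has already committed a detectable transition violation strictly earlier'', which in turn requires carefully separating the \emph{exact} counter values encoded in $\Db$ from the counter \emph{signs} that rules~\eqref{rule:machine:state-transition} and~\eqref{rule:machine:tv-state} actually test. On the hardness side, the small but essential observation is that keeping every node's outgoing $T$-edges (and likewise its $U_1$- and $U_2$-edges) at a single timestamp prevents patterns such as $T(X,Y)\wedge\D T(X,Z)$ from ever firing spuriously.
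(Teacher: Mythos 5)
Your proof takes essentially the same route as the paper's: the \ACz half is the same dichotomy argument (a minimal satisfied expansion either tracks the exact counter values of $M$'s run---and then must stop within $m$ steps because $\Theta$ is undefined at the halting configuration---or first deviates on a counter value, which rules \eqref{rule:machine:tv-state} detect and which contradicts minimality; the paper packages exactly this as an induction maintaining the counter-tracking invariants), and the hardness half is the same layered-unfolding reduction from directed reachability, keeping all same-type outgoing edges of a node at a single timestamp and planting a representation violation only at the target. The differences (a $T$-based violation at one copy of the target via an auxiliary node plus explicit self-loops, versus the paper's $U_1$-self-loop violations at every copy) are cosmetic.
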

%



\section{Automata-Theoretic Tools for Queries with \Next} \label{sec:automata}

From now on, we focus on queries that use operators \Next and \Prev only. In this section, we develop a generalisation of the automata-theoretic approach to analysing query expansions proposed in \cite{Cosmadakis-Monadic-DLog-Boundedness-Decidable}. In Section \ref{sec:next} we use this approach to study the data complexity of this kind of queries.

Recall from Section~\ref{sec:preliminaries:mtdlog:expansions} the definitions of composition and expansion of rule bodies, alphabets $\Gamma_\pi^r$ and $\Gamma_\pi^i$, and the language $\expand(\pi, G) \sbs (\Gamma_\pi^r)^*(\Gamma_\pi^i)$. We observe that for any sequence of rule bodies $B_1, \dots, B_{n-1} \in \Gamma_\pi^r$ and $B_n \in \Gamma_\pi^i$
the compositions $B_1\circ\dots\circ B_{n - 1}$ and $B_1 \circ \dots\circ B_n$ are well-defined. They are words in the languages $(\Gamma_\pi^r)^*$ and $(\Gamma_\pi^r)^*(\Gamma_\pi^i)$, respectively. Note that $B_1 \circ \dots\circ B_n$ is a temporal CQ over schema \EDBpi, while $B_1\circ\dots\circ B_{n - 1}$ is that over $\EDBpi \cup \IDBpi$, since it contains the IDB atom of $B_{n - 1}$. 
For $w \in (\Gamma_\pi^r)^*(\Gamma_\pi^i)$, $\Db_w$ is defined as the (temporal) database corresponding to (temporal) CQ $w$, while for $w \in (\Gamma_\pi^r)^*$ we define $\Db_w$ as the database corresponding to the CQ obtained from $w$ by omitting the IDB atom. For either $w$, $\Db_w$ is over the schema \EDBpi.
Having that, we define the language
$\accept(\pi, G) \sbs (\Gamma_\pi^r)^* \cup (\Gamma_\pi^r)^*(\Gamma_\pi^i)$ of all words $w \in (\Gamma_\pi^r)^* \cup (\Gamma_\pi^r)^*(\Gamma_\pi^i)$
%
such that $\Db_w, \pi, 0 \mdl G(X)$.

Plain datalog queries are either in \ACz (called \textit{bounded}), or \LogSpace-hard (\textit{unbounded}), and a criterion of unboundedness can be formulated in language-theoretic terms~\cite{Cosmadakis-Monadic-DLog-Boundedness-Decidable}: a connected linear monadic plain datalog query $(\pi, G)$ is \emph{unbounded} if and only if for every $k$ there is $w \in \expand(\pi, G)$, $|w| > k$, such that its prefix of length $k$ is \textit{not} in $\accept(\pi, G)$.

\begin{example}\label{ex:pure-datalog-un-bounded}
	The query $(\pi, G)$, where $\pi$ is given by the following plain datalog rules, is unbounded.
	\begin{align}
		&G(X) \impd R(X, Y) \wedge S(Y, X) \wedge G(Y) \label{rule:pure-datalog-recursive}\\
		&G(X) \impd A(X) \label{rule:pure-datalog-initial-unbounded}
	\end{align}
	However, if we substitute the rule \eqref{rule:pure-datalog-initial-unbounded} with another initialisation rule
	\begin{align}
		&G(X) \impd S(X, Y) \wedge S(Y, Z) \label{rule:pure-datalog-initial-bounded}
	\end{align}
	it becomes bounded because for every $w \in \expand(\pi, G)$, $|w| > 2$ its prefix of length $2$ is in $\accept(\pi, G)$. To see this, consider the expansions of the query, i.e.,  $\expand(\{\eqref{rule:pure-datalog-recursive}, \eqref{rule:pure-datalog-initial-bounded}\}, G)$ given in Figure \ref{fig:pure-datalog}.
\end{example}
%

\begin{figure}
	\centering
	\begin{subfigure}{.5\textwidth}
		\centering
		\includegraphics[height=3cm, keepaspectratio]{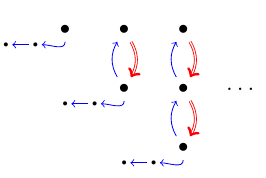}
		\caption{Expansions: {\color{red} double arrows} represent the rela-\\tion $R$, {\color{blue} single arrows}---the relation $S$.}
		\label{fig:virus:data}
	\end{subfigure}%
	\begin{subfigure}{.5\textwidth}
		\centering
		\includegraphics[height=3cm, keepaspectratio]{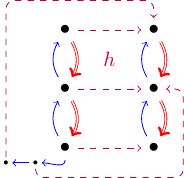}
		\caption{A homomorphism ({\color{purple} dashed arrows}) from an expansion of length 3 to its own prefix of length 2.}
		\label{fig:virus:reasoning}
	\end{subfigure}
	\caption{Illustrations for the query $\{\eqref{rule:pure-datalog-recursive}, \eqref{rule:pure-datalog-initial-bounded}\}, G)$ of Example \ref{ex:pure-datalog-un-bounded}.}
	\label{fig:pure-datalog}
\end{figure}

The authors of~\cite{Cosmadakis-Monadic-DLog-Boundedness-Decidable} construct finite state automata for languages $\expand(\pi, G)$ and $\notaccept(\pi, G)$, the complement of $\accept(\pi, G)$, and use them to check their criterion in polynomial space. Our goal is to generalise this technique to the temporalised case. However, we can not use finite automata to work directly with sequences of rule bodies, since the language $\accept(\pi, G)$, in the presence of time, may be non-regular, as demonstrated by the following example.
\begin{example}\label{ex:general-expansions}
	Consider a program
	\begin{align*}
		&G(X) \impd R(X, Y) \wedge G(Y)
		&&G(X) \impd A(X) \wedge \Next G(X)\\
		&G(X) \impd P(X) \wedge \Prev G(X)
		&&G(X) \impd P(X) \wedge R(Y, X).
	\end{align*}
	Denote its rule bodies $B_1(X, Y) = R(X, Y) \wedge G(Y)$, $B_2(X) = A(X) \wedge \Next G(X)$, and $B_3(X) = P(X) \wedge \Prev G(X)$, and the composition $w = B_1B_2B_2B_3B_3B_3$. The composition $w$ is in $\accept(\pi, G)$, and the corresponding database $\Db_w$ is given in Figure \ref{fig:general-expansions:a}.
	In general, a composition of the form $B_1B_2^nB_3^k$ is in $\accept(\pi, G)$ if and only if $n < k$, which, by a simple application of the pumping lemma, is a non-regular language.
\end{example}


\begin{figure}
	\centering
	\begin{subfigure}{.5\textwidth}
		\centering
		\includegraphics[height=3cm, keepaspectratio]{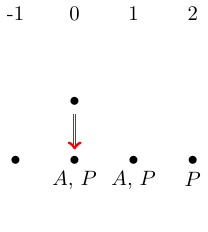}
		\caption{The temporal database $\Db_w$ for $w = B_1B_2B_2B_3B_3B_3$ of Example \ref{ex:general-expansions}.}
		\label{fig:general-expansions:a}
	\end{subfigure}%
	\begin{subfigure}{.5\textwidth}
		\centering
		\includegraphics[height=3cm, keepaspectratio]{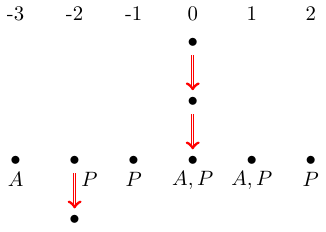}
		\caption{The temporal database $\Db_w$ for $w = B_1^2 B_2^2 B_3^5 B_2 B_1$ of Example \ref{ex:omega-descriptions}.}
		\label{fig:general-expansions:b}
	\end{subfigure}
	\caption{Expansions of \MTDLogX queries.}
	\label{fig:general-expansions}
\end{figure}

To overcome this, we introduce a larger alphabet and define more general versions of the languages $\expand(\pi, G)$ and $\accept(\pi, G)$ to regain their regularity.
Recall that every composition $w$ of rule bodies gives rise to a temporal database $\Db_w$. Instead of working with $w$, we use an exponentially larger alphabet $\Omega = 2^{\Gamma_\pi^r \cup \Gamma_\pi^i \cup \{\bot, \top\}}$ to describe $\Db_w$ as a word and define analogues of $\expand(\pi, G)$ and $\accept(\pi, G)$ over that alphabet. Consider a recursive rule
\begin{equation}\label{def:recursive-rule}
	D(X) \impd \Next^{k_1} R_1(\ovl{U}_1) \wedge \dots \wedge \Next^{k_s} R_s(\ovl{U}_s) \wedge \Next^k E(Y),
\end{equation}
where $E(Y)$ is the unique IDB atom in the rule body and $k_i, k \in \Zbb$. We call such a rule \textit{horizontal} if $X = Y$ and \textit{vertical} otherwise. In a composition $w \in (\Gamma^r_\pi)^* \cup (\Gamma^r_\pi)^*\Gamma_\pi^i$, a \textit{vertical (horizontal) segment} is a maximal subword that consists of vertical (respectively, horizontal) rule bodies.
For every composition $w$ we define the \textit{description} $[w] \in \Omega^*$ of the respective database $\Db_w$ as follows. Let $w = x_1y_1x_2y_2\dots x_ny_n$, where $x_i$ are vertical segments and $y_i$ are horizontal segments, with $x_1$ and $y_n$ possibly empty.
For each $x_i = B_1\dots B_n$ we set $[x_i]$ to be the sequence $\{B_1\}\dots\{B_n\}$ of singleton sets, each of which contains a vertical rule body. For each $y_i = B_1 \dots B_n$, we construct $[y_i]$ in $n$ steps, as follows.
Recall that $B_j$, $1 \leqslant j < n$,  has the form $A_j(X, \Ubf) \wedge \Next^{m_j}D(X)$, where $D(X)$ is the unique IDB atom in $B_j$.  %
Let $\ell_1 = 0$ and $\ell_j = \sum_{i  = 1}^{j-1} m_i$ for $j \leq n$. Intuitively, $\ell_j$ is the moment of time where the body $B_j$ lands in the composition $B_1\circ \dots \circ B_n$. Let $\ell'_1, \dots, \ell'_s$ be the ordering of the the numbers in the set $\{\ell_j\}_{j = 1}^n$ in the increasing order. We set, $\alpha_{\ell'_{k}}$ to be $\{B_j \mid \ell_j = \ell'_k \}$ for $\ell'_k \in \{\ell_1, \ell_n\}$; $\{ B_j \mid \ell_j = \ell_1 \} \cup \{\bot\}$ for $\ell'_k = \ell_1 \neq \ell_n$; $\{B_j \mid \ell_j = \ell_n\} \cup \{\top\}$ for $\ell'_k = \ell_n \neq \ell_1$; and $\{ B_j \mid \ell_j = \ell_1 \} \cup \{\top, \bot\}$ for $\ell'_k = \ell_1 = \ell_n$.
%
%
Additionally, we set $\alpha_k = \emptyset$ for $k \in [\ell'_1, \ell'_s] \setminus \{ \ell'_1, \dots, \ell'_s \}$. Now we take $[y_i] = \alpha$.

%

Finally, $[w] = [x_1][y_1]\dots [x_n][y_n]$. We use the symbol $\Lambda$ to refer to letters of $[w]$. We call letters of the form $\{B\}$, where $B$ is a vertical rule body, \textit{vertical letters}, and the rest---\textit{horizontal letters}. Consequently, we can speak of vertical and horizontal segments of $[w]$, meaning maximal segments composed of vertical (respectively, horizontal) letters only.

Intuitively, $[w]$ describes $\Db_w$, which can be seen as composed of $\Db_{x_1}, \Db_{y_1}, \dots, \Db_{x_n}, \Db_{y_n}$, described by $[x_1], [y_1], \dots, [x_n], [y_n]$, respectively. The symbol $\bot$, representing a vertical line meeting a horizontal one, marks the point in time where $\Db_{x_i}$ is connected to $\Db_{y_i}$, while the symbol $\top$, analogously, shows where $\Db_{y_i}$ is connected to $\Db_{x_{i + 1}}$.
\begin{example}\label{ex:omega-descriptions}
	Recall the rule bodies of Example \ref{ex:general-expansions} and consider the composition $w = B_1^2 B_2^2 B_3^5 B_2 B_1$. The corresponding $\Db_w$ is depicted in Figure \ref{fig:general-expansions:b}.
	Then $x_1 = B_1^2$, $y_1 = B_2^2 B_3^5 B_2$, $x_2 = B_1$ and $y_2$ is empty.
	Thus, $[w]$ is equal to
	\begin{equation}\label{eq:w-description}
		\{B_1\}\{B_1\}\{B_2\}\{\top, B_3\}\{B_3\}\{\bot,B_2, B_3\}\{B_2, B_3\}\{B_3\}\{B_1\}
	\end{equation}
\end{example}


Not every word over $\Omega$ correctly describes a temporal database. This motivates the following definition: $\alpha \in \Omega^*$ is \textit{correct} if (i) every symbol $\Lambda$ is either a singleton (standing for a vertical rule body) or a set of horizontal rule bodies, with a possible addition of $\bot, \top$, and (ii) every horizontal segment of $\alpha$ preceded by a vertical segment contains exactly one $\bot$, and every horizontal segment followed by a vertical one---exactly one $\top$. A correct word $\alpha \in \Omega^*$ describes a temporal database $\Db_\alpha$ similarly to how $[w]$ describes $\Db_w$. Formally, break $\alpha$ into vertical/horizontal segments $\chi_1\upsilon_1\dots\chi_n\upsilon_n$. For a vertical segment $\chi_i = \{B_1\}\dots\{B_n\}$, set $Q_{\chi_i} = B_1 \circ \dots \circ B_n$. For a horizontal segment $\upsilon_i = \Lambda_1 \dots \Lambda_s$, let $\Lambda_{j_\bot}$ be the one containing $\bot$ and $\Lambda_{j_\top}$---containing $\top$. Then $Q_{\upsilon_i}$ is the conjunction of rule bodies from $\upsilon_i$, where each $B \in \Lambda_j$ is prefixed by $\Next^{j - j_\bot}$, plus, if $i \neq n$, an IDB atom $\Next^{j_\top - j_\bot}D(X)$. Finally, set $\Db_\alpha = \Db_{Q_{\alpha}}$ for $Q_{\alpha} = Q_{\chi_1} \circ Q_{\upsilon_1} \circ \dots \circ Q_{\chi_n} \circ Q_{\upsilon_n}$. This $Q_\alpha$ will be also useful further.

We are now ready to define languages over $\Omega$ that will be useful to study the data complexity of our queries. Let $\acceptom(\pi, G)$ be the language of correct words $\alpha$ such that $\Db_\alpha, \pi, 0 \mdl G(X)$, with $X\in\Delta_{\Db_\alpha}$, and $\notacceptom(\pi, G)$ be its complement. We need to define the language of expansions over the alphabet $\Omega$ that we will use together with the language $\notacceptom(\pi, G)$ to formulate a criterion for \LogSpace-hardness similar to the one of \cite{Cosmadakis-Monadic-DLog-Boundedness-Decidable}.
It would be natural to take the language of expansions as $\{[w] \mid w \in \expand(\pi, G)\}$. However, it is harder to define an automaton recognising such a language than it is for the language of $[w]$s as above where each horizontal letter $[w]_i$ may be extended (as a set) with arbitrary ``redundant'' rule bodies, and each horizontal segment may be extended, from the left and right, by ``redundant'' horizontal letters. For example, we will include into the language of expansions the word $\{B_1\}\{B_1\}\{B_3\}\{B_2, B_3\}\{\top, B_3\}\{B_2, B_3\}\{\bot,B_2, B_3\}\{B_2, B_3\}\{B_3\}\{B_1\}$ alongside~\eqref{eq:w-description}. It turns out that the latter language works for our required criteria as good as the former one. Formally, if $\alpha, \beta \in \Omega^*$, we write $\alpha \preccurlyeq \beta$ if
$\alpha = x_1y_1\dots x_ny_n$ and $\beta = x_1y'_1\dots x_ny'_n$, where $x_i$ are vertical segments and $y_i, y'_i$ are horizontal segments, and if $y_i = a_1\dots a_m$, $y'_i = b_1\dots b_s$, then there is $k \geqslant 0$ such that $m + k \leqslant s$ and $a_j \sbs b_{j + k}$, $1 \leqslant j \leqslant m$. We define $\expandom(\pi, G)$ to be the set of correct words $\alpha \in \Omega^*$ such that $[w] \preccurlyeq \alpha$ for some $w \in \expand(\pi, G)$.

%

It is important for our purposes that these languages are regular. For $\Expand(\pi, G)$, the rules of the program $\pi$ may be naturally seen as transition rules of a two-way automaton whose states are the IDBs of $\pi$ (plus a final state). The initial state is the one associated with $G$, and the final state is reached by an application of an initialisation rule.
%
%
%
%
\begin{restatable}{lemma}{LemmaExpand}\label{lm:body-language:expand}
	For any \lmdln-query $(\pi, G)$, the language $\Expand(\pi, G)$ is regular.
\end{restatable}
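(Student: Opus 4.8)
The plan is to build a finite (nondeterministic) automaton $\Amc$ over the alphabet $\Omega$ that accepts exactly $\Expand(\pi,G)$, exploiting the fact that the program $\pi$ acts like a two-way finite automaton whose state set is $\IDBpi$ together with one accepting state $q_f$. First I would make precise the ``two-way automaton'' reading of $\pi$: a vertical rule $D(X)\impd A(X,Y,\ovl U)\wedge\Next^k E(Y)$ is a transition that, reading a vertical letter $\{B\}$ with $B$ this rule body, moves from state $D$ to state $E$; a horizontal rule $D(X)\impd A(X,\ovl U)\wedge\Next^m D(X)$ contributes a body that must be placed inside a horizontal segment at the offset dictated by the accumulated shift; and an initialisation rule $C_n(X)\impd B(X,\ovl V)$ moves from state $C_n$ to $q_f$. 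A word $w\in\expand(\pi,G)$ is exactly a successful run of this device starting in state $G$, and $[w]$ is the record of which bodies land at which time offsets. Since $\preccurlyeq$ only adds ``redundant'' horizontal bodies inside horizontal letters and pads horizontal segments on the left/right, $\Expand(\pi,G)$ is the set of correct $\alpha$ that contain, as a sub-pattern in this precise sense, the description of some successful run.

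The key steps, in order, are: (1) Verify that $\preccurlyeq$-closure is a regular operation on correct words: extending a horizontal letter by an arbitrary subset of $\Gamma^r_\pi$ and padding a horizontal segment by arbitrary extra horizontal letters are both things a finite automaton can simulate on the fly, because the alphabet $\Omega=2^{\Gamma^r_\pi\cup\Gamma^i_\pi\cup\{\bot,\top\}}$ is finite and ``correctness'' (each relevant horizontal segment has exactly one $\bot$, one $\top$, letters are singletons-or-horizontal-sets) is itself checkable by a finite automaton. (2) Track, in the automaton's state, the current IDB/state of the two-way device, plus a bounded piece of information about the horizontal offset. Here is the crucial bound: within a single horizontal segment, the relative offsets of bodies that matter are controlled by the positions of $\bot$ and $\top$, and the net shift of a horizontal segment is $j_\top-j_\bot$; the point is that we never need to remember an unbounded counter, because each horizontal rule body $B_j$ in $[y_i]$ carries its own shift $m_j$ and we only need to check local consistency (body $B_{j}$ with shift $m_j$ must be followed, $m_j$ letters later, by the IDB it passes control to, or by the next body in the chain), which is a constraint between a bounded window of letters once we remember the ``pending obligations'' — and $|m_j|$ is bounded by $|\pi|$, so the window is bounded. (3) Assemble: the automaton reads $\alpha$ left to right, alternating between guessing a vertical segment (a chain of vertical rule bodies each read as a singleton letter, updating the state via vertical transitions) and a horizontal segment (entering with the incoming IDB $D$, reading horizontal letters, matching up the $\bot$/$\top$ markers and the chain of horizontal bodies via the bounded offset bookkeeping, tolerating redundant letters/bodies, and exiting with the IDB handed on by the $\Next^{j_\top-j_\bot}D(X)$ atom), until an initialisation body appears and we move to $q_f$; accept iff we end in $q_f$ having consumed all of $\alpha$ and all correctness conditions held.

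I would phrase the correctness proof as two inclusions. For ``$\supseteq$'': given an accepting run of $\Amc$ on $\alpha$, read off the sequence of rule bodies used at the vertical letters and in the horizontal chains; this is a legitimate sequence $B_1,\dots,B_n$ with $B_1,\dots,B_{n-1}\in\Gamma^r_\pi$, $B_n\in\Gamma^i_\pi$, hence $w=B_1\circ\dots\circ B_n\in\expand(\pi,G)$, and the offset-matching conditions enforced by $\Amc$ guarantee exactly that $[w]\preccurlyeq\alpha$. For ``$\subseteq$'': given $\alpha$ with $[w]\preccurlyeq\alpha$ for some $w\in\expand(\pi,G)$, decompose $w$ into its vertical/horizontal segments, run $\Amc$ guessing at each step the body $\pi$ actually used in $w$ (ignoring the extra redundant letters/bodies that $\preccurlyeq$ allows, which $\Amc$ is designed to skip), and check that $\Amc$ accepts.

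\textbf{Main obstacle.} The delicate point is step (2): arguing that the horizontal-segment matching genuinely requires only \emph{bounded} memory. Naively, checking that a horizontal rule body placed at offset $\ell_j$ connects correctly to the IDB continuation $\ell_j+m_j$ letters later looks like it needs an unbounded counter to measure distance along the segment. The resolution is that we do not measure absolute positions: we maintain only a \emph{finite multiset of pending obligations} of the shape ``in $t$ more letters I must see letter-set containing such-and-such a rule body / the $\bot$ / the $\top$'', where $t$ ranges over $[-N,N]$ with $N$ the number of temporal operators in $\pi$ (so $|m_j|\le N$ and the window is $O(N)$); each letter read decrements all the $t$'s, discharges the due obligations, and spawns new ones from the bodies just read. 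Because the number of distinct obligation-multisets of this bounded form is finite (it depends only on $\pi$, not on $\alpha$), this is a legitimate finite state component. Making this bookkeeping precise — in particular, reconciling it with the ``redundant horizontal letters'' that $\preccurlyeq$ inserts, so that the automaton correctly slides past padding while keeping its obligations aligned — is where the real work of the proof lies; everything else is routine automata construction.
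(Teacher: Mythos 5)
Your proposal follows the same core route as the paper: read $\pi$ as a two-way automaton over $\Omega$ whose states are the IDBs of $\pi$ plus a final state, with vertical rules moving one letter down a vertical segment, horizontal rules moving by $k$ letters within a horizontal segment (stopping at the letter containing $\bot$ when entering a segment, ignoring $\{\top\}$ when counting, and nondeterministically jumping from $\top$ to the next vertical segment), and initialisation rules leading to the final state. Where you diverge is the last step. The paper does not hand-roll the one-way simulation at all: it converts the two-way automaton into a one-way nondeterministic one by the classical crossing-sequence method of Shepherdson, whose states are relations on the two-way state set. That citation is exactly the off-the-shelf tool that discharges what you call the main obstacle, so the bounded-window ``pending obligations'' bookkeeping you sketch is work you do not need to do. If you do want to keep your direct construction, be aware of one soft spot: a \emph{multiset} of obligations over a finite base is not a finite object unless you bound multiplicities, so the claim that ``the number of distinct obligation-multisets of this bounded form is finite'' needs an extra argument --- e.g., that sets suffice because duplicate obligations can be merged (loops in the walk of horizontal bodies can be excised), and that any spurious disconnected cycles your local checks might admit only contribute redundant bodies, which the $\preccurlyeq$-closure of $\Expand(\pi,G)$ explicitly tolerates. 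With that repair your version goes through, but the crossing-sequence route is shorter and is also what the paper later exploits to get the $2^{\poly{|\pi|}}$ size bound and the manipulatability properties needed in Section~\ref{sec:next}.
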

The case of $\notacceptom(\pi, G)$ is more involved. Generalising from~\cite{Cosmadakis-Monadic-DLog-Boundedness-Decidable}, for an automaton to recognise if $\alpha \in \notacceptom(\pi, G)$ it suffices to guess (by nondeterminism) an enrichment $\E$ of $\Db_\alpha$, and check that $\E$ is a model of $\pi$ and that $\E$ does not contain the atom $G(X)$. Moreover, since $\pi$ is connected, for $\E$ to be a model it is enough that every piece of $E$ of radius $|\pi|$, in terms of Gaifman graph, satisfies all the rules. The idea is to precompute the answer for each such piece and encode them in the state-space of the automaton.
The problem, specific to the temporalised case, is that enrichments are infinite in the temporal dimension. To resolve this, we observe that there are still finitely many EDB atoms in $\E$. Since, once again, $\pi$ is connected, to check that rules with EDBs are satisfied it is enough to consider only those pieces of $\E$ that contain an EDB atom. For the rest of the rules, it suffices to check if such a finite piece \textit{can} be extended into an infinitely in time to give a model of $\pi$. The rules without EDBs can be seen as plain \LTL rules, so we can employ satisfiability checking for \LTL to perform this check. 
%
%
%
\begin{restatable}{lemma}{LemmaNoAccept}\label{lm:body-language:notaccept}
	For any \lmdln-query $(\pi, G)$, the language $\notacceptom(\pi, G)$ is regular.
\end{restatable}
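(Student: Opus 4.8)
The plan is to construct a nondeterministic finite automaton $\mathcal{A}$ over the alphabet $\Omega$ with $L(\mathcal{A})=\notacceptom(\pi,G)$; since every NFA recognises a regular language, this proves the lemma. First I would intersect everything with the language of \emph{correct} words, which is regular, correctness being a local property of $\Omega^*$ (each letter is a vertical singleton or a horizontal set, plus a counting condition on the occurrences of $\bot,\top$ inside each horizontal segment). For a correct $\alpha$, the database $\Db_\alpha$ is well defined and has a designated object $X$; and, since datalog is monotone, $\alpha\in\notacceptom(\pi,G)$ holds precisely when there \emph{exists} an enrichment $\E$ of $\Db_\alpha$ that is a model of $\pi$ with $G(X)\notin E_0$. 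So the automaton must guess such an $\E$ while scanning $\alpha$ and certify on the fly that it is a model.

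Two reductions make this certification finite-state. \emph{Locality:} connectedness of $\pi$ bounds the Gaifman diameter of every rule body by $|\pi|$, so $\E$ is a model of $\pi$ iff every ball of radius $|\pi|$ around an (object, time) point of $\E$ locally satisfies $\pi$. \emph{Separating EDB-anchored checks from a safety condition on timelines:} the only EDB atoms of $\E$ are the finitely many atoms of $\Db_\alpha$, and since IDB predicates are monadic, any rule that relates two distinct objects, or that can be triggered at a position carrying an EDB atom, must itself contain an EDB atom; hence every radius-$|\pi|$ ball of $\E$ containing no EDB atom lies within a single object's timeline and is constrained only by the EDB-free sub-program $\pi_{\mathrm{LTL}}\subseteq\pi$. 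Because the language is \lmdln{} (only \Next/\Prev), each rule of $\pi_{\mathrm{LTL}}$ is a \emph{local} implication between IDB labels at positions within distance $|\pi|$ on one timeline. Therefore ``$\E\models\pi$'' is equivalent to the conjunction of: (i) local satisfaction of $\pi$ in each radius-$|\pi|$ ball that meets some EDB atom; and (ii) for every object $d$, every length-$(2|\pi|+1)$ window of the bi-infinite IDB-word $(\{A\in\IDBpi \mid A(d)\in E_\ell\})_{\ell\in\Z}$ respects the local implications of $\pi_{\mathrm{LTL}}$. Condition (ii) is a sliding-block safety condition; in particular, whether a given finite IDB-window on one timeline can be completed to a bi-infinite word all of whose windows are admissible is a fixed, precomputable property (non-emptiness of a path, in both time directions, through a De Bruijn-type graph on $2^{\IDBpi}$-windows).

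Now $\mathcal{A}$ is assembled as follows. Reading $\alpha$ left to right, it keeps in its state the last $O(|\pi|)$ scanned letters of $\Omega$ --- enough to reconstruct, up to isomorphism, the fragment of $\Db_\alpha$ within radius $|\pi|$ of the current frontier point along the spine of $\Db_\alpha$ --- together with a guessed assignment of IDB labels to every (object, time) point in that fragment. Since $\Db_\alpha$ is built from the fixed finite set $\Gamma_\pi^r\cup\Gamma_\pi^i$ of rule bodies, there are only finitely many such fragment shapes and finitely many EDB-anchored ball shapes, so all checks of type (i) are precomputed and read off the state. For (ii): the spine of $\Db_\alpha$ is one-dimensional, so at any instant only $O(|\pi|)$ object timelines straddle the radius-$|\pi|$ neighbourhood of the frontier, and the stored IDB-window of each is exactly what the sliding-block check needs in order to continue as the frontier advances; the moment a timeline leaves that neighbourhood for good, the automaton verifies --- using the precomputed De Bruijn reachability --- that the still-uncovered bi-infinite part of that timeline can be completed, and rejects otherwise. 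The automaton accepts iff it consumes all of $\alpha$ with every pending check satisfied and with the guessed label at $(X,0)$ not containing $G$. An accepting run then yields (and, conversely, is obtained from) a locally consistent model $\E$ of $\pi$ and $\Db_\alpha$ avoiding $G(X)$ at $0$, so $L(\mathcal{A})=\notacceptom(\pi,G)$.

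The crux is the second paragraph: taming the fact that enrichments are infinite in time although $\alpha$ and the data are finite. What makes it work is that, with only \Next/\Prev, the EDB-free rules impose nothing more than a bounded-window safety constraint on each object timeline, so bi-infinite extendability becomes a finite-state test rather than a genuine Büchi condition; ensuring that this test needs only $O(|\pi|)$ timelines ``in flight'' at once is where connectedness of $\pi$ and the thinness of the spine of $\Db_\alpha$ are indispensable.
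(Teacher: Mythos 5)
Your construction is essentially the paper's: a nondeterministic automaton that guesses the IDB enrichment of $\Db_\alpha$ on the fly, uses connectedness of $\pi$ to reduce modelhood to checking radius-$|\pi|$ neighbourhoods, and replaces the bi-infinite timelines by a finite-state extendability test (the paper invokes LTL satisfiability where you use De~Bruijn-graph reachability, but for these \Next/\Prev Horn rules the two amount to the same check). One implementation detail needs fixing: storing only the last $O(|\pi|)$ scanned letters is not enough when a horizontal segment's $\bot$-marker lies far from the segment's start, because the preceding vertical segment becomes adjacent to the horizontal one only at that marker; the automaton must additionally carry along the neighbourhood of the last letter of the preceding vertical segment until $\bot$ is reached, as the paper's construction does explicitly.
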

%
%
%
\begin{corollary}\label{cor:body-language:acceptom}
	For any \lmdln-query $(\pi, G)$, the language $\acceptom(\pi, G)$ is regular.
\end{corollary}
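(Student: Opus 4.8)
The plan is to derive this immediately from Lemma~\ref{lm:body-language:notaccept} using closure of the class of regular languages under complementation. By definition, $\notacceptom(\pi, G)$ is the complement of $\acceptom(\pi, G)$, i.e.\ $\acceptom(\pi, G) = \Omega^* \setminus \notacceptom(\pi, G)$. Since $\Omega$ is a finite alphabet and, by Lemma~\ref{lm:body-language:notaccept}, $\notacceptom(\pi, G)$ is regular, its complement $\acceptom(\pi, G)$ is regular as well. Concretely, I would take a deterministic finite automaton for $\notacceptom(\pi, G)$ (obtained by determinising the automaton built in the proof of Lemma~\ref{lm:body-language:notaccept}) and swap accepting and non-accepting states; the resulting automaton recognises exactly $\acceptom(\pi, G)$.

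I expect no real obstacle here: all the combinatorial work — guessing an enrichment, checking the rules locally thanks to connectedness, and handling the infinite temporal dimension via \LTL{} satisfiability — has already been carried out in Lemma~\ref{lm:body-language:notaccept}, and the corollary adds no new content beyond a standard closure property.

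If a direct construction were preferred instead, one could observe that the set of \emph{correct} words over $\Omega$ is itself regular: correctness is the conjunction of a purely local constraint on each letter (each letter is a singleton vertical rule body, or a set of horizontal rule bodies possibly together with $\bot$ and/or $\top$) and a constraint on each horizontal segment (exactly one $\bot$ when the segment is preceded by a vertical segment, exactly one $\top$ when it is followed by one), both of which a finite automaton can track with a bounded amount of state — a flag recording whether the current horizontal segment has already seen a $\bot$, a $\top$, and whether it was preceded by a vertical segment. Intersecting this regular language of correct words with the language recognised by a ``goal-reached'' variant of the automaton from the proof of Lemma~\ref{lm:body-language:notaccept} would again yield $\acceptom(\pi, G)$. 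However, the complementation argument is shorter, and that is the one I would present.
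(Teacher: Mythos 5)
Your proposal is correct and is exactly the argument the paper intends: since $\notacceptom(\pi, G)$ is defined as the complement of $\acceptom(\pi, G)$, regularity of the former (Lemma~\ref{lm:body-language:notaccept}) plus closure of regular languages under complementation gives the corollary immediately, and the paper itself later relies on precisely this determinise-and-flip-final-states construction. No gap; the extra direct construction you sketch is unnecessary.
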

%



\section{Decidability for Connected Linear Queries with \Next}\label{sec:next}

We use the automata introduced in the previous section to prove a positive result.
\begin{theorem}\label{thm:next-and-prev}
	(i) Every connected \lmdln query is either in \ACz, or in $\ACCz\setminus\ACz$, or \NCo-complete, or \LogSpace-hard. (ii) It is \PSpace-complete to check whether such a query is \LogSpace-hard; whether it belongs to \ACz, $\ACCz$, or is \NCo-complete can be decided in \ExpSpace.
\end{theorem}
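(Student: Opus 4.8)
Both parts hinge on a language-theoretic criterion for \LogSpace-hardness, phrased over the regular languages $\Expand(\pi,G)$ and $\notacceptom(\pi,G)$ of Section~\ref{sec:automata}, generalising the boundedness criterion of~\cite{Cosmadakis-Monadic-DLog-Boundedness-Decidable}. Call a connected \lmdln-query $(\pi,G)$ \emph{vertically unbounded} if, for every $k$, there is $\alpha\in\Expand(\pi,G)$ having a prefix in $\notacceptom(\pi,G)$ that contains more than $k$ vertical letters. The criterion to establish is: $(\pi,G)$ is \LogSpace-hard iff it is vertically unbounded. The essential point is that we count \emph{vertical} letters rather than the length of the $\Omega$-word (which may grow merely because of \Next-chains along a single timeline): a non-accepting prefix with many vertical letters is exactly the witness of an honest, non-shortcuttable chain of object-to-object inferences, i.e.\ of genuine datalog recursion, whereas a bound on that number is the formal incarnation of the ``weak interaction'' between the relational and temporal dimensions that is specific to \Next/\Prev.

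\textbf{Part (i).} For the hard direction of the criterion (vertically unbounded $\Rightarrow$ \LogSpace-hard) I would pump the (exponential-size) automata for $\Expand$ and $\notacceptom$ to a uniform family $\alpha_m=\beta_0x^m\gamma\in\Expand(\pi,G)$ whose prefixes $\beta_0x^m\in\notacceptom(\pi,G)$ read at least $m$ vertical letters, and then encode a \LogSpace-hard problem into the databases $\Db_{\alpha_m}$ as in~\cite{Cosmadakis-Monadic-DLog-Boundedness-Decidable}: each copy of the pumped block $x$ carries one ``step'' of the instance along its vertical part, and the $\notacceptom$-status of the prefix guarantees that no partial computation already derives $G$, so $G$ is obtained at the designated point iff the instance is positive. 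Together with the \NLogSpace upper bound of Theorem~\ref{prop:preliminaries:mtdlog-complexity} this yields ``vertically unbounded $\Rightarrow$ \LogSpace-hard and in \NLogSpace''. For the converse (not vertically unbounded $\Rightarrow$ not \LogSpace-hard) I would use the bound $K$ on the number of vertical letters in non-accepting prefixes to decompose $(\pi,G)$ along the lines announced in the introduction: a successful expansion can be rearranged so that its object skeleton uses at most $K{+}1$ objects, an $\ACz$-bounded amount of guessing over the data, while everything unbounded happens along a single timeline; this produces a plain datalog query $(\pi_d,G)$ (bounded, hence in $\ACz$) and a plain \LTL query $(\pi_t,G)$, both of size $2^{\poly{|\pi|}}$, with the data complexity of $(\pi,G)$ equal to the maximum of theirs, i.e.\ to that of $(\pi_t,G)$. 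Since $(\pi_t,G)$ is a plain \LTL program, the Straubing classification of regular languages by circuit complexity applies and places $(\pi_t,G)$, hence $(\pi,G)$, in \ACz, in $\ACCz\setminus\ACz$, or \NCo-complete. The two cases of the criterion exhaust all connected \lmdln-queries, which is~(i).

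\textbf{Part (ii).} The criterion is a reachability question on the product of the two automata of Section~\ref{sec:automata}, refined to record whether vertical letters are read: $(\pi,G)$ is \LogSpace-hard iff the product $\mathcal{A}_{\Expand(\pi,G)}\times\mathcal{A}_{\notacceptom(\pi,G)}$ has a state $(p,q)$, reachable from the initial state, with $q$ accepting for $\notacceptom$, a self-loop at $(p,q)$ reading at least one vertical letter, and $p$ co-reachable to an accepting state of $\Expand$. Each condition is decidable in \NLogSpace in the size of the product, hence in \PSpace in $|\pi|$; and \PSpace-hardness is inherited, since on plain datalog programs \LogSpace-hardness becomes the complement of boundedness of connected linear monadic datalog queries, which is \PSpace-complete~\cite{Cosmadakis-Monadic-DLog-Boundedness-Decidable,DBLP:journals/ijfcs/Meyden00}. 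So deciding \LogSpace-hardness is \PSpace-complete. For the remaining three classes I would first run the \PSpace test; if $(\pi,G)$ is \LogSpace-hard none of them applies, and otherwise we are in the decomposed case, where the data complexity of $(\pi,G)$ is governed by the syntactic monoid of $\acceptom(\pi,G)$ (Corollary~\ref{cor:body-language:acceptom}): in \ACz iff its minimal automaton is counter-free, in $\ACCz$ iff its transition monoid has only solvable subgroups, \NCo-complete iff it contains an unsolvable subgroup. Each such check runs in \PSpace in the size of that automaton (as in the treatment of the \LTL case in~\cite{LTL-Rewritability-Checking}); since the automaton can be built of size $2^{\poly{|\pi|}}$, the whole procedure runs in \ExpSpace.

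\textbf{Where the difficulty is.} The heart of the proof is the criterion, and within it the ``not vertically unbounded $\Rightarrow$ not \LogSpace-hard'' direction: turning the combinatorial statement ``bounded number of vertical letters in non-accepting prefixes'' into a genuine structural decomposition that reduces answering $(\pi,G)$ to answering a plain \LTL query with neither loss nor gain of data complexity. This is the two-dimensional analogue of the delicate pumping argument of~\cite{Cosmadakis-Monadic-DLog-Boundedness-Decidable}, and it must be made to cooperate with the $\Omega$-encoding, in particular with prefixes that cut a horizontal segment in the middle and with the redundancy built into $\Expand$. A second, more clerical obstacle for Part~(ii) is to keep every automaton and monoid construction within a single exponential in $|\pi|$, so that the \PSpace and \ExpSpace bounds are not overshot.
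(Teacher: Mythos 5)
Your overall route is essentially the paper's: the same vertical (un)boundedness criterion over $\Expand(\pi,G)$ and $\notacceptom(\pi,G)$, the same pumping-plus-reachability reduction for \LogSpace-hardness of vertically unbounded queries (Lemma~\ref{lm:verticaly-unbounded-means-logspace-hard}), reduction of the vertically bounded case to membership in the regular language $\acceptom(\pi,G)$ --- whether phrased directly as in Lemma~\ref{lm:lower-classes} or via the decomposition of Proposition~\ref{prop:decomposition} you invoke --- followed by the trichotomy of \cite{Straubing94}, and, for part~(ii), a \PSpace{} reachability analysis of the product of the two exponential-size automata plus an \ExpSpace{} application of the procedure of \cite{LTL-Rewritability-Checking}. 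Two concrete points, however, do not go through as written.

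First, your \PSpace-hardness argument relies on boundedness of \emph{connected} linear monadic plain datalog queries being already known to be \PSpace-hard from \cite{Cosmadakis-Monadic-DLog-Boundedness-Decidable,DBLP:journals/ijfcs/Meyden00}. The hardness available there is for \emph{program} boundedness of \emph{disconnected} programs; hardness for predicate boundedness of connected queries is proved afresh in the paper (Lemma~\ref{lm:boundedness:hardness}, a reduction from space-bounded Turing machines using a sliding ``Finger'' IDB) and is explicitly flagged as a new side contribution, so this step needs a proof rather than a citation. Second, your product-automaton criterion is too narrow: you demand a loop with a vertical letter at the very pair $(p,q)$ at which the $\notacceptom$-component accepts. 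The pigeonhole/pumping argument only yields a cycle containing a vertical transition \emph{somewhere along a path} from the initial state to such a state $q$, followed by a \emph{vertical} transition out of $q$ and a path to acceptance of the $\Expand$-automaton (these are exactly the two features checked in the proof of Lemma~\ref{lm:vertical-boundedness:checking}); as stated, your test is sufficient but not necessary, so it could declare a vertically unbounded query bounded, and dropping the requirement that the continuation begin with a vertical body loses the guarantee that the rejected prefix is the longest one of its height. A minor wording issue besides: ``counter-free'' characterises \FOLess/star-freeness, not membership of a regular language in \ACz (which corresponds to quasi-aperiodicity); since you ultimately defer to the procedure of \cite{LTL-Rewritability-Checking}, as the paper does, this does not affect the \ExpSpace{} bound.
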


We first deal with \emph{(i)}. Intuitively, \LogSpace-hardness is a consequence of the growth of query expansions in the relational domain. If this growth is limited, the query essentially defines a certain temporal property, which can be checked in \NCo. Formally, given a word $\alpha \in \Omega^*$, we define the $\height{\alpha}$ as the number of vertical letters in~$\alpha$. Then, we call a query \textit{vertically unbounded} if for every $k$ there is a word $\alpha \in \Accept(\pi, G)$, $\height{\alpha} > k$, such that every prefix of $\alpha$ of height $k$ is in $\notacceptom(\pi, G)$. Otherwise, the query is called \textit{vertically bounded}.

Vertically unbounded queries can be shown to be \LogSpace-hard by a direct reduction from the undirected reachability problem. Namely, take the deterministic automata for $\notacceptom(\pi, G)$ and $\acceptom(\pi, G)$, supplied by Lemma \ref{lm:body-language:notaccept} and Corollary \ref{cor:body-language:acceptom}, and apply the pumping lemma to obtain words $\xi, \upsilon, \zeta, \gamma$ such that $\height{\upsilon} > 0$, $\xi\upsilon^i\zeta \in \notacceptom(\pi, G)$ and $\xi\upsilon^i\zeta\gamma \in \acceptom(\pi, G)$, for all $i \geqslant 0$. Then, given a graph $\Gmc$ and two nodes $s, t$, use copies of $\Db_\upsilon$ to simulate the edges of $\Gmc$, and attach $\Db_{\xi}$ to $s$ and $\Db_{\zeta\gamma}$ to $t$. Thus, you will obtain a temporal database $\Db_\Gmc$, where $\Db_\Gmc, \pi, 0 \mdl G(s)$ if and only if there is a path from $s$ to $t$.
\begin{restatable}{lemma}{LmVerticallyUnboundedMeansLogSpaceHard}\label{lm:verticaly-unbounded-means-logspace-hard}
	If a query is vertically unbounded, then it is \LogSpace-hard.
\end{restatable}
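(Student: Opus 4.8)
The plan is to carry out the two‑phase strategy indicated just before the lemma: first a pumping argument that isolates an \emph{iterable} gap between $\notacceptom(\pi,G)$ and $\acceptom(\pi,G)$, and then a first‑order reduction from undirected $s$--$t$ reachability (which is complete for $\LogSpace$) to the answering problem for $(\pi,G)$. For the pumping part I would fix, via Lemma~\ref{lm:body-language:notaccept} and Corollary~\ref{cor:body-language:acceptom}, deterministic automata for $\notacceptom(\pi,G)$ and $\acceptom(\pi,G)$ and let $\Amc$ be their product, with state set $Q$. Applying vertical unboundedness with $k=|Q|$ yields a word $\alpha$ with $\height{\alpha}>k$ such that $\alpha\in\acceptom(\pi,G)$ (recall $\Expand(\pi,G)\sbs\acceptom(\pi,G)$, since padding an expansion with extra rule bodies only enlarges the associated database and datalog certain answers are monotone) and such that the prefix $\beta$ of $\alpha$ obtained by cutting right after its $k$‑th vertical letter lies in $\notacceptom(\pi,G)$; write $\alpha=\beta\gamma$. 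Inspecting the states of $\Amc$ reached immediately after each of the $k$ vertical letters of $\beta$, two coincide, which splits $\beta=\xi\upsilon\zeta$ with $\height{\upsilon}>0$ and $\Amc$ looping on $\upsilon$; determinism then gives $\xi\upsilon^{i}\zeta\in\notacceptom(\pi,G)$ and $\xi\upsilon^{i}\zeta\gamma\in\acceptom(\pi,G)$ for all $i\geqslant 0$. (A short extra argument lets us take $\upsilon$ with zero net temporal displacement; alternatively one keeps track of the displacement and reduces instead from the equally $\LogSpace$‑hard reachability problem on \emph{layered} undirected graphs, which supplies a potential function for aligning timestamps.)

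For the reduction, observe that $\xi$, $\upsilon$, $\zeta\gamma$ each give a temporal database of size $O(|\pi|)$, and that, since $\height{\upsilon}>0$, the database $\Db_\upsilon$ carries two distinguished objects --- an \emph{entry} and an \emph{exit} pivot, where the rule‑body chain of $\upsilon$ enters and leaves. Given an undirected graph $\Gmc$ with nodes $s,t$, I would build $\Db_\Gmc$ by: taking, for every edge of $\Gmc$ and each of its two orientations $(u,v)$, a fresh copy of $\Db_\upsilon$ and identifying its entry pivot with $u$ and its exit pivot with $v$; identifying the exit pivot of $\Db_\xi$ with $s$; identifying the entry pivot of $\Db_{\zeta\gamma}$ with $t$; keeping all remaining objects of all copies distinct; and shifting the timestamps of the copies as dictated by the choice made above. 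The map $\Gmc\mapsto\Db_\Gmc$ is first‑order (indeed $\ACz$) computable, since the gadgets have size independent of $\Gmc$ and both the gluing and the constant‑offset time arithmetic are definable.

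Correctness splits in two. If $\Gmc$ has an $s$--$t$ path of length $i$, then the canonical database of $\xi\upsilon^{i}\zeta\gamma$ maps homomorphically into $\Db_\Gmc$ with its answer variable sent to $s$ --- routing $\Db_\xi$ to the $\xi$‑gadget, the $j$‑th copy of $\Db_\upsilon$ to the correctly oriented gadget of the $j$‑th edge of the path, and $\Db_{\zeta\gamma}$ to the $\zeta\gamma$‑gadget at $t$; as $\xi\upsilon^{i}\zeta\gamma\in\acceptom(\pi,G)$ we have $\Db_{\xi\upsilon^{i}\zeta\gamma},\pi,0\mdl G(X)$, hence $\Db_\Gmc,\pi,0\mdl G(s)$ by preservation of datalog entailment under homomorphisms. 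Conversely, if $\Db_\Gmc,\pi,0\mdl G(s)$, the expansion characterisation gives an expansion $w$ of $(\pi,G)$ whose canonical database maps into $\Db_\Gmc$ with answer variable at $s$; since the only connections between distinct gadgets in $\Db_\Gmc$ run through shared pivot objects, the pivot chain of $w$ traces a walk in $\Gmc$ from $s$. If this walk avoided $t$, it would stay inside the $s$‑component of $\Db_\Gmc$, which is built from $\Db_\xi$ and $\Db_\upsilon$‑gadgets only, so a walk there of length $i$ would produce a homomorphism of the canonical database of $w$ into $\Db_{\xi\upsilon^{i}\zeta}$ for $i$ large enough --- contradicting $\xi\upsilon^{i}\zeta\in\notacceptom(\pi,G)$. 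Hence $s$ and $t$ are connected in $\Gmc$, and the reduction is correct.

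\textbf{Main obstacle.} The delicate step is the converse direction: one must show that an arbitrary expansion of $(\pi,G)$ realised inside $\Db_\Gmc$ really behaves like a path‑tracing walk --- that its horizontal parts cannot leak out of a single gadget or jump between non‑adjacent vertices, and that the induced walk can be folded into some $\Db_{\xi\upsilon^{i}\zeta}$ in a way that legitimately invokes $\xi\upsilon^{i}\zeta\in\notacceptom(\pi,G)$. Pushing this folding through while keeping the temporal coordinates consistent --- and, symmetrically, laying out $\Db_\Gmc$ with consistent timestamps when $\upsilon$ has nonzero displacement --- absorbs most of the technical work; it is also the point where the restriction to \emph{undirected} reachability (with one gadget copy per orientation of each edge) is forced, since a CQ‑gadget $\Db_\upsilon$ cannot in general be traversed against its orientation.
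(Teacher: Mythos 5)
Your pumping phase matches the paper's: both take the deterministic automaton for $\acceptom(\pi,G)$ (whose final-state complement recognises $\notacceptom(\pi,G)$), apply vertical unboundedness with $k$ larger than the number of states, find a repeated state at two vertical letters, and extract $\xi,\upsilon,\zeta,\gamma$ with $\height{\upsilon}>0$, $\xi\upsilon^i\zeta\in\notacceptom(\pi,G)$ and $\xi\upsilon^i\zeta\gamma\in\acceptom(\pi,G)$ for all $i$. The divergence, and the gap, is in the reduction. You glue one copy of $\Db_\upsilon$ per orientation of each edge directly onto the vertices of $\Gmc$. This only produces a well-defined temporal database when the net temporal displacement $\ell_\upsilon$ of the pumped segment is zero: gluing the entry pivot of the $(u,v)$-copy to $u$ at time $\tau_u$ forces $\tau_v=\tau_u+\ell_\upsilon$, and the reversed copy forces $\tau_u=\tau_v+\ell_\upsilon$, so $\ell_\upsilon=0$. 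You wave at ``a short extra argument'' to arrange $\ell_\upsilon=0$, but the loop supplied by the pigeonhole argument carries whatever displacement it carries, and there is no reason a zero-displacement loop exists. Your fallback (layered undirected reachability) is in fact what the paper does, but you do not carry it out.

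The paper's construction resolves both of your acknowledged obstacles simultaneously by unrolling $\Gmc$ into $|V|$ layers $V_0,\dots,V_{|V|-1}$ (with self-loops assumed on every vertex), placing layer $i$ at time offset $i\cdot\wid{\upsilon}$, inserting an $\upsilon$-gadget between $u_i$ and $v_{i+1}$ (and between $v_i$ and $u_{i+1}$) for each edge $\{u,v\}$, attaching $\Db_\xi$ at $s_0$ and $\Db_{\zeta\gamma}$ at $p_{|V|-1}$. The per-layer time shift absorbs any displacement of $\upsilon$, and the monotone layer index is exactly the ``potential function'' your folding step is missing: any expansion realised in $\Db_\Gmc$ whose image misses $p_{|V|-1}$ lands, by construction, inside a homomorphic copy of $\Db_{\xi\upsilon^m}$ for some $m$, contradicting membership in $\notacceptom(\pi,G)$ — no collapsing of back-and-forth walks onto the path database $\Db_{\xi\upsilon^i\zeta}$ is ever needed. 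As written, your converse direction (folding an arbitrary walk in $\Gmc$, with repeated vertices and both traversal directions, into some $\Db_{\xi\upsilon^i\zeta}$ while keeping timestamps consistent) remains an open hole rather than a completed step, so the proposal does not yet constitute a proof.
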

Observe that $\Db, \pi, \ell \mdl G(d)$ whenever $\Db, \ell \mdl Q_\alpha(d)$ for some $\alpha \in \Accept(\pi, G)$. If $\Db, \ell \mdl Q_\alpha(d)$, then $x_1, y_1, \dots, y_n, x_n$, the vertical and horizontal segments of $\alpha$, appear in \Db, starting from $d$ at time $\ell$.
Expand $y_i$ to $y'_i$, the $\preccurlyeq$-maximal horizontal segment fitting in \Db, and let $\beta = x_1y'_1\dots x_ny'_n$. Then $\beta \in \Accept(\pi, G)$ and $\Db, \ell \mdl Q_\beta(d)$. Thus, to check $\Db, \pi, \ell \mdl G(d)$, it suffices to find vertical segments of some $\alpha \in \Accept(\pi, G)$, taking $\preccurlyeq$-maximal horizontal segments, and check that $\beta \in \Accept(\pi, G)$. If $(\pi, G)$ is vertically bounded, there are finitely many vertical segments of interest. Finding vertical segments, as well as extracting $\preccurlyeq$-maximal horizontal ones, can be done by an \ACz circuit.
\begin{restatable}{lemma}{LmVerticallyBoundedMeansACz}\label{lm:lower-classes}
	If $(\pi, G)$ is vertically bounded, then the data complexity of $(\pi, G)$ coincides with that of checking membership in $\acceptom(\pi, G)$, modulo reductions computable in \ACz.
\end{restatable}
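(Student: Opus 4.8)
The plan is to exhibit two reductions, both computable in \ACz: one from the answering problem for $(\pi,G)$ to membership in $\acceptom(\pi,G)$, and one in the opposite direction. Since $\acceptom(\pi,G)$ is regular (Corollary~\ref{cor:body-language:acceptom}), its membership problem lies in one of $\ACz$, $\ACCz\setminus\ACz$, or is $\NCo$-complete, and each of these classes is closed under \ACz many-one reductions and under \ACz Turing reductions with polynomially many parallel oracle calls; so the two reductions pin the data complexity of $(\pi,G)$ to the class of $\acceptom(\pi,G)$-membership. The reduction from $\acceptom(\pi,G)$-membership to answering is essentially the definition of $\acceptom(\pi,G)$: on input $\alpha\in\Omega^*$ one first tests in \ACz whether $\alpha$ is \emph{correct} (a local, bounded-counting condition: every letter is a singleton or a set of horizontal bodies optionally enriched with $\bot,\top$, and every horizontal segment adjacent to a vertical one carries exactly one $\bot$, resp.\ exactly one $\top$); a non-correct $\alpha$ is sent to a fixed non-answer instance, and a correct $\alpha$ to $(\Db_\alpha,0,X)$, whence $\langle X,0\rangle$ is a certain answer to $(\pi,G)$ over $\Db_\alpha$ iff $\alpha\in\acceptom(\pi,G)$.

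For the other reduction I would start from the equivalence $\Db,\pi,\ell\mdl G(d)$ iff there is $\alpha\in\acceptom(\pi,G)$ with $\Db,\ell\mdl Q_\alpha(d)$: the ``if'' part is the observation preceding the lemma, and ``only if'' follows from the expansion characterisation of Section~\ref{sec:preliminaries:mtdlog:expansions} by taking $\alpha=[w]$ for an expansion $w\in\expand(\pi,G)$, since then $Q_{[w]}$ is equivalent to $w$ and $\Db_{[w]},\pi,0\mdl G(X)$. Then I would shrink the witness to bounded height using vertical boundedness. Fix $k$ witnessing vertical boundedness; a witness of the form $[w]$ as above lies in both $\acceptom(\pi,G)$ and $\Expand(\pi,G)$ (as $[w]\preccurlyeq[w]$), so if $\height\alpha>k$ then vertical boundedness supplies a word-prefix $\alpha'$ of $\alpha$ with $\height{\alpha'}=k$ and $\alpha'\in\acceptom(\pi,G)$. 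Being a prefix, $Q_{\alpha'}$ is a subquery of $Q_\alpha$ that retains the answer variable $X$ and the origin, so $\Db_{\alpha'}$ embeds origin-preservingly into $\Db_\alpha$, and hence (Lemma~\ref{lm:preliminaries:tcq-next-homomorphisms}) $\Db,\ell\mdl Q_{\alpha'}(d)$. Therefore $\Db,\pi,\ell\mdl G(d)$ iff there is $\alpha\in\acceptom(\pi,G)$ with $\height\alpha\le k$ and $\Db,\ell\mdl Q_\alpha(d)$.

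It remains to decide this bounded condition by an \ACz circuit with oracle access to $\acceptom(\pi,G)$-membership. As $\height\alpha\le k$ and vertical letters are singletons over the finite set of recursive bodies of $\pi$, the vertical skeleton of such an $\alpha$ ranges over a set $\mathcal X$ of size depending only on $\pi$. The circuit runs in parallel over every skeleton in $\mathcal X$ and over each of the polynomially many ways to embed its vertical segments into $\Db$ starting at $(d,\ell)$; for such a placement it extracts the $\preccurlyeq$-maximal horizontal segments of $\Db$ linking consecutive vertical anchors---putting $\bot,\top$ at the anchor timestamps and widening each horizontal letter by all horizontal bodies locally satisfied in $\Db$---thereby producing a single correct word $\beta$, and then queries whether $\beta\in\acceptom(\pi,G)$; it accepts iff some branch's query accepts. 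Correctness uses that $\acceptom(\pi,G)$ is $\preccurlyeq$-upward closed among correct words (if $\alpha\preccurlyeq\beta$ then $\Db_\alpha$ embeds into $\Db_\beta$, so $\Db_\beta,\pi,0\mdl G(X)$ follows from $\Db_\alpha,\pi,0\mdl G(X)$): whenever a bounded-height witness fits in $\Db$ at $(d,\ell)$, its $\preccurlyeq$-maximal-in-$\Db$ horizontal extension still fits, still lies in $\acceptom(\pi,G)$, and is one of the $\beta$'s tested by the circuit.

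The step I expect to be the main obstacle is making the two ``\ACz'' claims genuinely airtight. On the one hand, the bounded-height reduction needs care: one must check that the height-$k$ prefix delivered by vertical boundedness is a \emph{correct} word (word-prefixes of correct words need not be, because a horizontal segment may be truncated away from its $\top$), and that the embedding $\Db_{\alpha'}\hookrightarrow\Db_\alpha$ really preserves timestamps together with the distinguished object. On the other hand, the \ACz extraction of the $\preccurlyeq$-maximal horizontal segments is delicate: the placement of the vertical anchors and the extent of the horizontal fills are coupled through the $\bot/\top$ markers, so one has to argue that enumerating all anchor placements and then filling horizontally to the maximum nevertheless exhausts, up to $\preccurlyeq$, every accepting word $\beta$ that embeds into $\Db$ at $(d,\ell)$---which is exactly where the ``redundant'' slack built into $\Expand(\pi,G)$ and the $\preccurlyeq$-upward closure of $\acceptom(\pi,G)$ are used.
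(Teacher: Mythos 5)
Your proposal is correct and follows essentially the same route as the paper's proof: the easy \ACz reduction from $\acceptom(\pi,G)$-membership to answering via $\Db_\alpha$, and, in the other direction, the characterisation of certain answers by bounded-height vertical skeletons (finitely many, by vertical boundedness) completed with $\preccurlyeq$-maximal horizontal segments extracted from $\Db$, relying on the $\preccurlyeq$-upward closure of $\acceptom(\pi,G)$ and an \ACz search over placements. The subtleties you flag (correctness of height-$k$ prefixes, the coupling of $\bot/\top$ anchors with the horizontal fill) are real but are glossed over in the paper's own argument as well.
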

Recall that $\acceptom(\pi, G)$ is a regular language, so checking membership in it is either in \ACz, or $\ACCz\setminus\ACz$, or \NCo-complete \cite{Straubing94}. This settles the part \emph{(i)} of Theorem \ref{thm:next-and-prev}: a query is either vertically unbounded and thus \LogSpace-hard, or vertically bounded and thus belongs to one of the three classes mentioned above.

It remains to address the part \emph{(ii)} of Theorem \ref{thm:next-and-prev}. 
By a careful analysis of the proof of Lemma \ref{lm:body-language:notaccept} one can show that $\notacceptom(\pi, G)$ is recognised by a nondeterministic automaton of size $2^{\poly{|\pi|}}$. Further, checking whether its language belongs to $\ACz$, $\ACCz\setminus\ACz$, or is \NCo-complete, can be done via the polynomial-space procedure developed in \cite{LTL-Rewritability-Checking}. Thus, given a vertically bounded query, its data complexity can be established in exponential space.

For the vertical boundedness itself, note that substituting $\Accept(\pi, G)$ with $\Expand(\pi, G)$ in the respective definition preserves all the results proven so far. For $\Expand(\pi, G)$, we can get from Lemma \ref{lm:body-language:expand} an exponential-size one-way automaton. Checking that the query is vertically unbounded can be done in nondeterministic space logarithmic in the size of the automata for $\Expand(\pi, G)$ and $\Notaccept(\pi, G)$, as it boils down to checking reachability in their Cartesian product. 

\begin{restatable}{lemma}{LmVerticallyBoundedChecking}\label{lm:vertical-boundedness:checking}
	Checking if a connected \lmdln query is vertically bounded is in \PSpace.
\end{restatable}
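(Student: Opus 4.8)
The plan is to decide the complementary property, vertical \emph{un}boundedness; since $\PSpace$ is closed under complement (and, by Savitch, equals nondeterministic polynomial space), this is enough. Throughout I work with the variant of the definition in which $\Accept(\pi,G)$ is replaced by $\Expand(\pi,G)$ — legitimate by the observation recorded just before the lemma — so that I may use the one-way automaton $\mathcal{A}_E$ for $\Expand(\pi,G)$ provided by Lemma~\ref{lm:body-language:expand} together with the nondeterministic automaton $\mathcal{A}_N$ for $\notacceptom(\pi,G)$ provided by Lemma~\ref{lm:body-language:notaccept}. The quantitative input I rely on is that both automata have $2^{\poly{|\pi|}}$ states — for $\mathcal{A}_N$ this is exactly the bound extracted for part~(ii) of Theorem~\ref{thm:next-and-prev} — while each individual state admits a description of size $\poly{|\pi|}$ (a crossing sequence of IDBs of $\pi$ for $\mathcal{A}_E$; a bounded-radius Gaifman neighbourhood of the guessed enrichment together with the \LTL bookkeeping for $\mathcal{A}_N$), the initial and final predicates and the transition relation being decidable in time $\poly{|\pi|}$ from such descriptions; a letter of $\Omega$ is a subset of $\Gamma^r_\pi\cup\Gamma^i_\pi\cup\{\bot,\top\}$ and hence also of size $\poly{|\pi|}$.

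The reformulation proceeds in two steps. First, $\notacceptom(\pi,G)$ is closed under taking prefixes: if $\alpha'$ is a prefix of $\alpha$ then $\Db_{\alpha'}$ is a subinstance of $\Db_\alpha$, so by monotonicity of datalog $\alpha\in\notacceptom(\pi,G)$ implies $\alpha'\in\notacceptom(\pi,G)$; consequently the clause ``every prefix of $\alpha$ of height $k$ lies in $\notacceptom(\pi,G)$'' is equivalent to ``the \emph{longest} prefix of $\alpha$ of height $k$ — the one ending right before the $(k+1)$-th vertical letter — lies in $\notacceptom(\pi,G)$'', and, moreover, the clause for a given $k'$ implies the clause for every $k\le k'$, so vertical unboundedness is the same as the clause holding for arbitrarily large $k$. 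Second, I claim that $(\pi,G)$ is vertically unbounded iff there are states $p$ of $\mathcal{A}_E$ and $\rho$ of $\mathcal{A}_N$ such that $(p,\rho)$ is reachable in $\mathcal{A}_E\times\mathcal{A}_N$ from the pair of initial states, there is a loop $(p,\rho)\to^{v}(p,\rho)$ whose first letter is vertical, $p$ reaches a final state of $\mathcal{A}_E$ by a word whose first letter is vertical, and $\rho$ reaches a final state of $\mathcal{A}_N$. For the ``if'' direction, let $u$ be a word reaching $(p,\rho)$, $w$ the word from $p$ witnessing the third condition, and $w'$ the word from $\rho$ witnessing the fourth; then for each $i\ge 1$ the word $uv^iw$ lies in $\Expand(\pi,G)$ and has height greater than $k_i:=\height{uv^i}$, while $\mathcal{A}_N$ accepts $uv^iw'$, so $uv^i$ — and with it every prefix of $uv^iw$ of height $k_i$, by prefix-closure — lies in $\notacceptom(\pi,G)$; thus $uv^iw$ witnesses the clause for $k_i$, and $k_i\to\infty$. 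For the ``only if'' direction, one takes a witness $\alpha$ of height exceeding the number $2^{\poly{|\pi|}}$ of product configurations, fixes an accepting run of $\mathcal{A}_E$ on $\alpha$ and an accepting run of $\mathcal{A}_N$ on the longest height-$k$ prefix of $\alpha$, records the resulting pair of states right before each of the first $k+1$ vertical letters, and invokes the pigeonhole principle to cut $\alpha$ into the required $u$, $v$, $w$, with $w'$ the tail of that longest prefix.

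The reformulated condition then amounts to guessing the polynomial-size pair $(p,\rho)$ and checking a constant number of reachability assertions — reaching $(p,\rho)$ from the initial configuration of $\mathcal{A}_E\times\mathcal{A}_N$, returning from $(p,\rho)$ to itself by a path starting with a vertical letter, reaching a final state of $\mathcal{A}_E$ from $p$ by such a path, and reaching a final state of $\mathcal{A}_N$ from $\rho$. Each is reachability in a graph on $2^{\poly{|\pi|}}$ vertices whose vertices are of polynomial size and whose edge relation is decidable in polynomial time on the fly, hence solvable nondeterministically in space $\poly{|\pi|}$; keeping $(p,\rho)$ fixed and reusing space across the assertions, the whole test runs in nondeterministic polynomial space. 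Therefore vertical unboundedness, and hence vertical boundedness, is decidable in $\PSpace$.

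I expect the main obstacle to be the equivalence in the second step. The ``only if'' pumping must be carried out on vertical-letter positions only: horizontal segments of words in $\Expand(\pi,G)$ can be arbitrarily long and $\Expand(\pi,G)$ is closed only under \emph{extending} them, so a witness cannot be pruned to shrink its horizontal parts; and one must verify that a single repeated product configuration simultaneously delivers the loop in $\mathcal{A}_E$, the loop in $\mathcal{A}_N$, and both continuations $w,w'$, which works precisely because the $\mathcal{A}_N$-run is taken on a prefix while the $\mathcal{A}_E$-run is taken on the whole witness, with prefix-closure of $\notacceptom(\pi,G)$ bridging the two. The succinctness of $\mathcal{A}_E$ and $\mathcal{A}_N$ needed for the space bound is a routine re-inspection of the constructions behind Lemmas~\ref{lm:body-language:expand} and~\ref{lm:body-language:notaccept}, of which the part concerning the size of $\mathcal{A}_N$ was already carried out for part~(ii) of Theorem~\ref{thm:next-and-prev}.
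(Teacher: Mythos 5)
Your proof is correct and takes essentially the same route as the paper's: both work with the $\Expand$-variant of the definition, form the product of the automata for $\Expand(\pi,G)$ and $\notacceptom(\pi,G)$ supplied by Lemmas~\ref{lm:body-language:expand} and~\ref{lm:body-language:notaccept}, characterise vertical unboundedness via a reachable product state admitting a cycle through a vertical letter together with continuations to the respective accepting states, and decide this condition by on-the-fly reachability over the succinctly represented ($|\pi|$-manipulatable) automata, concluding via \NPSpace $=$ \PSpace. The only slip is the parenthetical claim that the transition relations are decidable in polynomial \emph{time}: for the $\notacceptom$-automaton they are only decidable in polynomial \emph{space} (legality checking invokes \LTL satisfiability), which is exactly what manipulatability provides and still suffices for your space bound.
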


For the lower bound, we show that checking boundedness is already \PSpace-hard for connected linear monadic queries in plain datalog. In fact, \PSpace-hardness was proved in \cite{Cosmadakis-Monadic-DLog-Boundedness-Decidable} for \textit{program boundedness} of disconnected programs. A program $\pi$ is called bounded if $(\pi, G)$ is bounded for every IDB $G$ in $\pi$. We were able to regain the connectedness of $\pi$ by focusing on query boundedness (also called \textit{predicate boundedness}) instead. The idea combines that of \cite{Cosmadakis-Monadic-DLog-Boundedness-Decidable} with that of Section \ref{sec:diamond-undecidability}: define an IDB $F$ that slides along a computation, this time of a space-bounded Turing machine, looking for an erroneous transition.

\begin{restatable}{lemma}{LmBoundednessPSpaceHard}\label{lm:boundedness:hardness}
	Deciding boundedness of connected linear monadic queries in plain datalog is \PSpace-hard.
\end{restatable}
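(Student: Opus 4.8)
The plan is to reduce from the halting problem for deterministic Turing machines running in polynomial space, which is \PSpace-complete (it is in \PSpace by simulating the machine with a polynomially long step counter, and \PSpace-hard because, given a \PSpace machine $N$ and input $x$, the machine that runs $N$ on $x$ and loops iff $N$ accepts halts on $x$ iff $N$ rejects $x$, and \PSpace is closed under complement). Given a polynomial-space machine $M$ and an input $x$ I would construct in polynomial time a connected linear monadic plain datalog query $(\pi_{M,x}, G)$ that is bounded if and only if $M$ halts on $x$; with the known \PSpace upper bound this gives \PSpace-completeness, but here only the hardness is needed.

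The construction mimics the pattern of Section~\ref{sec:diamond-undecidability}, with the space-bounded machine $M$ in place of the $2$-counter machine and without temporal operators. An existentially quantified data instance represents a computation tableau of $M$ on $x$: its objects are the cells of the tableau, arranged in rows (one per configuration) of $p(|x|)$ columns, and EDB relations encode horizontal adjacency inside a row, vertical adjacency between a cell and the corresponding cell one configuration earlier, the tape symbol stored in a cell, and the position and control state of the head, so that ``configuration $i{+}1$ follows from configuration $i$ by the transition function of $M$'' becomes a local, polynomial-size tableau-window condition. The IDBs of $\pi_{M,x}$ are monadic predicates $F_{j,q,\dots}$ indexed by a column $j \leqslant p(|x|)$ and a control state $q$ (polynomially many in total); a recursive rule slides $F$ one cell to the right, wrapping around to the first cell of the next row at the end of a row, its body being a connected conjunctive query of size $O(p(|x|))$ that follows the vertical edges back to the previous row, checks the tableau window there, and fires the appropriate successor IDB---or, exactly as in Section~\ref{sec:diamond-undecidability}, fires when it spots a cell \emph{violating} the transition, or an ill-formed configuration (two symbols in a cell, two heads, and so on). Initialisation rules pin down the unique initial configuration (tape holding $x$ padded with blanks, head in the first cell, state $q_0$) and $G$ is derivable only once an expansion has traced a complete computation, together with a violation at its far end, back to this initial configuration.

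For correctness I would invoke the unboundedness criterion of~\cite{Cosmadakis-Monadic-DLog-Boundedness-Decidable} recalled in Section~\ref{sec:automata}. An expansion of $(\pi_{M,x}, G)$ has a long prefix outside $\accept(\pi_{M,x}, G)$ exactly when the data instance it forces to exist describes a well-formed, locally consistent computation prefix, and---because $M$ is deterministic and its initial configuration is hard-coded---every such prefix is a genuine prefix of the computation of $M$ on $x$. Hence if $M$ halts on $x$ after $m$ steps, no useful expansion exceeds a length depending only on $m$, so $(\pi_{M,x}, G)$ is bounded (in fact in \ACz); and if $M$ does not halt, its computation produces well-formed prefixes of every length, hence useful expansions of every length, and the criterion makes the query unbounded. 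Thus boundedness of $(\pi_{M,x},G)$ decides whether $M$ halts on $x$, and \PSpace-hardness follows.

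The hard part is the one that separates this from the disconnected, program-boundedness construction of~\cite{Cosmadakis-Monadic-DLog-Boundedness-Decidable}: verifying a transition inherently couples a configuration with its successor and every cell with its counterpart, yet we must keep every rule body connected (in its Gaifman graph) and every IDB monadic. The remedy, taken from Section~\ref{sec:diamond-undecidability}, is to make all the required adjacencies explicit EDB edges and to let each rule body mention exactly the edges it needs, while the head column lives in the polynomially many IDB indices rather than in the data. One then has to prevent a ``fake'' data instance with misaligned vertical edges from producing arbitrarily long useful expansions when $M$ halts; this is handled by ill-formedness rules in the spirit of the representation violations of Section~\ref{sec:diamond-undecidability}, by a commuting-grid gadget forcing the vertical edge out of a cell to land on the horizontal successor of the vertical image of its left neighbour, and by anchoring the whole grid to the unique initial configuration. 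Checking that these gadgets neither manufacture short accepted prefixes of long expansions when $M$ loops nor admit long useful expansions when $M$ halts is routine but is where the care is needed.
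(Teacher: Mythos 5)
Your overall strategy (reduce from acceptance/halting of a polynomial-space machine, build a connected linear monadic program whose goal is derivable only by tracing a violation back to a hard-coded initial configuration, then apply the expansion-based unboundedness criterion) is the same as the paper's, but the central technical device is different and, as described, does not work. You compare consecutive configurations via an explicit vertical EDB relation and a ``commuting-grid gadget''. In a \emph{linear monadic} program, consecutive rule bodies in an expansion share exactly one variable (the frontier variable carrying the monadic IDB), so every slide step introduces \emph{fresh} witnesses for the vertical images it consults: the ``previous row'' your finger checks against is a sequence of phantom dominoes that are not the cells it actually traversed one row earlier. No gadget expressible in datalog can repair this, because datalog cannot enforce functionality of the vertical edge or equate separately introduced witnesses, and violation rules cannot detect the mismatch either: there is no inequality, and a rule can only flag two vertical images with \emph{different labels}, whereas the cheat uses consistently labelled phantoms whose content simply deviates from the genuine previous row, to which they are not linked by any pattern your rules inspect. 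As a consequence, even when $M$ halts on $x$ there are arbitrarily long expansions tracing ``fake'' runs (the state/column information in the IDB indices does not constrain the tape content, which is read off the phantoms), and their long prefixes contain no detectable violation and start with the genuine initial configuration, hence are not in $\accept(\pi_{M,x},G)$. By the criterion you invoke, the query is then unbounded irrespective of whether $M$ halts, so the reduction fails exactly at the point you label ``routine but where the care is needed''.

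The paper avoids the grid altogether: the computation is encoded on a \emph{single} labelled chain (unary predicates $Q_a$ over a binary \textit{Next} relation, configurations of length $p(|x|)$ concatenated and separated by $\#$), so the cell corresponding to a given cell in the next configuration sits at a fixed distance of about $p(|x|)$ \emph{along the same chain}. A single monadic finger $F$ slides one \textit{Next}-step at a time (so the chain of frontier variables is itself the data path, with no phantom copies), the goal rules check the initial configuration by paths of length at most $p(|x|)+1$, and the violation (initialisation) rules for $F$ have connected bodies that are \textit{Next}-paths of length $p(|x|)+2$ comparing the two windows at that fixed offset; a shortest derivation then forces the labelling to be a genuine prefix of the computation of $M$ on $x$, giving boundedness iff $M$ halts. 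If you replace your vertical-edge checks by such fixed-distance window checks along the snake-ordered chain (making the vertical EDB and the grid gadget unnecessary), your argument goes through; as written, the key mechanism is missing.
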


Lemmas \ref{lm:verticaly-unbounded-means-logspace-hard} and \ref{lm:lower-classes} bring us to an interesting consideration: to understand the data complexity of a given query one should analyse its behaviour in the relational domain separately from that in the temporal domain. This can be given a precise sense as follows.
\begin{restatable}{proposition}{PropDecomposition}\label{prop:decomposition}
	For every connected \lmdln query $(\pi, G)$ there exist a plain datalog query $(\pi_d, G)$ and a plain \LTL query $(\pi_t, G)$, such that:
	\begin{enumerate}
		\item $(\pi, G)$ is vertically bounded if and only if $(\pi_d, G)$ is bounded;
		\item if $(\pi, G)$ is vertically bounded then its data complexity coincides with that of $(\pi_t, G)$.
	\end{enumerate}
\end{restatable}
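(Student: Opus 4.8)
The plan is to make precise the intuition from the discussion after Lemma~\ref{lm:lower-classes}: the relational behaviour of $(\pi,G)$ is captured by the ``vertical'' structure of $\Accept(\pi,G)$, and the temporal behaviour by checking membership of a word in $\acceptom(\pi,G)$. I will build $(\pi_d,G)$ and $(\pi_t,G)$ by projecting away one of the two dimensions, using the languages $\Expand(\pi,G)$ and $\acceptom(\pi,G)$ together with their regularity (Lemma~\ref{lm:body-language:expand}, Lemma~\ref{lm:body-language:notaccept}, Corollary~\ref{cor:body-language:acceptom}) as the glue.

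\textbf{The plain datalog query $(\pi_d,G)$.} First I would take the deterministic finite automaton $\Amc_{\Expand}$ recognising $\Expand(\pi,G)$ and the deterministic automaton $\Amc_{\acc}$ recognising $\acceptom(\pi,G)$ over the alphabet $\Omega$, and form a suitable synchronous product so that, reading a correct word $\alpha$, the product tracks both (a) whether $\alpha\in\Expand(\pi,G)$ and (b) for the current prefix, whether it lies in $\notacceptom(\pi,G)$. I then collapse each \emph{horizontal segment} to a single transition: since vertical boundedness, as noted right after Lemma~\ref{lm:vertical-boundedness:checking}, may equivalently be stated with $\Expand(\pi,G)$ in place of $\Accept(\pi,G)$, only the \emph{sequence of vertical letters interleaved with ``horizontal jumps''} matters, and the horizontal jumps only affect which product states are reachable. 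Concretely, I turn the product automaton into a connected linear monadic \emph{plain datalog} program $\pi_d$ whose IDB predicates are the product states (plus a goal $G$), whose recursive rules mimic vertical-letter transitions (one object step each, using fresh EDB symbols to encode the letter), and whose initialisation rules correspond to transitions into accepting product states via a horizontal tail; the ``prefix not in $\notacceptom$'' bookkeeping is encoded into the choice of which initialisation rule fires. The construction is arranged so that an expansion of $(\pi_d,G)$ of length $k{+}1$ with a prefix of length $k$ not accepted corresponds exactly to a word $\alpha\in\Expand(\pi,G)$ of height $>k$ all of whose height-$k$ prefixes lie in $\notacceptom(\pi,G)$. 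By the unboundedness criterion of \cite{Cosmadakis-Monadic-DLog-Boundedness-Decidable} recalled in Section~\ref{sec:automata}, $(\pi_d,G)$ is unbounded iff $(\pi,G)$ is vertically unbounded, which gives item~1.

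\textbf{The plain \LTL query $(\pi_t,G)$.} Assuming $(\pi,G)$ is vertically bounded, Lemma~\ref{lm:lower-classes} already tells us the data complexity of $(\pi,G)$ equals that of checking membership in the regular language $\acceptom(\pi,G)$ (modulo \ACz-reductions). By \cite{Straubing94}, the complexity of membership in a regular language $L\subseteq\Omega^*$ is determined by $L$ alone and is realised by a propositional \LTL specification over an alphabet read as a sequence of time points. So for item~2 I would encode $\acceptom(\pi,G)$ as a \emph{plain \LTL} query $(\pi_t,G)$: introduce one unary EDB predicate per element of $\Omega$ (and auxiliaries for the endpoints of the word), use recursive rules with $\Next/\Prev$ to slide an IDB predicate along the timeline according to the transitions of the deterministic automaton $\Amc_{\acc}$, exactly as in the standard simulation of a DFA by a linear monadic \LTL datalog program, and let $G$ fire when the final state is reached at a correctly delimited word boundary. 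Since the data complexity of answering a plain \LTL query over finite temporal data is precisely the circuit complexity of the corresponding regular language, and \ACz-reductions do not change membership in \ACz, $\ACCz$ or \NCo{}, the data complexity of $(\pi_t,G)$ coincides with that of checking membership in $\acceptom(\pi,G)$, hence with that of $(\pi,G)$. Note that both $\pi_d$ and $\pi_t$ are of size $2^{\poly{|\pi|}}$ because the automata are, consistent with the paper's remark that the decomposed parts are substantially larger than $(\pi,G)$.

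\textbf{Main obstacle.} The delicate point is the faithfulness of the \emph{horizontal collapse} in the construction of $(\pi_d,G)$: a single horizontal segment of $\alpha$ can be arbitrarily long (the $\preccurlyeq$-padding), so a naive collapse would lose information about whether an intermediate prefix — one that ends \emph{inside} a horizontal segment — is in $\notacceptom(\pi,G)$. I would handle this by exploiting that, by definition, $\height{\cdot}$ counts only vertical letters, so a ``prefix of height $k$'' always ends at a vertical-letter boundary; hence it suffices for the product automaton to record the $\notacceptom/\acceptom$ status at those boundaries, and the horizontal material between two vertical letters can indeed be summarised by the induced state transition in $\Amc_{\Expand}\times\Amc_{\acc}$, as horizontal letters are never counted. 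Verifying that this summarised transition is itself expressible by connected linear monadic \emph{plain} datalog rules — in particular that connectedness is preserved when we replace a horizontal jump by EDB edges linking consecutive product-state objects — is the part I expect to require the most care, and it is exactly analogous to the connectedness-preservation argument used for Lemma~\ref{lm:boundedness:hardness}.
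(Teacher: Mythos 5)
Your construction is essentially the paper's: both $(\pi_t,G)$ and $(\pi_d,G)$ are obtained by simulating the deterministic automaton for $\acceptom(\pi,G)$ from Corollary~\ref{cor:body-language:acceptom} --- unary EDB letters with horizontal \Next-rules sliding a state IDB along the timeline for $\pi_t$, and binary EDBs for vertical letters with entire horizontal segments collapsed into state-jump rules for $\pi_d$ --- after which item~2 follows from Lemma~\ref{lm:lower-classes} together with \cite{Straubing94}, and item~1 from the plain-datalog (un)boundedness criterion, exactly as you propose. The only deviations are minor: the paper builds $\pi_d$ from the \acceptom-automaton alone and recovers the link to $\Expand(\pi,G)$ by a separate argument using the diameter of $\pi$ (rather than taking a product with the \Expand-automaton), and note that a prefix of height $k$ \emph{can} end inside a horizontal segment --- it is monotonicity (the longest such prefix is the binding one), not your boundary claim, that justifies recording the \notacceptom-status only at the positions immediately preceding vertical letters.
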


This highlights the `weak' kind of interaction between the two domains provided by the operators \Next and \Prev. Given the undecidability results of Section \ref{sec:diamond-undecidability}, no such or similar decomposition is possible for queries with \Df.

Technically, both $\pi_d$ and $\pi_t$ are obtained by simulating the deterministic automaton $\Apig$ for the language $\acceptom(\pi, G)$ provided by Corollary \ref{cor:body-language:acceptom}. In both programs, the IDBs correspond to the states of \Apig and EDBs to the letters of $\Omega$. For $\pi_t$ these EDBs are unary and all the rules are horizontal, so that the expansions unwind fully in the temporal domain. In the case of $\pi_d$, the EDBs are binary and every vertical transition of \Apig is a step by a binary relation, while the horizontal transitions are skipped (thus, vertical boundedness becomes just boundedness). In both programs, initialisation rules correspond to \Apig reaching an accepting state.

We conclude this section with a consideration on disconnected queries. In \cite{Cosmadakis-Monadic-DLog-Boundedness-Decidable}, deciding boundedness is based on the fact that $\accept(\pi, G)$ remains regular even when $\pi$ has disconnected rules. This is not the case for \lmdln, as can be seen from the following example.
\begin{example}\label{ex:disconnected}
	Consider the program $\pi$ of four rules:
\begin{align*}
	&G(X) \impd A(X) \wedge \Next D(X)
	&&D(X) \impd \Next D(X)\\
	&D(X) \impd R(X, Y) \wedge \Prev B(Y) \wedge \Prev D(Y)
	&&D(X) \impd A(Y) \wedge B(X)
\end{align*}
Let $B_1 = A(X) \wedge \Next D(X)$, $B_2 = \Next D(X)$, and $B_3 = R(X, Y) \wedge \Prev B(Y) \wedge \Prev D(Y)$. Then $B_1B_2^nB_3^m \in \accept(\pi, G)$, and, consequently, $\{\bot, B_1\}\{B_2\}^{n}\{\top\}\{B_3\}^{m} \in \Accept(\pi, G)$, whenever $m > n \geqslant 1$. This property is not recognisable by any finite state automaton. For more general classes of automata suitable to recognise $\accept(\pi, G)$ or $\Accept(\pi, G)$ in the disconnected setting, the properties that are to be checked along the lines of \cite{Cosmadakis-Monadic-DLog-Boundedness-Decidable}, such as language emptiness or finiteness, become undecidable. Therefore, disconnected queries possibly require a different approach for analysing the data complexity.
\end{example}



\section{Conclusions and Future Work}\label{sec:conclusions}

We have started investigating the complexity of determining the data complexity of answering monadic datalog queries with temporal operators.
For linear connected queries with operators \Next/\Prev, we have generalised the automata-theoretic technique of \cite{Cosmadakis-Monadic-DLog-Boundedness-Decidable}, developed originally for plain datalog, to establish an $\ACz/\ACCz /\NCo/\LogSpace/\NLogSpace$ classification of temporal query answering and proved that deciding \LogSpace-hardness of a given query is \PSpace-complete, while checking its membership in \ACz or \ACCz can be done in \ExpSpace. As a minor side product, we have established \PSpace-hardness of deciding boundedness of atemporal connected linear monadic datalog queries.
Rather surprisingly and in sharp contrast to the \Next/\Prev case, it turns out that checking (non-trivial) membership of queries with operators \Df/\Dp in the above complexity classes is undecidable. The results of this paper lead to a plethora of natural and intriguing open questions. Some of them are briefly discussed below.
\begin{enumerate}
	\item What happens if we disallow applications of $\Df/\Dp$ to binary EDB predicates in \lmdld-queries? We conjecture that this restriction  makes checking membership in the above complexity classes decidable. In fact, this conjecture follows from a positive answer to the next question.
	
	\item Can our decidability results for \lmdln be lifted to \mdln-queries? Dropping the linearity restriction in the atemporal case results in the extra data complexity class, \PTime, and the higher complexity, 2\ExpTime-completeness, of deciding boundedness. The upper bound was obtained using tree automata in \cite{Cosmadakis-Monadic-DLog-Boundedness-Decidable}, and we believe that this approach can be generalised to connected \mdln-queries in a way similar to what we have done above.
	
	\item On the other hand, dropping the connectedness restriction might turn out to be trickier, if at all possible, as shown by Example \ref{ex:disconnected}. Finding a new automata-theoretic characterisation for disconnected \lmdln-queries remains a challenging open problem.
	
	\item A decisive step in understanding the data complexity of answering queries mediated by a description logic ontology and monadic disjunctive datalog queries was made in~\cite{DBLP:journals/tods/BienvenuCLW14,DBLP:journals/lmcs/FeierKL19} by establishing a close connection with constraint satisfaction problems (CSPs). In our case,  quantified CSPs (see, e.g.,~\cite{DBLP:conf/focs/Zhuk24}) seem to be more appropriate. Connecting the two areas might be beneficial to both of them.

    \item In the context of streaming data, it would be interesting to  investigate the data complexity classes and the complexity of recognising them for datalog\textsl{MTL}-queries~\cite{DBLP:journals/jair/BrandtKRXZ18,DBLP:conf/ijcai/RyzhikovWZ19,DBLP:conf/aaai/WangHWG22}.
\end{enumerate}


%

\bibliographystyle{plainurl}
\bibliography{main}

\appendix


\section{Proofs for Section \ref{sec:preliminaries}} \label{sec:appendix}

%
\PropACz*
\begin{proof}
	Let $\Sigma$ be a database schema and $\Db$---a temporal database over $\Sigma$. Let $\Sigma' = \{R^{m + 1} | R^m \in \Sigma\}$, where the superscript $m$ denotes the arity of the relation symbol $R$. Then $\Smf_\Db$ is a two-sorted first-order structure over the signature $\Sigma' \cup \{<\}$ with the domain $\Delta_\Db \cup \tem(\Db)$, and such that $R(\bar d, \ell)$ is true in $\Smf_\Db$ if and only if $\bar d \in \Delta_\Db^m, \ell \in \tem(\Db)$ and $\Db, \ell \mdl R(\bar d)$. Moreover, $(\ell < \ell')$ is true in $\Smf_\Db$ whenever $\ell, \ell' \in \tem(\Db)$ and $\ell < \ell'$ in \Zbb. The truth relation $\mdl$ for two-sorted first-order formulas over $\Smf_\Db$ are defined in the expected way (see \cite{LTL-Rewritability} for details).
	
	 We show that there exists an \FOLess-formula $\psi_Q(\Xol, t)$ such that for any \Db, any $\bar d \in \Delta_\Db^{|\Xol|}$, and any $\ell \in \tem(\Db)$, $\Db, \ell \models Q(\bar d)$ if and only if $\psi_Q(\bar d, \ell)$ is true in $\mathfrak S_\Db$. For simplicity, we slightly abuse the notation and write $\Db \mdl \psi(\bar d, \ell)$. We first prove two auxiliary lemmas.
	
	\begin{lemma}\label{app:lm:loop}
		If $\p(\Zol)$ is defined by \eqref{bnf:preliminaries:tcq} and has $N$ temporal operators, then for any \Db with $\tem(\Db) = [l, r]$ and any $\bar d \in \Delta_\Db^{|\Zol|}$:
		\begin{align}
			&\Db, r + N + 1 \mdl \p(\bar d) \iff \Db, \ell \mdl \p(\bar d), \text{ for all $\ell > r + N$} \label{app:eq:loop:first}\\
			&\Db, l - N - 1 \mdl \p(\bar d) \iff \Db, \ell \mdl \p(\bar d), \text{ for all $\ell < l - N$} \label{app:eq:loop:second}
		\end{align}
		\begin{proof}
			The proof is by induction. For $N = 0$, $\p$ is a conjunction of atoms. Clearly, $\Db, \ell \not\mdl \p(\bar d)$ for any $\ell \notin [l, r]$. We justify the induction step for \eqref{app:eq:loop:first}, while the case of \eqref{app:eq:loop:second} is analogous. We do another induction on the structure of the formula. The case $\p = R(Z_1, \dots, Z_m)$ is trivial. In the case $\p = \p_1 \wedge \p_2$ by the induction hypothesis on $N$ everything works. For the temporal operators, we consider the most interesting case of $\p = \Dp \p_1$, while the other cases can be proved by a simple application of the induction hypothesis and are left to the reader.
			Let $\p = \Dp \p_1$. If $\Db, \ell \not\mdl \Dp\p_1(\bar d)$ for all $\ell > r + N$, we are done. Otherwise, let $\ell_0 > r + N$ be the least timestamp such that $\Db, \ell \mdl \Dp\p_1(\bar d)$. By the choice of $\ell_0$, $\Db, \ell_0 - 1 \mdl \p_1(\bar d)$, where $\ell_0 - 1 \geqslant r + N$. By the induction hypothesis, $\Db, r + N \mdl \p_1(\bar d)$, thus $\Db, r + N \mdl \p_1(\bar d)$, and therefore $\Db, \ell \mdl \Dp\p_1(\bar d)$ for $\ell > r + N$.
		\end{proof}
	\end{lemma}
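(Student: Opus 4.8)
The plan is to prove, in place of Lemma~\ref{app:lm:loop}, the slightly stronger assertion that is actually stable under induction: for every $\p(\Zol)$ built by~\eqref{bnf:preliminaries:tcq} with $N$ temporal operators, every \Db\ with $\tem(\Db)=[l,r]$ and every $\bar d\in\Delta_\Db^{|\Zol|}$, the Boolean map $\ell\mapsto\bigl(\Db,\ell\mdl\p(\bar d)\bigr)$ is \emph{constant} on the ray $\{\ell\in\Z:\ell>r+N\}$ and \emph{constant} on the ray $\{\ell\in\Z:\ell<l-N\}$. The two biconditionals of the lemma then follow at once: if $b$ is the constant value on $\{\ell>r+N\}$, then both ``$\Db,r+N+1\mdl\p(\bar d)$'' and ``$\Db,\ell\mdl\p(\bar d)$ for all $\ell>r+N$'' are equivalent to $b$, and symmetrically on the left ray.

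I would establish the strengthened claim by induction on the structure of $\p$, carrying out the right ray (the left one is the mirror image). If $\p=R(Z_1,\dots,Z_m)$ then $N=0$ and $\Db,\ell\mdl R(\bar d)$ forces $\ell\in[l,r]$, so the map is constantly $\bot$ on $\{\ell>r\}$. If $\p=\p_1\wedge\p_2$ with $\p_i$ carrying $N_i$ operators, then $N=N_1+N_2$, so by the hypothesis each $\p_i$ is constant on $\{\ell>r+N_i\}\supseteq\{\ell>r+N\}$, hence so is their conjunction. If $\p=\Next\p_1$ (symmetrically $\Prev\p_1$), then $\p_1$ has $N-1$ operators and so is constant on $\{\ell>r+N-1\}$; since $\Db,\ell\mdl\Next\p_1(\bar d)$ iff $\Db,\ell+1\mdl\p_1(\bar d)$ and $\ell\mapsto\ell+1$ maps $\{\ell>r+N\}$ into $\{\ell>r+N-1\}$, the value of $\Next\p_1$ is constant on $\{\ell>r+N\}$.

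The diamond cases are the only delicate ones and are where I expect the real work to lie. The useful extra structure is monotonicity: $\Db,\ell\mdl\D\p_1(\bar d)$ is non-increasing in $\ell$ while $\Db,\ell\mdl\Dp\p_1(\bar d)$ is non-decreasing, so each truth set is a downward- (resp. upward-) closed subset of $\Z$. For $\p=\D\p_1$ the hypothesis gives $\p_1$ a constant value $b$ on $\{\ell>r+N-1\}$: if $b=\top$ a future witness always exists and $\D\p_1$ holds everywhere, while if $b=\bot$ no point beyond $r+N-1$ witnesses $\D\p_1$, so $\D\p_1$ fails on $\{\ell>r+N-1\}$; either way it is constant there. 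The one case where the single biconditional of Lemma~\ref{app:lm:loop} is too weak as an induction hypothesis, and the reason for strengthening to eventual constancy, is $\p=\Dp\p_1$ on the right ray: a past-diamond evaluated at a late point can still reach back into the transient middle of the timeline. Here I use the hypothesis for $\p_1$ on \emph{both} rays: if $\p_1$ is true at arbitrarily small points (i.e. its left-ray value is $\top$) then $\Dp\p_1$ holds everywhere; otherwise $\p_1$ has a least moment of truth $\ell_0$, and $\p_1$'s right-ray constancy forces $\ell_0\le r+N$, so the upward-closed truth set of $\Dp\p_1$ is $\{\ell:\ell>\ell_0\}\supseteq\{\ell>r+N\}$ and $\Dp\p_1$ is constant there. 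The remaining threshold bookkeeping — each operator inflates the bound by at most one — is automatic from $N=N_1+N_2$ for $\wedge$ and $N=(N-1)+1$ for the unary operators.
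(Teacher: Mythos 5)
Your proof is correct and follows essentially the same route as the paper's: a structural induction establishing eventual constancy of the truth value on the two rays beyond $r+N$ and $l-N$, with $\Dp$ on the right ray (and its mirror image) as the only delicate case, settled by using the induction hypothesis for $\varphi_1$ to push a witness into the bounded region. Your explicit strengthening of the statement to constancy is precisely the reading of the induction hypothesis that the paper's own proof uses implicitly; the only (harmless) omission is the subcase where $\varphi_1$ is nowhere true, in which $\Dp\varphi_1$ is constantly false and the claim is immediate.
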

	
	\begin{lemma}\label{app:lm:rewriting}
		Let $\p(\Zol)$ be defined by \eqref{bnf:preliminaries:tcq}. For any subformula $\varkappa(\Zol)$ of $\varphi$, there exist formulas $\psi_\varkappa(\Zol, t)$  and $\psi_\varkappa^{i}(\Zol)$ for $i \in [-N - 1, \dots, -1] \cup [1, \dots, N + 1]$,
		such that for any \Db with $\tem(\Db) = [l, r]$ and any objects $\bar d \in \Delta_\Db^{|\Zol|}$ we have:
		\begin{align}
			&\Db, \ell \models \varkappa(\bar d) \iff \Db \models \psi_\varkappa(\bar d, \ell), \text{ for } l \leqslant \ell \leqslant r\\
			&\Db, (r + i) \models \varkappa(\bar d) \iff \Db \models \psi_\varkappa^i(\bar d), \text{ for } 1 \leqslant i \leqslant N + 1\\
			&\Db, (l + i) \models \varkappa(\bar d) \iff \Db \models \psi_\varkappa^i(\bar d), \text{ for } -N - 1 \leqslant i \leqslant 1
		\end{align}
	\end{lemma}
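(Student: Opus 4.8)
The plan is to construct $\psi_\varkappa(\Zol,t)$ and the formulas $\psi_\varkappa^{i}(\Zol)$ by induction on the structure of $\varkappa$ — following exactly the recipe already given for $\Next$ in the paragraph preceding the lemma — and to verify the three claimed equivalences by a simultaneous induction, using Lemma~\ref{app:lm:loop} to tame the temporal ``edges''. Throughout I treat $\min$, $\max$ and the successor predicate $t'=t+1$ as the $\FOLess$-definable formulas they abbreviate; since every construction below only combines formulas of these kinds by Boolean connectives, one quantifier over timestamps, and substitution of $\min$ or $\max$ for $t$, all resulting formulas are genuinely in $\FOLess$. For the base case $\varkappa=R(Z_1,\dots,Z_m)$ I set $\psi_R(\Zol,t)=R(\Zol,t)$ and $\psi_R^{i}(\Zol)=\bot$; this is correct because $\Db,\ell\mdl R(\bar d)$ fails for every $\ell\notin[l,r]$. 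For $\varkappa=\varkappa_1\wedge\varkappa_2$ I take the conjunction of the corresponding formulas (keeping the superscripts), and the equivalences are immediate from the induction hypothesis.

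For $\varkappa=\Next\varkappa'$ (and symmetrically $\Prev$) I would reuse verbatim the definitions already displayed in the text: $\psi_{\Next\varkappa'}(\Zol,t)=\exists t'\,\big((t'=t+1)\wedge\psi_{\varkappa'}(\Zol,t')\big)\vee\big((t=\max)\wedge\psi_{\varkappa'}^{1}(\Zol)\big)$, together with $\psi_{\Next\varkappa'}^{-1}(\Zol)=\psi_{\varkappa'}(\Zol,\min)$, $\psi_{\Next\varkappa'}^{N+1}(\Zol)=\psi_{\varkappa'}^{N+1}(\Zol)$, and $\psi_{\Next\varkappa'}^{i}(\Zol)=\psi_{\varkappa'}^{i+1}(\Zol)$ for all remaining $i$. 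Verifying the equivalences is a matter of tracking where the reference time point moves: from $\ell\in[l,r)$ the step lands inside $\tem(\Db)$ (first disjunct), from $\ell=r=\max$ it lands at $r+1$ (second disjunct, via $\psi_{\varkappa'}^{1}$), and from an offset $r+i$ it lands at $r+i+1$. The only subtle point is the ``top'' offset $r+N+1$, from which the step goes to $r+N+2$, outside the range of the superscripts: here Lemma~\ref{app:lm:loop} — applied to $\varkappa'$, whose number of temporal operators is $\le N$ — says the truth value of $\varkappa'$ is constant for all $\ell>r+N$, so it agrees at $r+N+1$ and $r+N+2$, and $\psi_{\varkappa'}^{N+1}$ is a correct stand-in. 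The same remark is what makes $\psi_{\varkappa'}^{N+1}$ (and, mirrored, $\psi_{\varkappa'}^{-N-1}$) usable as a catch-all value for arbitrarily distant timestamps.

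The \emph{main obstacle} is the case $\varkappa=\D\varkappa'$ (and symmetrically $\Dp$), since a single $\D$ quantifies over unboundedly many future points. The way around it is precisely the catch-all observation above: relative to a reference point, the future timestamps split into the visible/near ones — those in $\tem(\Db)$ together with the finitely many offsets $r+1,\dots,r+N$ — plus a far tail $\{\ell:\ell>r+N\}$ on which $\varkappa'$ is constantly equal to $\psi_{\varkappa'}^{N+1}$. Concretely I would take, for $\ell\in[l,r]$, $\psi_{\D\varkappa'}(\Zol,t)=\exists t'\,\big((t<t')\wedge\psi_{\varkappa'}(\Zol,t')\big)\vee\bigvee_{j=1}^{N+1}\psi_{\varkappa'}^{j}(\Zol)$; for a positive offset, $\psi_{\D\varkappa'}^{i}(\Zol)=\bigvee_{j=i+1}^{N+1}\psi_{\varkappa'}^{j}(\Zol)$ when $1\le i\le N$ and $\psi_{\D\varkappa'}^{N+1}(\Zol)=\psi_{\varkappa'}^{N+1}(\Zol)$; and for a negative offset $-i$ with $1\le i\le N+1$, $\psi_{\D\varkappa'}^{-i}(\Zol)=\bigvee_{j=1}^{i-1}\psi_{\varkappa'}^{-j}(\Zol)\vee\exists t'\,\psi_{\varkappa'}(\Zol,t')\vee\bigvee_{j=1}^{N+1}\psi_{\varkappa'}^{j}(\Zol)$, because from $l-i$ every point of $[l,r]$, every positive offset, and the near negative offsets $-(i-1),\dots,-1$ lie strictly in the future. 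The case of $\Dp$ is the mirror image, with $r/\max$ swapped for $l/\min$ and positive superscripts for negative ones. Each equivalence is then obtained by unfolding the semantic clause~\eqref{eq:preliminaries:models-relation:diamond}, substituting the induction hypothesis for $\varkappa'$ at every relevant timestamp, and invoking Lemma~\ref{app:lm:loop} for the far tail; the part I expect to be the most error-prone is making the index ranges of the finite disjunctions coincide exactly with the set of timestamps lying in the future (resp.\ past) of the current reference point. Once the lemma is in place, the proposition follows by taking $\exists\Uol\,\psi_\varphi(\Xol,\Uol,t)$ as the required $\FOLess$-rewriting of $Q$.
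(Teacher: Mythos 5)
Your proposal is correct and takes essentially the same route as the paper's own proof: a structural induction with the same base case, the same clauses for conjunction and \Next, and the same use of the constancy property (Lemma~\ref{app:lm:loop}) to make $\psi^{N+1}_{\varkappa}$ (resp.\ $\psi^{-N-1}_{\varkappa}$) a catch-all for the far future (resp.\ past) in the \D/\Dp case. Your explicit formulas agree with the paper's (your disjunctive negative-offset clause for \D is the evidently intended reading of the paper's displayed formula), so there is nothing to fix.
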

	\begin{proof}
		The proof is, once again, by induction. For the base case, we set $\psi_R(\Zol, t) = R(\Zol, t)$ and $\psi_R^i(\Zol) = \bot$. For an induction step, let $\min$ be defined by $\forall t (\min \leqslant t)$ and $\max$ by $\forall t (t \leqslant \max)$, and the predicate $t' = t + 1$ by $(t < t') \wedge \forall \tau . [(\tau < t) \wedge (\tau < t')] \vee [(t < \tau) \wedge (t' < \tau)]$. Then we define:
		\begin{align}
			&\psi_{\varkappa_1 \wedge \varkappa_2} = \psi_{\varkappa_1} \wedge \psi_{\varkappa_2}\\
			&\psi^i_{\varkappa_1 \wedge \varkappa_2} = \psi^i_{\varkappa_1} \wedge \psi^i_{\varkappa_2}, \text{ for all } i
		\end{align}
		\begin{align}
			&\psi_{\Next \varkappa}^{-1}(\Zol) = \psi_{\varkappa}(\Zol, \min)\\
			&\psi_{\Next \varkappa}^{N + 1}(\Zol) = \psi^{N + 1}_{\varkappa}(\Zol)\\
			&\psi_{\Next \varkappa}^i(\Zol) = \psi_{\varkappa}^{i+1}(\Zol), \text{ for all other } i\\
			&\psi_{\Next \varkappa}(\Zol, t) = \exists t' [(t' = t + 1) \land \psi_{\varkappa}(\Zol, t')] \lor ((t = \max) \land \psi_{\varkappa}^1(\Zol))
		\end{align}
		\begin{align}
			&\psi_{\D \varkappa}^{i}(\Zol) = \bigvee_{i < j \leqslant N + 1} \psi^j_{\varkappa}(\Zol),  \text{ for } 0 < i < N + 1\\
			&\psi_{\D \varkappa}^{i}(\Zol) = \bigvee_{i < j \leqslant N + 1} \psi^j_{\varkappa}(\Zol) \wedge \exists t [(\min \leqslant t \leqslant \max) \wedge \psi_\varkappa(\Zol, t)],  \text{ for } -N - 1 \leqslant i < 0\\
			&\psi_{\D \varkappa}^{N + 1}(\Zol) = \psi^{N + 1}_{\varkappa}(\Zol)\\
			&\psi_{\Next \varkappa}(\Zol, t) = \exists t' [(t < t') \land \psi_{\varkappa}(\Zol, t')] \lor \bigvee_{1 \leqslant i \leqslant N + 1} \psi^{i}_{\varkappa}(\Zol)
		\end{align}
		And analogously for \Prev and \Dp. The correctness of the induction step follows from the semantics of temporal operators and Lemma \ref{app:lm:loop}.
	\end{proof}
	
	Now, for $Q = \exists \Uol \p(\Xol, \Uol)$, we apply Lemma \ref{app:lm:rewriting} to $\p(\Xol, \Uol)$ and obtain $\psi_\varkappa(\Xol, \Uol, t)$. The required rewriting $\psi_Q(\Xol, t)$ is then the formula $\exists \Uol \psi_\varkappa(\Xol, \Uol, t)$.
\end{proof}

\PropinNL*
\begin{proof}
	The lower bound for \mdlnd follows from the analogous proof for temporal deductive databases \cite[Appendix A]{TDD-1988}, which uses only monadic IDBs and a `rigid' relation $\textit{Next}(i, j)$ holding at every moment of time. To simulate the latter in \mdlnd, assert $\textit{Next}(i, j)$ at time $0$ and refer to it by $\Dp \textit{Next}(i, j)$ at later moments. The upper bound, for the full \dlnd, is obtained by grounding rules of the program with the domain elements, obtaining a set of propositional \LTL formulas, converting the input database into a set of temporalised atoms (e.g. $R(\bar d) \in D_\ell$ becomes $\Next^\ell R(\bar d)$), and applying the polynomial-space satisfiability checking procedure for \LTL \cite{LTL-Decidability-and-Completeness}.
	
	The rest of the proof in dedicated to the case of \lmdlnd. The lower bound follows from \NLogSpace-hardness of query answering for plain linear datalog. To simplify the notation, we show the upper bound for the monadic queries only. The proof can be straightforwardly modified for the non-monadic case with $M_{\Pi, \Db}$ in Lemma~\ref{th:nl-bounds} changing to $2 (|\textit{IDB}(\pi)|(|\dom \Db|)^a)^2$, where $a$ is the maximal arity among $\textit{IDB}(\pi)$.
For a program $\Pi$, a rule
\begin{equation}\label{rule:for-expansions-rec1}
	C(X) \impd B(X, Y, U_1, \dots, U_m)\ \wedge\ \Next^k D(Y),
\end{equation}
($k \in \{-1, 0, 1\}$) with $C, D \in \textit{IDB}(\pi)$, data
instance $\Db$, $c, d \in \Delta_{\Db}$ and $\ell \in \mathbb Z$, we
write $C(c) \leftarrow^B_{\ell,k} D(d)$ if $\Db, \ell \models \exists
U_1, \dots, U_m (B(c, d, U_1, \dots, U_m))$. We write $C(c) \leftarrow_{\ell,k} D(d)$ if there exists a rule~\eqref{rule:for-expansions-rec1} in $\Pi$ such that $C(c) \leftarrow_{\ell,k}^B D(d)$.
Similarly, for an initialization rule
\begin{equation}\label{rule:for-expansions-init}
	C(X) \impd B(X, U_1, \dots, U_m),
\end{equation}
we write $C(c) \leftarrow^B_\ell$ if
$\Db, \ell \models \exists U_1, \dots, U_m (B(c, U_1, \dots, U_m))$,\\
and we define $C(c) \leftarrow_\ell$ analogously.
Firstly, we observe the following important \emph{monotonicity} property:
\begin{align}
  \notag C(c) \leftarrow_{\ell,k} D(d) &\text{ iff }C(c) \leftarrow_{\ell', k} D(d) \\
  \label{eq:mon} C(c) \leftarrow_\ell &\text{ iff }C(c) \leftarrow_{\ell'}
\end{align}
for any $c,d, C, D, k$ as above and any $\ell, \ell' \in \mathbb Z$ such that either $\ell, \ell' > \max \Db + K_\Pi$ or $\ell, \ell' < \min \Db - K_\Pi$, where $K_\Pi$ is the maximal number of nested temporal operators in a rule body of $\Pi$.

We call
\begin{equation}\label{eq:der0}
C_0(c_0) \leftarrow_{\ell_0, k_0} C_1(c_1) \leftarrow_{\ell_1, k_1} \dots \leftarrow_{\ell_{n-1}, k_{n-1}} C_n(c_n)
\end{equation}
an \emph{entailment sequence from } $C_0(c_0)$ \emph{at} $\ell_0$ \emph{to} $C_n(c_n)$ \emph{at} $\ell_n$ \emph{ in } $\Db$ if $C_i \in \textit{IDB}(\pi)$, $c_i \in \Delta_{\Db}$, $\ell_i \in \mathbb Z$, $\ell_{i+1} = \ell_{i}+k_i$. An entailment sequence is called \emph{initialized} if $C_n(c_n) \leftarrow_{\ell_n}$.
It should be clear that
\begin{multline*}
  \text{For all } \ell \in \Z \text{ and } g \in \dom(\Db), \Pi, \Db, \ell \models G(g) \text{ iff }\\
  \text{there exists some initialized entailment sequence from } G(g) \text{ at }\ell \text{ in }\Db.
\end{multline*}
Our aim now is to show:
\begin{lemma}\label{th:nl-bounds}
    If there exists an initialized entailment sequence from $C_0(c_0)$ at $\ell_0 \in [\min\Db, \max\Db]$ in \Db, then there exists an initialized entailment sequence $C_0(c_0)
    \leftarrow_{\ell_0, k_0} C_1(c_1) \leftarrow_{\ell_1, k_1} \dots
    \leftarrow_{\ell_{n-1}, k_{n-1}} C_n(c_n)$, with $\ell_i \in [\min \Db - K_\Pi - M_{\Pi, \Db}, \max \Db + K_\Pi + M_{\Pi, \Db}]$, where $M_{\Pi, \Db} = 2 (|\textit{IDB}(\pi)||\dom \Db|)^2$.
\end{lemma}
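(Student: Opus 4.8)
The plan is to work with a \emph{shortest} initialized entailment sequence from $C_0(c_0)$ at $\ell_0$ and to show that its timestamps cannot drift far outside the ``core window'' $W=[\min\Db-K_\Pi,\ \max\Db+K_\Pi]$. Writing $N=|\IDBpi|\cdot|\dom\Db|$ for the number of \emph{configurations} $C(c)$, I would view such a sequence $C_0(c_0)\leftarrow_{\ell_0,k_0}\dots\leftarrow_{\ell_{n-1},k_{n-1}}C_n(c_n)$ as a walk on $\Z$ that at step $i$ sits at height $\ell_i$ in configuration $C_i(c_i)$ and moves by $k_i\in\{-1,0,1\}$. By the monotonicity property~\eqref{eq:mon}, once $\ell>\max\Db+K_\Pi$ (symmetrically, $\ell<\min\Db-K_\Pi$) the predicates $C(c)\leftarrow_{\ell,k}D(d)$ and $C(c)\leftarrow_\ell$ no longer depend on $\ell$; so to the right of $W$ the walk is just a walk in the finite labelled graph $\Gmc$ on configurations, and a maximal ``far-right excursion'' either re-enters $W$ or ends at a configuration that is $\leftarrow$-initialisable at \emph{every} far-right height. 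Since $\ell_0\in W$ by hypothesis, every maximal far-right excursion begins (and, unless it is the final segment of the sequence, ends) at height exactly $\max\Db+K_\Pi+1$; symmetrically on the left. Inside $W$ there is nothing to bound, so it suffices to show that along a shortest sequence the peak height of every far-right excursion exceeds $\max\Db+K_\Pi$ by at most $M_{\Pi,\Db}=2N^2$, the left case being symmetric.

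For a \emph{returning} far-right excursion (one that comes back to the entry height), the plan is a ``ring surgery''. If its peak level is $h$, the standard Dyck decomposition gives nested arches $P_0\supset P_1\supset\dots\supset P_{h-1}$, where $P_\ell$ crosses from level $\ell$ to level $\ell+1$ and back down; recording the configuration $a_\ell$ just before the upward crossing of $P_\ell$ and the configuration $b_\ell$ just after its downward crossing yields $h$ pairs $(a_\ell,b_\ell)$ drawn from $N^2$ possibilities, so if $h>N^2$ then $(a_s,b_s)=(a_t,b_t)$ for some $s<t$. I would then delete the ``ring'' lying inside $P_s$ but outside $P_t$ on both the ascending and descending sides and shift the part strictly inside $P_t$ down by $t-s$; since those heights were at level $\ge t$ they stay above $W$, every transition re-used at a shifted height is still a far-right transition and hence unchanged, and the two glueing points match in both configuration and height. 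This produces a strictly shorter valid excursion with the same entry/exit configuration and height, leaving the rest of the sequence intact — contradicting minimality unless $h\le N^2$.

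For the \emph{final} far-right excursion I would split it at its visits to the entry height $\max\Db+K_\Pi+1$: between consecutive visits lies a returning sub-excursion, already handled above, and after the last visit lies a ``staircase'' tail that never returns to that height. For the tail, taking the last index at each level it reaches yields an increasing sequence of configurations; if it has more than $N$ entries a repetition lets me cut out a loop and shift the (terminal) suffix \emph{downwards} by the level difference, which is safe precisely because that suffix then lies above the level of the cut, and $C_n(c_n)$ — being far-right-initialisable — stays so at the lower height; this bounds the tail's terminal height by $N$. Re-applying the ring surgery to the returning sub-excursions that constitute the tail at each successively higher base level $\le N$ then bounds the tail's peak as well, and altogether every far-right timestamp of a shortest sequence is $\le\max\Db+K_\Pi+2N^2$. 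The mirror argument bounds the far-left timestamps from below, and with the trivial bound inside $W$ this yields the claimed range $[\min\Db-K_\Pi-M_{\Pi,\Db},\ \max\Db+K_\Pi+M_{\Pi,\Db}]$.

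I expect the hard part to be the two surgeries and their interaction: one must verify carefully that each deletion strictly shortens the sequence and that every transition re-used at a shifted timestamp still lies in the far-right (or far-left) regime so that monotonicity legitimately applies, and — for the final excursion — that the ring and staircase surgeries can be interleaved over a bounded number of base levels to control its terminal height and its peak simultaneously. This interleaving is where the factor $2$ in $M_{\Pi,\Db}=2N^2$ is spent, and getting the nesting and ordering bookkeeping of the arches exactly right is the most error-prone step.
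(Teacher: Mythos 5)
Your plan is correct and follows essentially the same route as the paper: use the monotonicity property to make all transitions height-independent outside the window $[\min\Db-K_\Pi,\max\Db+K_\Pi]$, then shorten a minimal sequence by pigeonhole on pairs of configurations occurring at matching crossing levels of an excursion (your ``ring surgery'' is exactly the paper's replacement of the segment between $i'',j''$ and $i''',j'''$, and your staircase/loop-removal handles the paper's non-returning case $(a)$), yielding the same $2(|\textit{IDB}(\pi)||\dom\Db|)^2$ bound. The Dyck-arch bookkeeping you flag as error-prone is handled no more formally in the paper itself, so there is nothing missing relative to the published argument.
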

Because each $\ell_i$ is at most quadratic in $|\Db|$ it should be clear that there is an algorithm to check $\Pi, \Db, \ell \models G(g)$ that works in $\NLogSpace{}$ w.r.t. to $|\Db|$. In the remainder, we prove the above Lemma.
\begin{proof}
Consider a sequence~\eqref{eq:der0} in the assumption of the lemma and take the minimal $i$ such that:
\begin{itemize}
  \item either $\ell_i = \max \Db + K_\Pi+1$ and there exists $j > i$ with $\ell_j = \max \Db + K_\Pi+2(|\textit{IDB}(\pi)||\dom \Db|)^2+1$ and $\ell_t > \ell_i$ for all $i < t < j$,
  \item or $\ell_i = \min \Db - K_\Pi-1$ and there exists $j > i$ with $\ell_j = \min \Db - K_\Pi-2(|\textit{IDB}(\pi)||\dom \Db|)^2-1$ and $\ell_t < \ell_i$ for all $i < t < j$.
\end{itemize}
If such $i$ does not exist, then the sequence~\eqref{eq:der0} is already as required by the lemma and we are done. We show how to construct the required entailment sequence for the first case above; the second is analogous and left to the reader. There are two options:
\begin{description}
  \item[$(a)$] either for all $j > i$ we have $\ell_i > \max \Db + K_\Pi+1$,
  \item[$(b)$] or there is $j > i$ with $\ell_j = \max \Db + K_\Pi+1$.
\end{description}
First, we consider $(b)$. Take the minimal such $j$. We will show how to adjust~\eqref{eq:der0} by replacing the entailment sequence from $C_i(c_i)$ at $\ell_i$ to $C_j(c_j)$ at $\ell_j$ by another entailment sequence from $C_i(c_i)$ at $\ell_i$ to $C_j(c_j)$ at $\ell_j$ satisfying the condition of the lemma.
To this end, it will be convenient to consider \emph{derivation sequences} $C_0(c_0) \leftarrow_{k_0} C_1(c_1) \leftarrow_{k_1} \dots \leftarrow_{k_{n-1}} C_n(c_n)$ with $k_i \in \{-1, 0, 1\}$. We denote such a sequence by $\avec{der}$ and let $\textit{sh}_{\avec{der}}(0) = 0$ and $\textit{sh}_{\avec{der}}(i) = \sum_{j = 0}^{i-1} k_j$ for $0 < i \leq n$.
We denote by $\avec{der}(\ell_0)$ the entailment sequence $C_0(c_0) \leftarrow_{\ell_0, k_0} C_1(c_1) \leftarrow_{\ell_0+k_0, k_1} \dots \leftarrow_{\ell_0+k_0+\dots +k_{n-1}, k_n} C_n(c_n)$.
Let $\avec{der}$ be $C_i(c_i) \leftarrow_{k_i} \dots \leftarrow_{k_{j-1}} C_j(c_j)$ such that $\avec{der}(\avec{\ell}_i) = C_i(c_i) \leftarrow_{\ell_i, k_i} \dots \leftarrow_{\ell_{j-1}, k_{j-1}} C_j(c_j)$.
It follows from the minimality of $j$ that $\min\{ \textit{sh}_{\avec{der}}(i) \} = 0$ and $\textit{sh}_{\avec{der}}(j) = 0$.
It should be clear that if we are able to convert $\avec{der}$ to \begin{equation}\label{eq:derprime}\avec{der}' = C_0'(c_0') \leftarrow_{k_0'} C_1'(c_1') \leftarrow_{k_1'} \dots \leftarrow_{k_{m-1}} C_m'(c_m')
\end{equation}
such that
\begin{multline}\label{eq:der-conv}
  C_0'(c_0') = C_i(c_i)$, $C_m'(c_m')= C_j(c_j),\, \min\{ \textit{sh}_{\avec{der'}}(i) \} = 0, \\
  \textit{sh}_{\avec{der'}}(m) = 0, \text{ and }\max\{ \textit{sh}_{\avec{der'}}(i) \} \leq 2 (|\textit{IDB}(\pi)||\dom \Db|)^2,
\end{multline}
then we are done.
Indeed, due to~\eqref{eq:mon}, if $\avec{der}$ and $\avec{der}'$ are two derivation sequences with the same initial $C(c)$, the same final $D(d)$, and satisfying $\min\{ \textit{sh}(i) \} = 0$, then $\avec{der}'(\max \Db + K_\Pi+1)$ is an entailment sequence iff $\avec{der}(\max \Db + K_\Pi+1)$ is an entailment sequence.
In $\avec{der}$, consider the the smallest $i'$ such that $\textit{sh}_{\avec{der}}(i') = (|\textit{IDB}(\pi)||\dom \Db|)^2$ and then the smallest $j' > i'$ such that $\textit{sh}_{\avec{der}}(j') = (|\textit{IDB}(\pi)||\dom \Db|)^2$ (both $i'$ and $j'$ must exist by our assumptions). If $\avec{der}' = C_{i'}(c_{i'}) \leftarrow_{k_{i'}} C_{i'+1}(c_{i'+1}) \leftarrow_{k_{i'+1}} \dots C_{j'}(c_{j'})$ is such that $\max\{ \textit{sh}_{\avec{der}'}(i) \} \leq (|\textit{IDB}(\pi)||\dom \Db|)^2$, then we proceed to find the new $i' > j'$ and the new $j'$ such that $\textit{sh}_{\avec{der}}(i') = \textit{sh}_{\avec{der}}(j') = (|\textit{IDB}(\pi)||\dom \Db|)^2$ and treat the new $i',j'$ as the initial ones. Suppose, on the other hand, $\max\{ \textit{sh}_{\avec{der}'}(i) \} > (|\textit{IDB}(\pi)||\dom \Db|)^2$. It follows that there exists a pair $C(c), D(d)$, $i'', j''$ with $i' < i'' < j'' < j'$, and $i''', j'''$ with $i' < i''' < j''' < j'$, such that
\begin{itemize}
  \item $C(c) = C_{i''}(c_{i''}) = C_{i'''}(c_{i'''})$,
  \item $D(d) = C_{j''}(c_{j''}) = C_{j''}(c_{j'''})$,
  \item $\textit{sh}_{\avec{der'}}(i'') = \textit{sh}_{\avec{der'}}(j'') < \textit{sh}_{\avec{der'}}(i''') = \textit{sh}_{\avec{der'}}(j''') = t$
\end{itemize}
The $\avec{der}$ is illustrated in Fig. \ref{fig:proof-figure}.

\begin{figure}
	\centering
	\includegraphics[height=4cm, keepaspectratio]{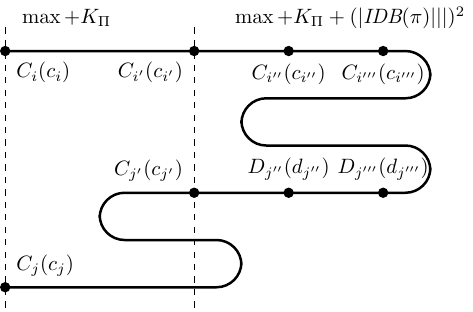}
	\caption{The derivation $\avec{der}$.}
	\label{fig:proof-figure}
\end{figure}

\noindent
It should be clear that $C(c), D(d)$, $i'', j''$ can be chosen so that $t \leq |\textit{IDB}(\pi)||\dom \Db|$.
We modify $\avec{der}$ by replacing it with $\avec{der}' = C_i(c_i) \leftarrow_{k_i} \dots C_{i''}(c_{i''}) \leftarrow_{k_{i'''}} C_{k_{i'''+1}}(c_{k_{i'''+1}}) \dots C_{j'''}(c_{j'''}) \leftarrow_{k_{j''}} C_{k_{j''+1}}(c_{k_{j''+1}}) \dots \leftarrow_{k_{j-1}} C_j(c_j)$. It should be clear that:
\begin{itemize}
  \item the length of $\avec{der'}$ is smaller than the length of $\avec{der}$ (which is equal to $j - i + 1$)
  \item $\min\{ \textit{sh}_{\avec{der'}}(i) \} = 0$
\end{itemize}
It is easy to see that $\avec{der}'$ will satisfy $(b)$ above again. We repeat the process described in the proof above treating $\avec{der}'$ as $\avec{der}$. By repeating this process finitely many times, we are able to achieve $\avec{der}'$ satisfying~\eqref{eq:der-conv}.

Now, consider the first option $(a)$ above. Let $\avec{der} = C_i(c_i) \leftarrow_{k_i} C_{i+1}(c_{i+1}) \leftarrow_{k_{i+1}} \dots C_n(c_n)$ such that $\avec{der}(\avec{\ell}_i) = C_i(c_i) \leftarrow_{\ell_i, k_i} \dots \leftarrow_{\ell_{n-1}, k_{n-1}} C_n(c_n)$. It follows $\min\{ \textit{sh}_{\avec{der}}(i) \} = 0$. As in the proof above, we need to convert $\avec{der}$ to $\avec{der}'$~\eqref{eq:derprime} satisfying
\begin{multline}\label{eq:derconvprime}
C_0'(c_0') = C_i(c_i)$, $C_m'(c_m')= C_n(c_n),\\
 \min\{ \textit{sh}_{\avec{der'}}(i) \} = 0,\, \text{ and }\max\{ \textit{sh}_{\avec{der'}}(i) \} \leq 2 (|\textit{IDB}(\pi)||\dom \Db|)^2.
\end{multline}
By our assumptions,~\eqref{eq:derconvprime} does not hold for $\avec{der}$, so consider the the smallest $i'$ in $\avec{der}$ such that $\textit{sh}_{\avec{der}}(i') = (|\textit{IDB}(\pi)||\dom \Db|)^2$. First, we assume that there exists no $j' > i'$ such that $\textit{sh}_{\avec{der}}(i') = \textit{sh}_{\avec{der}}(j')$. In this case, from $\avec{der}'' = C_{i'}(c_{i'}) \leftarrow_{k_{i'}} \dots \leftarrow_{k_{n}} C_n(c_n)$, we remove all the loops of the form $C(c) \leftarrow \dots \leftarrow C(c)$. It should be clear that $\max\{ \textit{sh}_{\avec{der}''}(i) \} \leq  |\textit{IDB}(\pi)||\dom \Db|$ and $|\min\{ \textit{sh}_{\avec{der}''}(i) \}| \leq  |\textit{IDB}(\pi)||\dom \Db|$. Therefore, $\avec{der}'$ obtained by removing the loops from $\avec{der}$ after $i'$ will satisfy~\eqref{eq:derconvprime} and we are done. If, on the other hand, there exists $j' > i'$ such that $\textit{sh}_{\avec{der}}(i') = \textit{sh}_{\avec{der}}(j')$, we adjust the sequence $C_{i'}(c_{i'}) \leftarrow_{\ell_{i'}, k_{i'}} \dots \leftarrow_{\ell_{j'-1}, k_{j'-1}} C_{j'}(c_{j'})$ in $\avec{der}$ as explained in the proof for $(b)$. The resulting $\avec{der}'$ will be shorter than $\avec{der}$ and, unless it satisfies~\eqref{eq:derconvprime}, $(a)$ will apply to it. It should be clear that after finitely many repetitions of the process described in the proof above (for $(a)$), we obtain $\avec{der}'$ satisfying~\eqref{eq:derconvprime}.

\end{proof}

\end{proof}



\section{Proofs for Section \ref{sec:diamond-undecidability}} \label{app:sec:diamond-undecidability}


\LemmaHalting*
\begin{proof}
	(i) Suppose that $M$ halts and let~\eqref{def:machine:computation} be the (complete) computation (of length $m$). We will show that for any temporal database $\Db$ over $\Sigma$:
	\begin{multline}\label{eq:finite-exp}
		\Db, \pi_M, \ell \mdl G(d_0)\, \Leftrightarrow \text{ there exists } w(X) \in \expand(\pi, G) \\
		\text{with }\leq m+2 \text{ variables such that }\Db, \ell \mdl w(d_0)
	\end{multline}

	The direction $(\Leftarrow)$ is immediate. 
	For the other direction, suppose $\Db, \pi_M, \ell \mdl G(d_0)$ for some $d_0 \in \obj(\Db)$ and a time $\ell$. By~\eqref{rule:machine:goal}, $\Db, \pi_M, \ell \mdl S_0(d_0)$ and $\Db, \ell \mdl U_1(d_0, d) \wedge U_2(d_0, d)$ for some $d \in \obj(\Db)$. Since $\Db, \pi_M, \ell \mdl S_0(d_0)$, three cases are possible:
	\begin{description}
		\item[$(i)$] $\Db, \pi_M, \ell \mdl \Drefl RV(d_0)$ by~\eqrefleft{rule:machine:representation-violation-first}. Then, $\Db, \ell'$, for some $\ell' \geq \ell$, satisfies the body of the rule~\eqrefright{rule:machine:representation-violation-first} or one of the rules~\eqref{rule:machine:representation-violation-last} with $X$ equal to $d_0$. Suppose $\ell' > \ell$ and $\Db, \ell'$ satisfies the body of~\eqrefright{rule:machine:representation-violation-first} (the other cases are analogous). Then, since $m \geq 0$, $w(X) = \Diamond (T(X, Y) \wedge \D T(X, Z)) \in \expand(\pi, G)$, and $\Db, \ell \mdl w(d_0)$, we are done.
		
		\item[$(ii)$] either $\Db, \pi_M, \ell \mdl \Drefl NE^{\varepsilon_1}_1(d)$ and $\Db, \ell \mdl \D^\alpha U_1(d_0, d)$ by~\eqrefleft{rule:machine:tv-state} or $\Db, \pi_M, \ell \mdl \Drefl NE^{\varepsilon_2}_2(d)$ and $\Db, \ell \mdl \D^\beta U_2(d_0, d)$ by~\eqrefright{rule:machine:tv-state}, for some transition $\Theta(s_0, \alpha, \beta) = (s_j, \varepsilon_1, \varepsilon_2)$. We consider the first case only (the second is analogous). Furthermore, we assume that $\alpha = +$ and $\varepsilon_1 = -1$ (the cases for $\alpha = 0$ and $\varepsilon_1 \in \{0, 1\}$ are analogous). Then, $\Db, \pi_M, \ell' \mdl NE^{\varepsilon_1}_1(d)$ for some $\ell' \geq \ell$ and $\Db, \ell'' \mdl U_1(d_0, d)$ for some $\ell'' > \ell$. We suppose $\ell'> \ell$, the alternative case is analogous. Because $\Db, \pi_M, \ell' \mdl NE^{\varepsilon_1}_1(d)$ and $\varepsilon_1 = -1$, $\Db, \ell'$ satisfies the body of~\eqrefleft{rule:machine:transition-violation2} or the next two rules in the list, with $Y$ equal to $d$. Suppose the body of~\eqrefleft{rule:machine:transition-violation2} is satisfied (the remaining two cases are analogous). Then, $\Db, \ell' \mdl U_1(d, d')$, for some $d' \in \obj(\Db)$. We take $w(X) = \Diamond U_1(X,Y) \land \Diamond (U_1(Y, Z) \land U_1(Y, X))$. As explained above, $\Db, \ell \mdl w(d_0)$. Clearly, the number of variables in $w$ is $\leq m+2$ and we are done.
		
		\item[$(iii)$] $\Db, \ell \mdl T(d_0, d_1) \wedge \D^\alpha U_1(d_0, d_1) \land \D^\beta U_2(d_0, d_1)$ and $\Db, \pi_M, \ell \mdl S_i(d_1)$ for some $d_1 \in \obj(\Db)$ and transition $\Theta(s_0, \alpha, \beta) = (s_i, \varepsilon_1, \varepsilon_2)$. If either $\Db, \pi_M, \ell \mdl \Drefl RV(d_0)$, or $\Db, \pi_M, \ell \mdl \Drefl NE^{\varepsilon_1}_1(d)$ and $\Db, \ell \mdl \D^\alpha U_1(d_0, d)$, or $\Db, \pi_M, \ell \mdl \Drefl NE^{\varepsilon_2}_2(d)$ and $\Db, \ell \mdl \D^\alpha U_2(d_0, d)$ for some $d \in \obj(\Db)$, then we are done as we can construct $w(X)$ such that $\Db, \ell \mdl w(d_0)$ as in the cases $(i)$ and $(ii)$. We assume neither of the above is the case. Observe that $\ell' \geq \ell$ and $\Db, \ell' \models U_k(d_0, d_1)$ implies $\ell' = \ell$, for $k \in \{1,2\}$. Indeed, recall that we have $\Db, \ell \models U_k(d_0, d)$ for some $d \in \obj(\Db)$. If $\Db, \ell' \models U_k(d_0, d_1)$ for $\ell' > \ell$, then $\Db, \pi_M, \ell \models RV(d_0)$ by~\eqref{rule:machine:representation-violation-last}, which is a contradiction. It follows that $\alpha = \beta = 0$. Therefore, $i = 1$. We also require for the future that $\ell' \geq \ell$ and $\Db, \ell' \mdl \exists x U_k(d_1, x)$ implies $\ell' = 0 + \varepsilon_i$. Indeed, for the sake of contradiction, suppose there is $d \in \obj(\Db)$ and $\ell' \geq \ell$ such that $\Db, \ell' \mdl U_1(d_1, d)$ and $\ell' \neq 0 + \varepsilon_1$ (the case of $k =2$ is analogous). Observe that in this case $\varepsilon_1 \neq -1$ because $\Theta(s_0, 0, 0) = (s_1, \varepsilon_1, \varepsilon_2)$ is a transition. Suppose, first, $\varepsilon_1 = 0$, then $\ell' > \ell$. Then, since $\Db, \ell \models U_1(d_0, d_1)$ and by~\eqrefright{rule:machine:transition-violation5}, we have $\Db, \pi_M, \ell \models NE^{\varepsilon_1}_1(d_1)$, which is a contradiction. Suppose, now that $\varepsilon_1 = 1$, then $\ell' = \ell$ or  $\ell' > \ell+1$. For $\ell' = \ell$ we use~\eqrefright{rule:machine:transition-violation6} and for $\ell' > \ell+1$ we use~\eqrefright{rule:machine:transition-violation-last} to conclude that $\Db, \pi_M, \ell \models NE^{\varepsilon_1}_1(d_1)$, which is a contradiction. We conclude for $i = 1$ that
		\begin{align}
			\notag&\Db, \pi_M, \ell \mdl S_i(d_i),\\
			\notag&\Db, \ell' \mdl \exists x U_1(d_i, x) \text{ implies }\ell' = \ell+a_i,\\ &\Db, \ell' \mdl \exists x U_2(d_i, x) \text{ implies }\ell' = \ell+ b_i,\label{eq:ih}
		\end{align}
		for all $\ell' \geq \ell$. Furthermore, we observe that there exists a partial expansion $w_i(X, X_i)$, containing (among others) variables $X$ and $X_i$, such that
		\begin{align}
			&w_i \text{ has }i+1 \text{ variables and }\Db, \ell \mdl w_i(d_0, d_i)\label{eq:ih1}
		\end{align}
		Indeed, for $i = 1$ we take $w_i = T(X,X_i) \land U_1(X,X_i) \land U_2(X,X_i)$. That $\Db, \ell \mdl w_1(d_0, d_1)$ follows from the proof above.
		
	\end{description}
	
	Suppose~\eqref{eq:ih} and~\eqref{eq:ih1} hold for $i<m$. We will show that either there is $w(X) \in \expand(\pi, G)$
	with $\leq m+2$  variables such that $\Db, \ell \mdl w(d_0)$, or that~\eqref{eq:ih} and~\eqref{eq:ih1} hold with $i+1$ in place of $i$. Since $\Db, \pi_M, \ell \mdl S_i(d_i)$, the following cases are possible (we observe that $\Db, \pi_M, \ell \mdl \Drefl RV(d_i)$ is not possible by~\eqref{eq:ih}):
	\begin{description}
		\item[$(i)$] either $\Db, \pi_M, \ell \mdl \Drefl NE^{\varepsilon_1}_1(d)$ and $\Db, \ell \mdl \D^\alpha U_1(d_i, d)$ by~\eqrefleft{rule:machine:tv-state} or $\Db, \pi_M, \ell \mdl \Drefl NE^{\varepsilon_2}_2(d)$ and $\Db, \ell \mdl \D^\beta U_2(d_i, d)$ by~\eqrefright{rule:machine:tv-state}, for some transition $\Theta(s_i, \alpha, \beta) = (s_j, \varepsilon_1, \varepsilon_2)$. Assuming that~\eqrefleft{rule:machine:tv-state} was the case (c.f. proof of $(ii)$ above), $\Db, \pi_M, \ell' \mdl NE^{\varepsilon_1}_1(d)$ for $\ell' > \ell$, $\alpha = +$, $\varepsilon_1 = -1$, and that $\Db, \ell'$ satisfies the body of \eqrefleft{rule:machine:transition-violation2}  with $Y$ equal to $d$ (the alternative cases are analogous), we obtain $w(X) = w_i(X, X_i) \land \Diamond U_1(X_i,Y) \land \Diamond (U_1(Y, Z) \land U_1(Y, X_i))$, for $Y,Z$ not occurring in $w_i$. By~\eqref{eq:ih1}, we conclude that $\Db, \ell \mdl w(d_0)$ and that the number of variables in $w$ is $\leq m+2$. We are done.
		\item[$(ii)$] $\Db, \ell \mdl T(d_i, d_{i+1}) \wedge \D^\alpha U_1(d_i, d_{i+1}) \land \D^\beta U_2(d_i, d_{i+1})$ and $\Db, \pi_M, \ell \mdl S_j(d_{i+1})$ for some $d_{i+1} \in \obj(\Db)$ and transition $\Theta(s_i, \alpha, \beta) = (s_j, \varepsilon_1, \varepsilon_2)$.  If either $\Db, \pi_M, \ell \mdl \Drefl NE^{\varepsilon_1}_1(d)$ and $\Db, \ell \mdl \D^\alpha U_1(d_i, d)$, or $\Db, \pi_M, \ell \mdl \Drefl NE^{\varepsilon_2}_2(d)$ and $\Db, \ell \mdl \D^\alpha U_2(d_i, d)$ for some $d \in \obj(\Db)$, then we are done as we can construct $w'(X)$ such that $\Db, \ell \mdl w'(d_i)$ as in the case $(i)$ above. We assume that neither of the above is the case. By~\eqref{eq:ih}, it follows that $\mathrm{sgn}(a_i) = \alpha$ and $\mathrm{sgn}(b_i) = \beta$, therefore $j = i+1$.
		We now require for the future that $\ell' \geq \ell$ and $\Db, \ell' \mdl \exists x U_1(d_{i+1}, x)$ implies $\ell' = \ell+a_i + \varepsilon_1$.
		Indeed, for the sake of contradiction, suppose there is $d \in \obj(\Db)$ and $\ell' \geq \ell$ such that $\Db, \ell' \mdl U_1(d_{i+1}, d)$ and $\ell' \neq \ell+a_i + \varepsilon_1$.
		Suppose, first, $\varepsilon_1 = -1$ (observe that by the construction of $M$, we have $a_i \geq 1$). It follows $\ell \leq \ell' < \ell + a_i - 1$ or $\ell' \geq \ell+a_i$. We consider the first option only (the second is left to the reader). Since $\Db, \ell + a_i \models U_1(d_i, d_{i+1})$, by~\eqref{rule:machine:transition-violation2} we obtain $\Db, \pi_M, \ell' \mdl NE^{\varepsilon_1}_1(d_{i+1})$ and then $\Db, \pi_M, \ell \mdl \Drefl NE^{\varepsilon_1}_1(d_{i+1})$. This is a contradiction. We leave the cases $\varepsilon_1 \in \{0, 1\}$ to the reader. Similarly it is shown that  $\ell' \geq \ell$ and $\Db, \ell' \mdl \exists x U_2(d_{i+1}, x)$ implies $\ell' = \ell+b_i + \varepsilon_2$. Recall that $a_{i+1} = b_i + \varepsilon_1$ and $b_{i+1} = b_i + \varepsilon_2$, so we have obtained~\eqref{eq:ih} for $i+1$ in place of $i$. Furthermore, suppose $\alpha = 0$ and $\beta = +$ (the alternative three cases are left to the reader). Then consider the partial expansion $w_{i+1}(X, X_{i+1}) = w_i(X, X_i) \land T(X_i,X_{i+1}) \land U_1(X_i,X_{i+1}) \land \D U_2(X_i,X_{i+1})$ for $X_{i+1}$ not occurring in $w_{i}$. Since $\Db, \ell \mdl w_i(d_0, d_i)$ by~\eqref{eq:ih1}, then $\Db, \ell \mdl w_{i+1}(d_0, d_{i+1})$ and we have also shown~\eqref{eq:ih1} for $i+1$ in place of $i$.
		
	\end{description}
	
	Suppose $i = m$. By $\Db, \pi_M, \ell \mdl S_i(d_i)$ and~\eqref{eq:ih}, the following two cases are possible in principle: $(i)$ above or $(ii)$ above. Suppose $(ii)$ is the case. By~\eqref{eq:ih}, it follows that $\mathrm{sgn}(a_i) = \alpha$ and $\mathrm{sgn}(b_i) = \beta$. So, there is a transition from $(s_m, \mathrm{sgn}(a_m), \mathrm{sgn}(b_m))$ to some $(s_j, \varepsilon_1, \varepsilon_2)$ which is a contradiction to the fact that $M$ halts after $m$ steps. So $(i)$ is the case and from the proof of $(i)$ we obtain $w(X) \in \expand(\pi, G)$ with $\leq m+2$ variables such that $\Db, \ell \mdl w(d_0)$.

	\ \\
	
	(ii) Suppose now that $M$ does not halt. We prove that $(\pi_M, G)$ is \NLogSpace-hard by a reduction from the directed reachability problem. Let $G = (V, E)$ be a directed graph with the set of nodes $V = \{v_0, \dots, v_n\}$, the start node $v_s$ and the target node $v_t$.
	Let the first $n$ steps of the computation of $M$ (c.f.~\eqref{def:machine:computation}) be $(s_0, a_0, b_0), \dots, (s_n, a_n, b_n)$.
	We construct a temporal database $\Db_G$ with $\obj(\Db) = V \times \{0, \dots, n\}$, such that $v_t$ is reachable from $v_s$ in $G$ iff $\Db, \pi_M, 0 \mdl G((v_s,0))$. Firstly, we add to $\Db_G$ the assertions $U_1((v_t,k), (v_t,k), 0), U_1((v_t,k), (v_t,k), 1)$ for each $0 \leq k \leq n$. Then, for each $(v_i, v_j) \in E$ and $0 \leq k \leq n-1$, we add to $\Db_G$ the assertions:
	\begin{itemize}
		\item $T((v_i,k), (v_j, k+1), 0)$
		\item $U_1((v_i,k), (v_j, k+1), a_k)$
		\item $U_2((v_i,k), (v_j, k+1), b_k)$.
	\end{itemize}
	We now show that $v_t$ is reachable from $v_s$ in $G$ iff $\Db_G, \pi_M, 0 \mdl G((v_s, 0))$.
	$(\Rightarrow)$ Let $v_{i_0}, \dots, v_{i_m}$ be a path in $G$ with $i_0 = s$ and $i_m = t$. It follows by the construction of $\Db_G$ and~\eqref{rule:machine:representation-violation-first},~\eqref{rule:machine:representation-violation-last} that $\Db_G, \pi_M, 0 \models S_m((v_{i_m}, m))$. Then, by~\eqref{rule:machine:state-transition} it follows that $\Db_G, \pi_M, 0 \models S_{m-1}((v_{i_{m-1}}, m-1))$. Further applying~\eqref{rule:machine:state-transition} we obtain $\Db_G, \pi_M, 0 \models S_{0}((v_{i_{0}}, 0))$ and finally by~\eqref{rule:machine:goal} we obtain $\Db_G, \pi_M, 0 \mdl G((v_s, 0))$. The direction $(\Leftarrow)$ is similar and left to the reader.
\end{proof}



\section{Proofs for Section \ref{sec:automata}} \label{app:sec:automata}

We say that an automaton $\A = (S, \aleph, s^0, T, F)$, with the set of states $S$, input alphabet $\aleph$, initial state $s^0$, transition relation $T$, and the set of final states $F$, is $n$-\textit{manipulatable}, for $n \in \N$, if there is a polynomial $p(x)$ and one-to-one encodings $u \colon S \to \{0, 1\}^{p(n)}$ and $v \colon \aleph \to \{0, 1\}^{p(n)}$, such that:
\begin{enumerate}
	\item[]\textbf{(repr)} membership in each of the sets $u(S), \{u(s^0)\}$, $u(F)$, and $v(\aleph)$ is decidable in polynomial space;
	
	\item[]\textbf{(tran)} membership in the set $\{(u(s), v(\Lambda), u(s')) \mid (s, \Lambda, s') \in T\}$ is decidable in polynomial space.
\end{enumerate}
Note that it follows, from the fact that $u$ is one-to-one, that $|S| \leqslant 2^{p(n)}$.


%
\LemmaExpand*
\begin{proof}
	We prove a slightly more general statement---that $\Expand(\pi, G)$ is recognised by a $|\pi|$-manipulatable nondeterministic finite state automaton.
	
	Assume that all input words are correct (this can be checked by a simple automaton of size $\Omc(1)$). To prove the lemma, we construct a two-way automaton $\A_{tw} = (S_{tw}, \Omega, s^0_{tw}, T_{tw}, F_{tw})$ that simulates the rules of the program~$\pi$, and then convert it into a one-way one. Essentially, the states of $\A_{tw}$ are the IDBs of $\pi$ (plus a final state) and the rules of $\pi$ become its transition rules. The initial state is the one associated with $G$. The behaviour of $\A_{tw}$ varies depending on whether it is currently reading a vertical or a horizontal segment of the input word. Being in a state `$D$' and seeing a letter $\Lambda$, the automaton can use a rule 
	\begin{equation*}
		D(X) \impd B(X, Y, \Ubf) \wedge \Next^k H(X),
	\end{equation*}
	if the body $B$ is in $\Lambda$, change its state to that of `$H$', and then move its head according to the following rules:
	\begin{itemize}
		\item On a vertical segment, move one step to the right. 
		\item On a horizontal segment, move by $|k|$ steps in the direction of the sign of $k$. When counting, the singleton $\{\top\}$ is ignored, and an attempt to go beyond the limits of the segment leads to a rejection. When reading the singleton symbol $\{\top\}$, the automaton can nondeterministically decide to go all the way to the beginning of the next vertical segment.
		\item Going from a vertical to a horizontal segment, the automaton stops at the letter containing $\bot$.
	\end{itemize} 
	The automaton comes to the final state by using one of the initialisation rules. 
	Such an automaton works over $\Omega$, but its state space is polynomial in the size of $\pi$. We convert it to a one-way nondeterministic automaton with an exponential blow-up, using the method of crossing sequences~\cite{Shepherdson-2way}. The states of the resulting automaton are relations of the form $\Rmc \sbs S_{tw} \times S_{tw}$.
	Thus, we have \textbf{(repr)}. For \textbf{(tran)}, we note that a composition of two such relations is easily computable, given that, for every $s, s' \in S_{tw}$ and $\Lambda \in \Omega$, we can check $(s, \Lambda, s') \in T_{tw}$ in \PSpace (that is, we have \textbf{(tran)} for the one-way automaton because we have a similar property for $\A_{tw}$, namely that $T_{tw}$ is decidable in \PSpace). 
\end{proof}

%
\LemmaNoAccept*
\begin{proof}
	Again, we prove that $\notacceptom(\pi, G)$ is recognised by a $|\pi|$-manipulatable nondeterministic finite state automaton.
	
	Let \Db be a temporal database and $d \in \Delta_\Db$. We define $r(d)$ (and $-l(d)$) as the maximal (respectively, minimal) timestamp $\ell$ such that $\Db_\ell$ contains an atom involving $d$. Clearly, $r(d), -l(d) \in \tem(\Db)$ for all $d$. An $n$-\textit{cut model} of \Db and $\pi$, denoted $\E|_n$, is a temporal database over the joint EDB and IDB schema of $\pi$ obtained from a model $\E$ of \Db and $\pi$ by discarding, for every object $d$, all the (IDB) atoms with timestamp larger than $r(d) + n$ and smaller than $-l(d) - n$. Intuitively, $\E|_n$ preserves all information about $\E$ that is `reachable' from $\Db$ via a path of length $n$. The following lemma allows to focus on finite temporal databases instead of infinite models when dealing with certain answers.
	
\begin{lemma}\label{lm:body-language:cuts}
  Let $|\pi| = n$. Then $\Db, \pi, \ell \mdl G(d_1, \dots, d_k)$ if and only if $\E|_{n}, \ell \mdl G(d_1, \dots, d_k)$ for every $n$-cut mode of $\Db$ and $\pi$.
\end{lemma}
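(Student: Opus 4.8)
The plan is to prove the two implications separately; the right‑to‑left one is immediate, and essentially all of the content sits in the left‑to‑right one, which is a ``locality'' statement about certain answers over connected programs.

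First, suppose $\E|_n, \ell \mdl G(d_1,\dots,d_k)$ holds for every $n$-cut model of $\Db$ and $\pi$. Every such $\E|_n$ is obtained from a genuine model $\E$ of $\Db$ and $\pi$ by deleting atoms, so $(\E|_n)_\ell \sbs E_\ell$, hence $G(d_1,\dots,d_k) \in E_\ell$, that is, $\E, \ell \mdl G(d_1,\dots,d_k)$. Since this holds for every model $\E$ (each is the source of some $n$-cut model), we get $\Db, \pi, \ell \mdl G(d_1,\dots,d_k)$ directly from the definition of certain answer.

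For the converse, fix any model $\E$ and its $n$-cut $\E|_n$. Since $G(d_1,\dots,d_k) \in E_\ell$, it suffices to show that this atom survives the cut, i.e.\ that $-l(d_i) - n \le \ell \le r(d_i) + n$ for every $i$. The plan here is to invoke the expansion characterisation recalled at the end of Section~\ref{sec:preliminaries:mtdlog:expansions} together with Lemma~\ref{lm:preliminaries:tcq-next-homomorphisms}: the certain answer is witnessed by some $w \in \expand(\pi, G)$ and a homomorphism $h$ from $\Db_w$ into $\Db$ that shifts time by $\ell$ and maps the answer variables to $d_1,\dots,d_k$. Connectedness of $\pi$ makes every rule body connected and composition glues consecutive bodies along their unique IDB variable, so $\Db_w$ is a connected temporal database; one then argues that each answer variable $X_i$ occurs in an EDB atom of $\Db_w$ at a slice $t_i$ with $|t_i| \le n$, so that $d_i = h(X_i)$ occurs in $\Db$ at slice $\ell + t_i \in [-l(d_i), r(d_i)]$, whence $\ell \in [-l(d_i) - n, r(d_i) + n]$ and the atom survives.

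The step I expect to be the real obstacle is the offset bound $|t_i| \le n$. In the atemporal setting of~\cite{Cosmadakis-Monadic-DLog-Boundedness-Decidable} this is trivial, but here an expansion may start with a long run of ``shift-only'' recursive rules $C(X) \impd \Next^{m} C'(X)$, which pushes the first EDB occurrence of an answer variable arbitrarily far from slice $0$, and such loops cannot simply be excised without destroying the witness. The way I would deal with this is to exploit the context in which the lemma is applied — databases of the form $\Db_\alpha$ queried at time $\ell = 0$, where the answer variable is structurally pinned to the first rule body and hence occurs within offset $|\pi| = n$ of slice $0$ — and, more generally, to combine $\ell \in \tem(\Db)$ with the bound that the total $\Next/\Prev$-nesting over the \emph{distinct} rules that can form such a shift-only prefix is at most $|\pi| = n$, so that the first EDB occurrence of the answer object cannot leave the $n$-neighbourhood of that object's range in $\Db$. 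The remaining ingredients — connectedness of $\Db_w$, monotonicity of cutting under inclusion of models, and (if one prefers to argue with it) the reduction to the least model — are routine.
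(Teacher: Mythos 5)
Your right-to-left direction is exactly the paper's argument (monotonicity of atom membership under the inclusion $\E|_n \subseteq \E$). The divergence is in the other direction, which the paper dismisses in one line --- ``trivial, since $\E|_n$ is obtained from a model of $\Db$ and $\pi$'' --- thereby taking for granted precisely the point you isolate: that the atom $G(d_1,\dots,d_k)$ at $\ell$ survives the cut, equivalently that $-l(d_i)-n \le \ell \le r(d_i)+n$ for each $i$. You are right to be suspicious, and your worry about shift-only prefixes is not a technicality you merely failed to discharge: with $r(d)$ and $l(d)$ measured on occurrences of $d$ in the EDB part $\Db$, as the paper literally defines them, the offset bound fails and so does the biconditional. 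Take $\pi = \{G(X) \impd \Next D(X),\; D(X) \impd \Next D(X),\; D(X) \impd A(X)\}$ and $\Db$ with $\tem(\Db)=[0,m]$ whose only atom is $A(d)$ at time $m \gg n$. Then $\Db,\pi,0\mdl G(d)$ (the derivation walks $D(d)$ down from $m$ to $1$), yet $d$ occurs in $\Db$ only at time $m$, so every $n$-cut model retains IDB atoms about $d$ only within $[m-n,\,m+n]$ and discards $G(d)$ at $0$. Note that this $\Db$ is itself of the form $\Db_\alpha$ queried at $\ell=0$, so your fallback to the application context does not rescue the argument: the answer variable of an expansion need not occur in any EDB atom within offset $n$ of slice $0$ when the goal rule is a pure temporal shift, and repeating a single horizontal rule makes the drift unbounded, so no bound in terms of \emph{distinct} rules can hold either.

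The gap you flag is therefore real, but it sits in the statement (and in the paper's ``trivial'') rather than in anything your strategy could have repaired. The reading that the enrichment construction in Lemma~\ref{lm:body-language:notaccept} actually needs is that the window $[-l(d)-n,\,r(d)+n]$ be measured from \emph{all} positions at which the object $d$ participates in the derivation structure (every rule body mentioning $d$ contributes its position, whether or not $d$ sits in an EDB atom there), or more coarsely that the cut be taken relative to $\tem(\Db)$ padded by $n$. Under either repair the left-to-right direction really is the one-liner the paper claims --- the atom $G(d_1,\dots,d_k)$ at $\ell\in\tem(\Db)$ is inside the window by fiat --- and your expansion-plus-connectedness machinery is unnecessary. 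So: your easy direction matches the paper; your hard direction correctly diagnoses the weak point but cannot be completed as the definitions stand, and the fix lives in the definition of the cut, not in a cleverer proof.
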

\begin{proof}
  The direction $(\Rightarrow)$ is trivial, since $\E|_{n}$ is obtained from a model of $\Db$ and $\pi$. For the other direction, let $\E$ be an arbitrary model of $\Db$ and $\pi$ and consider $\E|_{n}$. Since $\E|_{n}, \ell \mdl G(d_1, \dots, d_k)$, it must be that $\E, \ell \mdl G(d_1, \dots, d_k)$. Since $\E$ is arbitrary, we get $\Db, \pi, \ell \mdl G(d_1, \dots, d_k)$.
\end{proof}

Roughly speaking, the states of the automaton for $\notacceptom(\pi, G)$ will be formed by the $|\pi|$-cut models. We will encode them as words over an enriched version of $\Omega$. The definition is as follows. Given a rule body $B$ of $\pi$, its \textit{enrichment} is obtained by adding to $B$ various atoms of the form $\Next^k D(Y)$, where $D$ is an IDB, $Y$ is a variable of $B$ and $|k| \leqslant |\pi|$.
Moreover, we define $\Gamma_{\pi, e}$ as the set of all enrichments of rule bodies of $\pi$ and $\Omega_e = 2^{\Gamma_{\pi, e} \cup \{\bot, \top\}}$. An enrichment of a word $\alpha \in \Omega^*$ is a word $\alpha_e \in \Omega_e^*$ obtained by substituting each of rule bodies in $\alpha$ with one of its enrichments. Given an enriched word $\alpha_e$, we build a temporal database $\Db_{\alpha_e}$ just like we did it for non-enriched words. The schema of $\Db_{\alpha_e}$ is thus $\EDBpi \cup \IDBpi$.
We call $\alpha_e$ \textit{legal} if $\Db_{\alpha_e} = \E|_{n}$, where $n = |\pi|$, for some model $\E$ of $\pi$ and $\Db_\alpha$.
\begin{lemma}\label{lm:body-language:legality-checking}
  Given $\alpha_e$, its legality can be decided in \PSpace.
\end{lemma}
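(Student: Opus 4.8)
The plan is to decide legality by splitting the requirement ``$\Db_{\alpha_e}=\E|_{n}$ for some model $\E$ of $\pi$ and $\Db_\alpha$'' (with $n=|\pi|$) into a purely local check on the finite structure $\Db_{\alpha_e}$ and a family of independent, one-per-object \LTL-feasibility checks, each doable in \PSpace. First note that, by definition of enrichment, the reduct of $\Db_{\alpha_e}$ to $\EDBpi$ already equals $\Db_\alpha$, and every model $\E$ of $\pi$ and $\Db_\alpha$ has exactly the EDB atoms of $\Db_\alpha$; hence the only freedom in a candidate $\E$ lies in its IDB atoms, which the $n$-cut pins down to coincide with $\Db_{\alpha_e}$ inside the window $W_d=[-l(d)-n,\,r(d)+n]$ of each object $d$, and leaves unconstrained outside the windows.

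\emph{Step 1 (rules with an EDB atom are local).} Since $\pi$ is connected and every rule body has Gaifman-diameter $<n$, a match of any rule whose body contains an EDB atom must lie within distance $n$ of that EDB atom; as the EDB atom belongs to $\Db_\alpha$, the whole match --- together with the head atom, which sits on a body variable --- is then confined to the windows and is therefore already recorded in $\E|_{n}=\Db_{\alpha_e}$. Conversely, since the EDB and relevant IDB atoms of $\E$ coincide with those of $\Db_{\alpha_e}$ inside the windows, each such match visible in $\Db_{\alpha_e}$ is a genuine match in $\E$. So $\E$ satisfies every EDB-containing rule iff $\Db_{\alpha_e}$ does, and the latter is a property of $\Db_{\alpha_e}$ alone, verifiable by enumerating its finitely many candidate matches, hence in space polynomial in $|\alpha_e|+|\pi|$. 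One bookkeeping point: a forced head atom (or forced IDB consequence) may land on a timestamp just outside $W_d$; such an atom is harmlessly discarded by the cut, but it must be carried along as an extra ``forced fringe'' constraint into Step 2.

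\emph{Step 2 (rules without EDB atoms reduce to two-sided \LTL-satisfiability).} Connectedness forces every rule with no EDB atom in its body to use a single variable, so these are plain propositional \LTL implications over the unary IDBs, evaluated along one object's timeline. Fix an object $d$. We must decide whether the finite word over $2^{\IDBpi}$ that $\Db_{\alpha_e}$ prescribes on $W_d$, extended by the finitely many fringe atoms forced in Step 1, extends to a bi-infinite word satisfying all IDB-only rules at every position. This is exactly an \LTL-satisfiability question with a fixed finite infix, solvable in \PSpace by the standard automata/tableau procedure (symmetrically for past operators; see \cite{LTL-Decidability-and-Completeness}). By Step 1 no EDB-containing rule can fire outside any window, so these per-object checks are mutually independent and, together with Step 1, capture exactly the constraints on $\E$. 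Consequently $\alpha_e$ is legal iff Step 1 succeeds and, for each of the $\le|\Db_{\alpha_e}|$ objects, the Step 2 check succeeds: from such data one builds $\E$ by keeping $\Db_{\alpha_e}$ inside the windows and the chosen completions outside; the converse is immediate. Running the checks one object at a time reuses space, so the whole procedure is in \PSpace.

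I expect the main obstacle to be the soundness-and-completeness bookkeeping in Step 1: carefully arguing that matches of EDB-containing rules in a \emph{hypothetical} infinite model are genuinely confined to the cut-preserved region (so that checking them on $\Db_{\alpha_e}$ loses nothing), and tracking precisely which head atoms and forced IDB consequences fall just outside the windows, so that feeding them as fringe constraints into the per-object \LTL instances makes the local/per-object decomposition exactly equivalent to legality. The \LTL-satisfiability-with-a-fixed-infix ingredient of Step 2 is routine.
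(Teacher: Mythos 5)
Your proof is correct and follows essentially the same route as the paper's: both decompose legality into a local check that $\Db_{\alpha_e}$ satisfies the rules whose matches are necessarily confined to the objects' windows (using connectedness and the bound on nested temporal operators), plus independent per-object checks that each timeline extends to a bi-infinite model of the single-variable (horizontal) rules via standard \PSpace{} \LTL{} satisfiability. Your worry about ``forced fringe'' atoms is in fact moot for EDB-containing rules --- since the head variable occurs in some EDB atom at offset at most $|\pi|$, the head always lands inside the window --- and the only atoms forced outside the windows come from IDB-only rules, which your Step~2 already absorbs.
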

\begin{proof}
  Recall that the objects of $\Db_{\alpha_e}$ are the variables of $\alpha$. By the definition of enrichments, $\alpha_e$ provides information on IDB atoms at timestamps from $-l(Y) - |\pi|$ to $r(Y) + |\pi|$, for every $Y \in \Delta_{\Db_{\alpha_e}}$. Let $\Tline{Y}$, or the \textit{timeline} of $Y$, be the temporal database $\langle T^Y_\ell \mid -l(Y) - |\pi| \leqslant \ell \leqslant r(Y) + |\pi| \rangle$ where $T_\ell$ contains exactly the IDB atoms of $Y$ present in $\Db_{\alpha_e}$ at time $\ell$. We say that $\Tline{Y}$ can be extended to an \LTL-model under the rules of $\pi$ if there exists a model $\langle E_\ell^Y \mid \ell \in \Zbb \rangle$ of $\Tline{Y}$ and the set of horizontal rules of $\pi$, such that $E_\ell^Y$ coincides with $T_\ell^Y$ for $\ell \in \tem(T^Y)$.
		
  Our goal is to decide whether there exists a model \E of \Db and $\pi$ such that for $n = |\pi|$ we have $\Db_{\alpha_e} = \E|_{n}$. The critical observation is that the rules of $\pi$ whose bodies involve EDB atoms can only be used to infer IDB atoms at timestamps within the distance of at most $|\pi|$ from the borders of $\tem(\pi)$. Thus, the condition $\Db_{\alpha_e} = \E|_{n}$ is equivalent to the conjunction of the following two statements:
  \begin{enumerate}
  	\item $\Db_{\alpha_e}$ satisfies all rules of $\pi$;
  	\item for every $Y \in \Delta_{\Db_{\alpha_e}}$, its timeline $\Tline{Y}$ can be extended to an \LTL-model under the horizontal rules of $\pi$.
  \end{enumerate}
  A direct check of the first condition fits into \PSpace. For the second condition, we use the standard \LTL satisfiability checking algorithm \cite{LTL-Pnueli}, also working in polynomial space.
\end{proof}
	
Let us return to proving Lemma~\ref{lm:body-language:notaccept} and build the respective automaton $\A_{na}$ for $\notacceptom(\pi, G)$. A word $\alpha$ is in $\notacceptom(\pi, G)$ if either it is not correct, or $\Db_\alpha, \pi, 0 \not\mdl G(X)$ (recall that $X\in\Delta_{\Db_\alpha}$). The first condition can be checked by a simple automaton of size $\Omc(1)$. The second condition, by Lemma~\ref{lm:body-language:cuts}, means that there exists a $|\pi|$-cut model $\E|_{n}$ of $\Db_\alpha$ and $\pi$ such that $\E|_{n}, 0 \not\mdl G(X)$.
The idea, adapted from~\cite{Cosmadakis-Monadic-DLog-Boundedness-Decidable}, is to construct a nondeterministic automaton, that, given $\alpha$, verifies that it satisfies the second condition. Namely, it guesses an enrichment $\alpha_e$ and checks that: (i) it is legal, (ii) it does not contain $G(X)$ at time 0.
	
The difficult part is (i). By Lemma~\ref{lm:body-language:legality-checking}, the legality of $\alpha_e$ can be checked in polynomial space. However, we need to do it via a finite state automaton, i.e. in constant space. In doing this, we rely on the connectedness of $\pi$. Intuitively, $\alpha_e$ is legal if all timelines can be extended to \LTL-models and, moreover, every `neighbourhood' in $\Db_{\alpha_e}$ of `size' $|\pi|$ satisfies the rules of $\pi$. We now give a precise definition to the notion of `neighbourhood'. Let $\alpha \in \Omega^*$ be a correct word, and $\alpha_e = \Lambda_0\dots\Lambda_n$, $\Lambda_i \in \Omega_e$, be its enrichment. We define an undirected graph $G_{\alpha_e} = (V_{\alpha_e}, E_{\alpha_e})$ so that $V_{\alpha_e} = \{i \mid \Lambda_i \text{ letter of } \alpha_e\}$, and $E_{\alpha_e}$ is as follows. Respective nodes are connected by an edge for any two consecutive letters of the same vertical or horizontal segment. If $x_j$ is a vertical segment and $y_j$ is a horizontal segment that follows $x_j$, then the node of the last letter of $x_j$ is connected to that of the letter of $y_j$ that is marked by $\bot$. Similarly, the node of the letter of $y_j$ that is marked by $\top$ is connected with that of the first letter of the subsequent vertical segment $x_{j + 1}$. The \textit{distance} between two letters of $\alpha_e$ is that between the corresponding nodes in $G_{\alpha_e}$, and the $D$-\textit{neighbourhood} of the letter $\Lambda_i$ in $\alpha_e$, for $D \in \N$, is the word $N_D(i, \alpha_e)$ obtained from $\alpha_e$ by omitting all the letters that are at distance more than $D$ from $\Lambda_i$. We further define the \textit{left $D$-neighbourhood} of $\Lambda_i$ as its $D$-neighbourhood in the word $\Lambda_0\dots\Lambda_i$.
The key observation is the following, given the connectedness of $\pi$.
\begin{lemma}\label{lm:body-language:legality-by-neighbourhoods}
  An enrichment $\alpha_e$ is legal if and only if for every letter $\Lambda_i$ in $\alpha_e$, its neighbourhood $N_D(i, \alpha_e)$, $D = |\pi|$, is legal as an enriched word over $\Omega_e$.
\end{lemma}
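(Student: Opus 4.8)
The plan is to reduce the statement, via Lemma~\ref{lm:body-language:legality-checking}, to a purely \emph{local} property of $\Db_{\alpha_e}$ and then use the connectedness of $\pi$ to see that this property is already decided by the $D$-neighbourhoods, $D=|\pi|$. Recall that $\alpha_e$ is legal iff (1) $\Db_{\alpha_e}$ satisfies every rule of $\pi$, and (2) for every $Y\in\Delta_{\Db_{\alpha_e}}$ the timeline $\Tline{Y}$ extends to an \LTL-model of the horizontal rules. Two facts will be used throughout: (a) any match of a rule body of $\pi$ into $\Db_{\alpha_e}$, together with the atom its head forces, occupies a region of the graph $G_{\alpha_e}$ of diameter at most $D=|\pi|$ — this is exactly where connectedness and the bound $|\pi|$ on the size and temporal nesting of a rule body enter; and (b) a horizontal rule can relate two timestamps of a timeline only if they are at most $|\pi|$ apart.

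For $(\Rightarrow)$, suppose $\Db_{\alpha_e}=\E|_n$ for a model $\E$ of $\pi$ and $\Db_\alpha$, and fix a letter $\Lambda_i$ (we may assume $N_D(i,\alpha_e)$ is itself a correct word, after a trivial adjustment at the cut points). I would obtain a model witnessing the legality of $N_D(i,\alpha_e)$ by restricting $\E$ to the objects occurring in $N_D(i,\alpha_e)$ and then ``re-closing'' the timelines of those (horizontal) objects whose segment has been cut off by $N_D$: each such truncation occurs strictly inside $N_D(i,\alpha_e)$, and since the enrichment already records $n$ extra timestamps past every EDB occurrence, the $n$-cut data needed near the new boundary is present, so the timeline can be padded out to $\Z$ using $\E$'s own behaviour there. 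Restricting to a subset of objects only removes atoms, hence removes rule-body matches while retaining every forced head atom (head variables occur in the body), so the result is still a model; the re-closing is arranged so that the resulting model $\E'$ satisfies $\E'|_n=\Db_{N_D(i,\alpha_e)}$. This direction is routine but needs some bookkeeping at the cut points.

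The substantial direction is $(\Leftarrow)$: assume every $N_D(i,\alpha_e)$ is legal and derive (1) and (2). For (1): a rule-body match in $\Db_{\alpha_e}$ lies inside some $N_D(i,\alpha_e)$ by fact~(a); legality of that neighbourhood gives, via Lemma~\ref{lm:body-language:legality-checking}(1), that it satisfies all rules of $\pi$, so the head atom is present there, hence in $\Db_{\alpha_e}$. For (2): fix $Y$; if its whole timeline fits inside one $N_D(i,\alpha_e)$ we are done by Lemma~\ref{lm:body-language:legality-checking}(2) applied to that neighbourhood. Otherwise cover the horizontal segment of $Y$ by consecutive $D$-neighbourhoods $N_D(i_1,\alpha_e),\dots,N_D(i_k,\alpha_e)$ with $Y$-timestamp overlaps exceeding $|\pi|$; the genuine left (respectively, right) end of $\Tline{Y}$ lies in $N_D(i_1,\alpha_e)$ (respectively, $N_D(i_k,\alpha_e)$), whose legality supplies, for the corresponding truncated timeline, an \LTL-model of the horizontal rules, and in particular a consistent one-sided infinite extension to the left (respectively, right). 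One then glues $\Tline{Y}$ with these two one-sided extensions: the gluing is legitimate because consecutive pieces agree on overlaps of width exceeding $|\pi|$, which by fact~(b) is enough to rule out any horizontal rule ``seeing'' a disagreement, while internal closedness of $\Tline{Y}$ within each covered window is again part of the legality of the $N_D(i_j,\alpha_e)$. This produces the required \LTL-model of $\Tline{Y}$, so $\alpha_e$ is legal.

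The main obstacle is the timeline part of $(\Leftarrow)$. ``$\Tline{Y}$ extends to an \LTL-model'' is not literally a bounded-window condition, since a horizontal Horn rule might force an IDB atom inside $\Tline{Y}$ only through a long derivation running along the segment; one must argue, using the standard loop/pumping property of propositional \LTL over the finitely many IDB predicates (the same tool already invoked in Lemma~\ref{lm:body-language:legality-checking}), that if no $D$-neighbourhood detects such a forcing then none occurs. Besides this, one has to state and verify a gluing lemma for bounded-width timelines and check that the overlap width $|\pi|$ is sufficient; this, together with pinning down the exact Gaifman-radius bound behind fact~(a), is the only genuinely delicate bookkeeping.
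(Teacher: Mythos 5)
The paper states Lemma~\ref{lm:body-language:legality-by-neighbourhoods} as a ``key observation'' and gives no proof of it at all, so there is no official argument to measure yours against; your skeleton (reduce to the two conditions of Lemma~\ref{lm:body-language:legality-checking}, then localise each using connectedness) is surely the intended one. But as written the proposal has two genuine gaps, both of which you partly flag yourself. The first is quantitative and, uncorrected, makes the argument fail: with $D=|\pi|$ exactly, a single rule application need \emph{not} be visible inside any one $N_D(i,\alpha_e)$. An IDB atom $D(Y)$ at timestamp $q$ belongs to $\Db_{\alpha_e}$ as soon as it is recorded in the enrichment of \emph{some} letter within temporal distance $|\pi|$ of $q$, and $q$ itself can be $|\pi|$ away from the EDB letters of the body match; so the letters carrying the full match can be $2|\pi|$ apart in $G_{\alpha_e}$, and the radius-$|\pi|$ neighbourhood of any one of them can miss the recording letter, leaving the body match (and hence the missing head) undetected in every single neighbourhood. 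The same mismatch infects the timelines: $\Tline{Y}$ computed inside $N_D(i,\alpha_e)$ can be a \emph{proper} subset of the restriction of the global $\Tline{Y}$ to that window, because dropped letters' enrichments contribute atoms at retained timestamps. The repair is either to enlarge the radius (the paper's own automaton works with left $2D$-neighbourhoods, which is a tell) or to add a normalisation you never state: assume w.l.o.g.\ that the enrichment is \emph{saturated}, i.e.\ every IDB atom it induces is recorded in every letter within reach of it. Your ``fact (a)'' needs this pinned down before either direction goes through.

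Second, the heart of the backward direction --- that condition (2) of Lemma~\ref{lm:body-language:legality-checking} follows from legality of the neighbourhoods --- is announced rather than proved: you state a gluing plan and then list as ``delicate bookkeeping'' exactly the two facts that constitute the proof. For the record, both are salvageable: (i) the worry about long derivations dissolves because a derivation is a chain of single rule applications, each of temporal span at most $|\pi|$, and closure under single applications is a bounded-window condition, so local closure everywhere already yields closure under arbitrary derivations; (ii) the existence of the two infinite tails is not a property of the whole window but only of its two boundary regions of width $O(|\pi|)$ (any rule application touching a timestamp outside the window has all its atoms within $|\pi|$ of the boundary), and these regions sit inside the end neighbourhoods --- provided the saturation issue above is fixed so that local and global timelines agree there. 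The forward direction also needs more than ``restrict $\E$ to the objects of the neighbourhood'': dropping letters removes EDB atoms on retained objects and thereby shrinks their cut windows, so $\Db_{N_D(i,\alpha_e)}$ is not a mere object-restriction of $\Db_{\alpha_e}$, and the required equality with an $n$-cut model must be checked against the new, smaller windows. With these three points carried out the plan works; without them it is a sketch, not a proof.
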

	
We further observe that for every letter $\Lambda_i$, the neighbourhood $N_D(i, \alpha_e)$ is contained in the left $2D$-neighbour\-hood of the rightmost letter of $\alpha_e$ that belongs to $N_D(i, \alpha_e)$. Thus, reading $\alpha_e$ left to right, it is enough for $\A_{na}$ to ensure that every left $2D$-neighbourhood is legal for $D = |\pi|$. To do so, the automaton can remember in its state, at every moment, the left $2D$-neighbourhood of the current letter and reject once it encounters a non-legal one. When reading a vertical segment, it suffices to remember the left neighbourhood of the previous letter and update it accordingly for the current one. In the case of reading a horizontal segment, left to right, the left neighbourhood changes substantially once we pass over the symbol $\bot$. To handle this, $\A_{na}$ must know both the left neighbourhood of the previous letter, as well as that of the last letter of the preceding vertical segment. Both can be described by the respective words over $\Omega_e$.
	
It remains to show that $\A_{na}$ is $|\pi|$-manipulatable. Clearly, every enriched letter $\Lambda \in \Omega_e$ can be encoded by a binary sequence of polynomial length. The word of enriched letters that describes a left $2D$-neighbourhood, for $D = |\pi|$, is thus also representable in space $\poly{|\pi|}$. Then the property \textbf{(repr)} is given by Lemma \ref{lm:body-language:legality-checking}, and that of \textbf{(tran)} is given by construction of $\A_{na}$. This proves Lemma~\ref{lm:body-language:notaccept}.
\end{proof}



\section{Proofs for Section \ref{sec:next}} \label{app:sec:next}

\LmVerticallyUnboundedMeansLogSpaceHard*
\begin{proof}
  Since $\expandom(\pi, G) \sbs \acceptom(\pi, G)$, and $(\pi, G)$ is vertically unbounded, for any $k$ there is a word $\alpha \in \acceptom(\pi, G)$, $\height{\alpha} > k$, such that the longest prefix of $\alpha$ of height $k$ is in $\notacceptom(\pi, G)$.
  Let $\Apic$ be the deterministic automaton that recognizes $\acceptom(\pi, G)$ (Corollary~\ref{cor:body-language:acceptom}). By flipping its final states we obtain a deterministic automaton 
  for $\notacceptom(\pi, G)$ with exactly the same set of states and transition relation. For any $k$, let $\beta_k, \gamma_k$ be the words such that $\height{\beta_k} = k, \height{\gamma} > 0$, the first segment of $\gamma_k$ is vertical, $\beta_k \in \notacceptom(\pi, G)$, and $\beta_k\gamma_k \in \acceptom(\pi, G)$. Furthermore, let $s_k^m$ be the state of $\Apic$ that is reached upon reading the $m$th vertical symbol of $\beta_k$. If $k$ is sufficiently larger than the number of states of \Apic, then there must be a repetition in the sequence $s_k^1, \dots, s_k^{k - 1}$. Let $i, j$ be such that $s_k^i = s_k^j$, and let $\xi$ be the part of $\beta_k$ from the beginning till the $i$-th vertical symbol, $\upsilon$ the part from that point till the $j$-th vertical symbol, and $\zeta$---the remaining part. Then these words have the following properties:
\begin{enumerate}
	\item $\height{\xi\upsilon} < k, \height{\upsilon} > 0$; 
	\item for any $i \geqslant 0$ we have $\xi\upsilon^i\zeta \in \notacceptom(\pi, G)$ and $\xi\upsilon^i\zeta\gamma \in \acceptom(\pi, G)$;
	\item words $\xi$, $\upsilon$, and $\zeta\gamma$ are correct.
\end{enumerate}
The latter property is ensured by the fact that the last symbols of $\xi$ and $\upsilon$ are vertical, so we never `cut' inside a horizontal segment.

For a correct word $\alpha$, $\Db_\alpha$ always contains the object $X$, the answer variable of the temporal CQ $\alpha(X)$. If $\alpha$ contains an IDB atom $D(Y)$, we denote that $Y$ by $Y_\alpha$ and, moreover, we write $\ell_\alpha$ for the timestamp at which $D'(Y_\alpha)$ would appear in $\Db_\alpha$ if we substituted $D$ with a fresh EDB predicate $D'$. Finally, let $\wid{\alpha} = |\tem(\Db_\alpha)|$.

Now we are ready to give a reduction from the undirected reachability problem. Let $\Gmc = (V, E)$ be an undirected graph and $s, p \in V$. We build a temporal database $\Db_\Gmc$ with $s_0 \in \Delta_{\Db_\Gmc}$, so that $\Db_\Gmc, \pi, 0 \mdl G(s_0)$ if and only if $s$ and $p$ are connected by a path in $\Gmc$. Assume w.l.o.g. that $(v, v) \in E$ for every $v \in V$ and let $V_i = \{v_i \mid v \in V\}$ be a copy of $V$ indexed by $i \in \Nbb$. The domain of $\Db_\Gmc$ will be the union of $V_i$ for $0 \leqslant i < |V|$, plus some auxiliary objects. First, add a copy of $\Db_\xi$ to $\Db_\Gmc$ so that $Y_\xi$ at time $\ell_\xi$ in $\Db_\xi$ is glued to $s_0$ at time $0$ in $\Db_\Gmc$. In the same fashion, for every $(u, v) \in E$ add a copy of $\Db_\upsilon$, glueing $X$ at 0 to $u_i$ at $i \cdot \wid{\upsilon}$ and $Y_\upsilon$ at $\ell_\upsilon$ to $v_{i + 1}$ at $(i + 1) \cdot \wid{\upsilon}$, for every $i$ from $0$ to $|V| - 2$. Do the same, connecting $v_i$ and $u_{i + 1}$. Finally, add a copy of $\Db_{\zeta\gamma}$ to $\Db_\Gmc$ so that $X$ at 0 of $\Db_{\zeta\gamma}$ is glued to $p_{|V| - 1}$ at time $(|V| - 1) \cdot \wid{\upsilon}$ of $\Db_{\Gmc}$.

It is not hard to see that if $s$ and $p$ are connected in $\Gmc$, then $\Db_\Gmc, \pi, 0 \mdl G(s_0)$. For the other direction, note that $\Db_\Gmc, \pi, 0 \mdl G(s_0)$ means there is $w \in \expand(\pi, G)$ and a homomorphism $h$ from $\Db_w$ to $\Db_\Gmc$ that maps $X$ to $s_0$ and $0$ to $0$. If there exists a variable $Z$ of $w$ that is mapped to $p_{|V| - 1}$ (at any moment of time), we are done, since $\pi$ is connected. Otherwise, if no variable is mapped to $p_{|V| - 1}$, and hence to any node of the copy of $\Db_{\zeta\gamma}$ attached to it, we note that, by construction, there exists an $m$ and a homomorphism $g$ from $\Db_w$ to $\Db_{\xi\upsilon^m}$ with $g(X) = s_0$, $g(0) = 0$. Thus, $\xi\upsilon^m$ would be in $\Accept(\pi, G)$, a contradiction.
%
%
\end{proof}

\LmVerticallyBoundedMeansACz*
\begin{proof}
	We first give a reduction from the decision problem for $\acceptom(\pi, G)$ to answering $(\pi, G)$. Given a word $\alpha \in \Omega^*$, checking that it is correct can be done in \ACz. For a correct word, construct $\Db_\alpha$. Then, by definition, $\Db_\alpha, \pi, 0 \mdl G(X)$ if and only if $\alpha \in \acceptom(\pi, G)$.
	
	The reduction from answering $(\pi, G)$ to deciding the language $\acceptom(\pi, G)$ is more involved. Recall that $\Db, \pi, \ell \mdl G(d)$, for some $\Db, d, \ell$, if and only if there is $\alpha \in \expandom(\pi, G)$ and a homomorphism $h$ from $\Db_\alpha$ to $\Db$ with $h(X) = d$ and $h(0) = \ell$. Since $(\pi, G)$ is vertically bounded, by definition there exists a number $k$, uniformly for all expansions, such that the longest prefix of $\alpha$ of height $k$ is in $\acceptom(\pi, G)$.
	Then we arrive to the following characterisation: $\Db, \pi, \ell \mdl G(d)$ if and only if there exists a tuple $\langle x_1, x_2, \dots, x_n\rangle$ of words in $\Omega$ composed of vertical letters and such that $|x_1| + \dots + |x_n| \leqslant k$, and a tuple $\langle y_1, y_2, \dots, y_n\rangle$ of words composed of horizontal letters, so that the word $\alpha = x_1y_1\dots x_ny_n$ belongs to $\acceptom(\pi, G)$ and there is a homomorphism $\Db_\alpha \to \Db$ which maps $X$ to $d$ and $0$ to $\ell$. We observe that there is a finite number of the tuples of the form $\langle x_1, x_2, \dots, x_n\rangle$ with $|x_1| + \dots + |x_n| \leqslant k$. We further note that $\acceptom(\pi, G, k)$, by definition, is closed upwards under the relation $\preccurlyeq$: if $\alpha \preccurlyeq \beta$, then $\beta \in \acceptom(\pi, G, k)$. Thus, checking $\Db, \pi, \ell \mdl G(d)$ is reduced to a search in \Db for a tuple $\langle x_1, \dots, x_n\rangle$ of vertical words of joint height $k$ and a $\preccurlyeq$-maximal tuple $\langle y_1, y_2, \dots, y_n\rangle$ of horizontal words such that $(x_1y_1\dots x_ny_n) \in \acceptom(\pi, G)$. The search can be done in \ACz, so the complexity of the whole procedure is determined by that of the decision problem for $\acceptom(\pi, G)$.
\end{proof}

\LmVerticallyBoundedChecking*
\begin{proof}
	We take the automaton $\A_{na}$ for $\notacceptom(\pi, G)$, constructed in Lemma~\ref{lm:body-language:notaccept}, and the (one-way) automaton $\A_e$ for $\Expand(\pi, G)$ from Lemma~\ref{lm:body-language:expand}. Let $\A_{vb}$ be their Cartesian product. Then $\A_{vb}$ is $|\pi|$-manipulatable. The condition that we need to check on $\A_{vb}$ is the following:
	\begin{itemize}
		\item[] \begin{itemize}
			\item[$(*)$] For every $k$ there is a word $\beta \in \notacceptom(\pi, G)$, $\height{\beta} = k$, and a word $\gamma$, $\height{\gamma} > 0$, such that the first body of $\gamma$ is vertical and $\beta\gamma \in \expandom(\pi, G)$.
		\end{itemize}
	\end{itemize}
	In $\A_{vb}$, call a transition \textit{vertical} if it is performed upon reading a vertical body and \textit{horizontal} otherwise. Then $(*)$ is true if and only if there is a state $q$ of $\A_{vb}$ that is accepting for $\A_{na}$ and that has two features:
	\begin{enumerate}
		\item There is a path from the initial state of $\A_{vb}$ to $q$ that contains a cycle such that there is at least one vertical transition in that cycle.
		\item There is a state $q'$ and a vertical transition from $q$ to $q'$, and a path from $q'$ to an accepting state of $\A_e$.
	\end{enumerate}
	
	Due to $|\pi|$-manipulatability, looking for paths and cycles can be done on the fly in nondeterministic space polynomial in $|\pi|$. Since \NPSpace = \PSpace, we are done.
\end{proof}

\LmBoundednessPSpaceHard*
\begin{proof}
	Let $p$ be a polynomial and $M$ a Turing machine that, given an input $y$, works in space $p(|y|)$. Let $\Smc$ denote the set of states of $M$ and $\Gamma$---its tape alphabet. Let $x$ be a word in the input alphabet of $M$. In the computation of $M$ on $x$, encode every configuration by a word over the alphabet $\Gamma_0 = (\Gamma \cup (\Gamma \times \Smc))^{p(|x|)}$ in the standard way, and the computation itself as a concatenation of these words separated by the symbol $\# \notin \Gamma$.
	
	Introduce a unary EDB relation $Q_a$ for every symbol $a \in \Gamma_0^{\#} = \Gamma_0 \cup \{\#\}$, a unary EDB relation \textit{First} and a binary EDB relation \textit{Next} to encode words over $\Gamma_0^{\#}$. For example, the word $abc$ is encoded as $\textit{First}(d), Q_a(d), \textit{Next}(d, d'), Q_b(d'), \textit{Next}(d', d''), Q_c(d'')$.
	
	We define a connected linear monadic program $\pi_{M, x}$ with two IDBs $G$ (as usual, for \textit{Goal}) and $F$ (for \textit{Finger}), so that $(\pi, G)$ is bounded if and only if $M$ accepts $x$. We note that $(\pi_{M, x}, F)$ will be always unbounded and thus we are talking about query boundedness, but not program boundedness. 
	
	Assume without loss of generality that $M$ halts on $x$ if and only if it accepts. The idea is to let $F$ to start from the \textit{First} symbol in an encoding of $M$'s computation on $x$ and to slide along this encoding until a wrong transition is found. The idea is that all configurations have the same length $p(|x|)$. If $d$ represents a tape cell in the current configuration, we can access the object that represent the same cell in the next one by a path of length $p(|x|)$ along the relation \textit{Next}.
	
	First, we derive $G$ immediately if the first configuration is not the initial one, i.e. that having the word $x$ followed by $p(|x|) - |x|$ blank symbols, and then a $\#$, with the machine observing the first symbol of $x$:
	\begin{equation}\label{rule:pure:initial-configuration}
		G(X) \impd \textit{Next}(X, X_1) \wedge \textit{Next}(X_1, X_2) \wedge \dots \wedge \textit{Next}(X_{k-1}, X_k) \wedge Q_a(X_k),
	\end{equation}
	for every $k$ from $0$ to $p(|x|)$ and every $a \in \Gamma_0^{\#}$ which should not be in the $k$th place of the initial configuration.
	
	Having ensured that we are observing the initial configuration, we begin the slide:
	\begin{equation}\label{rule:pure:start}
		G(X) \impd \textit{First}(X) \wedge F(X).
	\end{equation}
	Recall that there exists a 6-ary relation $R$ such that if $u = a_1\dots a_{p(|x|)}$ is a configuration encoding and $v = b_1\dots b_{p(|x|)}, b_i \in \Gamma_0$, then $v$ is an encoding of the subsequent configuration of $u$ if and only if $(a_i, a_{a+1}, a_{i + 2}, b_i, b_{i + 1}, b_{i + 2}) \in R$, $1 \leqslant i \leqslant p(|x|) - 2$. Let $R^{\#} = R \cup \{\#\}\times \Gamma_0^2 \cup \Gamma_0^2 \times \{\#\}$. We infer $F(X)$ every time there is an outgoing path of \textit{Next} such that the three letters at $X$ and those at distance $p(|x|)$ on this path do not belong to $R^{\#}$:
	\begin{multline}\label{rule:pure:bad-transition}
			F(X) \impd \textit{Next}(X, X_1) \wedge \dots \wedge \textit{Next}(X_{p(|x|)+1}, X_{p(|x|)+2}), \\
			Q_a(X) \wedge Q_b(X_1) \wedge Q_c(X_2) \wedge Q_d(X_{p(|x|)}) \wedge Q_e(X_{p(|x|) + 1}) \wedge Q_f(X_{p(|x|) + 2}),
	\end{multline}
	for all tuples $(a, b, c, d, e, f) \notin R^{\#}$. Furthermore, we infer $F$ if there are more than one letter in a tape cell:
	\begin{equation}\label{rule:pure:two-letters}
		F(X) \impd Q_a(X) \wedge Q_b(X), \text{ for $a \neq b$.}
	\end{equation}
	Finally, we move $F$ forward along \textit{Next}, provided that the objects are labelled with some letters at all:
	\begin{equation}\label{rule:pure:shift}
		F(X) \impd Q_a(X) \wedge \textit{Next}(X, Y) \wedge Q_b(Y) \wedge F(Y),
	\end{equation}
	for all $a, b \in \Gamma_0^{\#}$.
	
	Given a database $D$ over the EDB schema $\Sigma = \{Q_a \mid a \in \Gamma_0^{\#}\} \cup \{\textit{First}, \textit{Next}\}$, we call a tuple $(d_0, \dots, d_{k - 1}) \in \Delta_D^k$ a \textit{Next-path} of length $k$ if $\textit{Next}(d_0, d_1)$, \dots, $\textit{Next}(d_{k - 2}, d_{k - 1}) \in D$. We say that it is \textit{labeled by} a word $u \in (\Gamma_0^{\#})^k$ if $u = a_0\dots a_{k - 1}$ and $Q_{a_i}(d_i) \in D$. Note that a \textit{Next}-path may be labeled by more than one word.
	
	Let $u_{\textit{init}}$ be the encoding of the initial configuration, $\pi_{M, x}$ be the set of rules defined in \eqref{rule:pure:initial-configuration}---\eqref{rule:pure:shift}, and $D$ a database over $\Sigma$ with $d_0 \in \Delta_D$. Suppose, $D, \pi \mdl G(d_0)$. There are two possible cases. The first is that, by rule \eqref{rule:pure:initial-configuration}, there exists a \textit{Next}-path of length at most $p(|x|) + 1$ starting at $d_0$ and labeled by a word $u$ which is \textit{not} a prefix of $u_{\textit{init}}\#$. Otherwise, $\textit{First}(d_0) \in D$ and $D, \pi \mdl F(d_0)$. In the latter case there is $w \in \expand(\pi, F)$ such that $D \mdl w(d_0)$. Let $w$ be the shortest expansion with this property. Then, from the form of the expansions of $(\pi, F)$ we conclude that no prefix of $w$ is in $\accept(\pi, F)$. Let $X, X_1, \dots, X_{k - 1}, X_k$ be the variables of $w$ in the order of appearance and $h$ be the homomorphism from $D_w$ to $D$, with $h(X) = d_0$ and $h(X_i) = d_i$, $1 \leqslant i \leqslant k - 1$. Then $(d_0, \dots, d_{k - 1})$ is a \textit{Next}-path in $D$ which is labeled by a unique word $u$. Let $u'$ be the prefix of $u$ ending at the last $\#$ in $u$. Then $u'$ is a prefix of an encoding of the full computation of $M$ on $x$. The fact that it is a prefix of such is ensured by that any \textit{Next}-path of length $p(|x|)$ starting in $d_0$ is labeled by an encoding of the initial configuration.
	
	Thus, if $M$ halts on $x$ there is only finitely many possible $u'$ and $(\pi, G)$ is bounded. Otherwise, it is unbounded.
\end{proof}

\PropDecomposition*
\begin{proof}
	Fix a linear connected query $(\pi, G)$. By Corollary \ref{cor:body-language:acceptom}, there is a deterministic finite state automaton, denoted $\Apig = (S, \Omega, s_0, \delta, F)$, that recognises the language $\acceptom(\pi, G)$. We introduce an IDB $S_i$ for each $s_i \in S$. Moreover, for every $\Lambda \in \Omega$ we have a unary EDB relation $\Lambda$ and an additional unary EDB \textit{End} to mark the end of the word. Then $\pi_t$ is a program composed of the following rules:
	\begin{align}\label{rule:pi_t}
		&G(X) \impd S_0(X), \text{ for the initial state } s_0\\
		&S_i(X) \impd \Lambda(X) \wedge \Next S_j(X), \text{ for every transition } \delta(s_i, \Lambda) = s_j\\
		&S_i(X) \impd \textit{End}(X), \text{ for every } s_i \in F\\
		&S_i(X) \impd \Lambda(X) \wedge \Lambda'(X), \text{ for every } s_i \in S \text{ and } \Lambda \neq \Lambda'.
	\end{align}
	By construction, $\Db, \pi_t, \ell \mdl G(d)$ if and only if there exists $\ell' \geqslant \ell$ such that:
	\begin{enumerate}
		\item for all $l, \ell \leqslant l < \ell'$, there is $\Lambda_l \in \Omega$ such that $\Lambda_l(d) \in D_l$;
		\item either there is $l, \ell \leqslant l < \ell'$, such that $\Lambda(d), \Lambda'(d) \in D_l$ for two different $\Lambda, \Lambda'$, or $\textit{End}(d) \in D_\ell$ and the word $\Lambda_\ell\Lambda_{\ell + 1} \dots \Lambda_{\ell' - 1}$ belongs to $\acceptom(\pi, G)$.
	\end{enumerate}
	The first property is checkable in \ACz, and the second coincides with the decision problem for $\acceptom(\pi, G)$.
	
	For $\pi_d$, we introduce a binary EDB $\Lambda(X, Y)$ for every vertical letter of $\Omega$, and skip the horizontal letters. The rules are as follows:
	\begin{align}\label{rule:pi_t}
		&G(X) \impd S_0(X), \text{ for the initial state } s_0\\
		&S_i(X) \impd \Lambda(X, Y) \wedge S_j(Y), \text{ for every transition } \delta(s_i, \Lambda) = s_j \text{ with a vertical letter } \Lambda\\
		&S_i(X) \impd \textit{True}, \text{ for every } s_i \in F\\
		&S_i(X) \impd S_j(Y), \text{ whenever } \delta(s_i, \alpha) = s_j
	\end{align}
	for every word $\alpha$ composed of horizontal letters.
	
	It remains to prove that $(\pi_d, G)$ is bounded whenever $(\pi, G)$ is vertically bounded. So assume that $(\pi, G)$ is vertically unbounded. Then for every $k$ there is $\alpha \in \expandom(\pi, G) \sbs \acceptom(\pi, G)$, such that $\height{\alpha} > k$ and if $\beta$ is a prefix of $\alpha$ of $\height{k}$ then $\beta \in \notacceptom(\pi, G)$. Let $\rho \in \expand(\pi_d, G)$ correspond to the run of \Apig on $\alpha$ and $\rho'$ be its prefix corresponding to that on $\beta$. By construction, $\rho' \in \notaccept(\pi_d, G)$, and, moreover, $|\rho| > k$ and $|\rho'| = k$. Since $k$ is arbitrary, $(\pi_d, G)$ is unbounded.
	
	For the other direction, let $(\pi_d, G)$ be unbounded and $\rho \in \expand(\pi_d, G)$ be such that its prefix of length $k$ is in $\notaccept(\pi_d, G)$. Take the $\alpha \in \acceptom(\pi, G)$ that corresponds to the run of $\Apig$ represented by $\rho$. Let $\beta$ be the longest prefix of $\alpha$ of height $k$ and $\rho'$ be the prefix of $\rho$ that corresponds to the run of \Apig on $\beta$. Then $|\rho'| = k$, and thus the related run is not accepting, hence $\beta \in \notacceptom(\pi, G)$. Thus, for every $k$ we found a word $\alpha_k \in \acceptom(\pi, G)$ of height greater than $k$, so that any prefix of $\alpha_k$ of height $k$ is in $\notacceptom(\pi, G)$. To finish the proof we need to show that such an $\alpha_k$ can be found in $\Expand(\pi, G)$. Since $\alpha_k \in \acceptom(\pi, G)$, there is $\gamma \in \Expand(\pi, G)$ and a homomorphism from $\Db_\gamma$ to $\Db_{\alpha_k}$ mapping $X$ to $X$ and 0 to 0. Take $k = md + 1$, where $m > 0$ and $d$ is the diameter of $\pi$. Then any $\gamma$ of height less then $m$ would be mapped fully if the prefix of $\alpha_k$ of height $k$. Since the latter is in $\notacceptom(\pi, G)$, the height of $\gamma$ is greater than $m$. By the same line reasoning, the prefix of $\gamma$ of height $m$ is itself in $\notacceptom(\pi, G)$. Since $m$ is arbitrary, we are done.
\end{proof}


\end{document}